\documentclass[10pt]{article} 

\usepackage[preprint]{tmlr}

\usepackage{amsmath,amsfonts,bm}

\def\eqref#1{equation~\ref{#1}}

\def\1{\bm{1}}

\DeclareMathAlphabet{\mathsfit}{\encodingdefault}{\sfdefault}{m}{sl}
\SetMathAlphabet{\mathsfit}{bold}{\encodingdefault}{\sfdefault}{bx}{n}

\usepackage{multirow}
\usepackage[table,xcdraw]{xcolor}
\usepackage{hyperref}
\usepackage{url}
\usepackage{amsthm} 
\usepackage{hyperref}
\usepackage{booktabs} 
\usepackage{url}
\usepackage{verbatim}
\usepackage{amssymb}
\usepackage[most]{tcolorbox}
\usepackage{graphicx}
\usepackage{subcaption}
\usepackage{wrapfig}
\usepackage{soul}

\theoremstyle{plain}
\newtheorem{theorem}{Theorem}[section]

\newtheorem{lemma}[theorem]{Lemma}

\newtheorem{observation}{Observation}
\newtheorem{question}{Question}
\theoremstyle{definition}
\newtheorem{definition}[theorem]{Definition}

\theoremstyle{remark}

\title{Mitigating the Likelihood Paradox in Flow-based OOD Detection via Entropy Manipulation}

\author{\name Donghwan Kim \email cain0709@yonsei.ac.kr \\
      \addr Department of Industrial Engineering\\
      University of Yonsei
      \AND
      \name Hyunsoo Yoon \email hs.yoon@yonsei.ac.kr\\
      \addr Department of Industrial Engineering\\
      University of Yonsei}

\def\month{MM}  
\def\year{YYYY} 
\def\openreview{\url{https://openreview.net/forum?id=XXXX}} 

\begin{document}

\maketitle

\begin{abstract}
Deep generative models that can tractably compute input likelihoods, including normalizing flows, often assign unexpectedly high likelihoods to out-of-distribution (OOD) inputs. We mitigate this likelihood paradox by manipulating input entropy based on semantic similarity, applying stronger perturbations to inputs that are less similar to an in-distribution memory bank. We provide a theoretical analysis showing that entropy control increases the expected log-likelihood gap between in-distribution and OOD samples in favor of the in-distribution, and we explain why the procedure works without any additional training of the density model. We then evaluate our method against likelihood-based OOD detectors on standard benchmarks and find consistent AUROC improvements over baselines, supporting our explanation.
\end{abstract}

\section{Introduction}
\label{sec:intro}
Out-of-distribution (OOD) detection is important in fields such as manufacturing systems and medicine \citep{mezher2024computer, narayanaswamy2023exploring}. A supervised, discriminative approach is possible, but it is often impractical because it assumes access to OOD data during training and sufficient coverage of diverse OOD types \citep{havtorn2021hierarchical}. To avoid this requirement, unsupervised methods based on deep generative models have been studied for OOD detection \citep{yu2021fastflow, ryu2018out, ran2022detecting}. Three representative deep generative model families for this task are the variational autoencoder (VAE, \citet{kingma2014vae}), the generative adversarial network (GAN, \citet{goodfellow2014generative}), and normalizing flows \citep{dinh2015nice}. Among these models, normalizing flows are attractive because they provide tractable likelihood estimates, whereas VAEs typically provide only a lower bound on the likelihood and GANs do not compute tractable likelihoods \citep{nalisnick2018do}. Since flows yield tractable likelihoods, one can score test inputs by their log-likelihood; under the usual assumption that in-distribution (ID) samples have higher likelihood, OOD samples should have lower log-likelihoods than ID samples.

However, several studies have reported \textbf{counterintuitive behavior in OOD detection} when likelihoods from normalizing flows are used as the detection statistic \citep{nalisnick2018do}. For example, a model trained on CIFAR-10 \citep{krizhevsky2009learning} may assign higher likelihood to SVHN \citep{netzer2011reading} as an OOD dataset than to CIFAR-10 itself \citep{nalisnick2018do}.  To better understand this failure of likelihood-based OOD detection from a theoretical perspective, \citet{caterini2022entropic} decomposed the expected log-likelihood difference for ID $P$, OOD $Q$, and a model $P_\theta$ as\footnote{Without loss of generality, we assume that $D_{KL}(Q||P_\theta), D_{KL}(P||P_\theta)$ exist.}:

\begin{align}
\label{eq:entropy}
\begin{split}
    &\mathbb{E}_{\mathbf{x}\sim P}[\log P_{\theta}(\mathbf{x})] - \mathbb{E}_{\mathbf{x}\sim Q}[\log P_{\theta}(\mathbf{x})] \\
    &= D_{KL}(Q||P_{\theta}) - D_{KL}(P||P_{\theta}) + \mathbb{H}(Q) -\mathbb{H}(P) \\
\end{split}
\end{align}

When $\mathbb{H}(P)>\mathbb{H}(Q)$, this difference can flip sign even if $P_\theta$ fits $P$ well (i.e., $D_{\mathrm{KL}}(P\|P_{\theta})$ is small), which explains failures of raw likelihood as an OOD detection statistic. Motivated by the role of entropy in this decomposition, we manipulate test-time entropy so that likelihood-based OOD scores better reflect semantic typicality. Specifically, we use a pretrained feature extractor to encode semantic information. Our method perturbs inputs with Gaussian noise, where the noise scale depends on the maximum cosine similarity between the test embedding and a memory bank of ID embeddings. Notably, the procedure is post hoc and requires no additional training of the density model and similarity is used only as a control signal to manipulate entropy, while the final OOD score remains the model likelihood. We show both theoretically and empirically that semantically controlled entropy-increasing perturbations increase the expected log-likelihood gap. Furthermore, we demonstrate that the proposed approach consistently outperforms other likelihood-based methods, thereby providing a mechanistic explanation for the effectiveness of our framework. The main contributions of our work are as follows:
\begin{itemize}
    \item Under Gaussian perturbations, we derive entropy- and KL-divergence--based lower bounds on the difference between the expected log-likelihoods of ID and OOD samples (Theorems \ref{theorem:manipulate} and \ref{theorem:manipulate_expand}), which \textbf{explain how entropy manipulation can increase this gap} through controlled entropy increases.
    \item We propose a training-free algorithm, \textbf{Semantic Proportional Entropy Manipulation (SPEM)}, which scales Gaussian perturbations based on semantic similarity using a pretrained encoder and an ID memory bank, and performs OOD detection with the original flow likelihood. We further study a noise-only variant, SPEM-noise, and conduct experiments and analyses to explain the differences between SPEM and SPEM-noise.
    \item We conduct \textbf{extensive evaluations} across ten ID/OOD dataset pairs, including classical failure cases, and observe almost consistent gains over likelihood-based baselines and recent multi-statistic methods, together with ablations on encoder choice, ReAct, and perturbation scaling.
\end{itemize}

\section{Related Works}
\label{sec:related_works}
\subsection{Normalizing Flow}
Normalizing flows are a class of generative models, where input data $\mathbf{x} \in \mathbb{R}^d$ following distribution $P$ is transformed into $\mathbf{z} \in \mathbb{R}^d$, which typically follows a standard Gaussian distribution $\mathcal{N}(0,I_d)$ \citep{dinh2017density}, by using an invertible mapping $f:\mathbb{R}^d \rightarrow \mathbb{R}^d$ in order to estimate $P$. This generative model has the advantage that, through the change-of-variable formula, it can provide a tractable estimate of the likelihood without explicit knowledge of the true underlying distribution, which can be formally expressed as:
\begin{align}
\label{eq:nf}
\begin{split}
&\log p_\mathbf{x}(\mathbf{x}) = \log p_{\mathcal{N}(0,I_d)}(\mathbf{z})+\log\left|\det\frac{\partial \mathbf{z}}{\partial \mathbf{x}}\right|.
\end{split}
\end{align}

In a normalizing flow model, data generation proceeds by first sampling a latent vector $\mathbf{z}$ from the base distribution $\mathcal{N}(0, I_d)$. Then, the sampled latent vector is mapped into the data space through the inverse of the learned flow transformation $f^{-1}$. When designing a flow architecture, it is essential that the transformation function $f$ is invertible, and that the Jacobian determinant be efficiently computable. To satisfy these requirements, one approach is to construct transformations that are both easily invertible and have tractable Jacobian determinants \citep{rezende2015variational}. Another widely used strategy is to employ coupling layers, in which the Jacobian takes a triangular form, thereby enabling efficient determinant computation \citep{dinh2015nice,dinh2017density,kingma2018glow}. In addition, \citet{behrmann2019invertible} proposed i-ResNet, constructing flows by enforcing Lipschitz constraints on residual networks to ensure invertibility. Building on this, \citet{chen2019residual} addressed the issue of biased log-density estimation inherent in i-ResNet's approach while improving memory efficiency. In addition to these methodologies, a broad range of research efforts has investigated alternative designs to enhance the flexibility and expressiveness of normalizing flow. 
For instance, spline-based transformations have been proposed to provide more powerful and adaptive mapping functions within normalizing flows, thereby improving density estimation performance \citep{durkan2019neural}. Additionally, normalizing flows with stochastic sampling blocks relax topological constraints and achieve higher expressivity than deterministic flows \citep{wu2020stochastic}. 

\subsection{Likelihood Paradox of Density Estimation Models}
\citet{nalisnick2018do} demonstrated that likelihood-based density estimation models such as normalizing flows can assign likelihoods in ways that contradict human intuition. For instance, when a flow is trained on CIFAR-10 as the ID and evaluated on SVHN as OOD, the resulting OOD detection performance is extremely poor, with AUROC values dropping below 10\%. Building on this observation, \citet{kirichenko2020normalizing} analyzed the underlying reasons in coupling-layer flows such as RealNVP and proposed methodological adjustments such as modifying masking strategies to alleviate the issue. \citet{schirrmeister2020understanding, ren2019likelihood} improved OOD detection by training additional flow or background information and using the resulting likelihood ratio with the flows as the anomaly score. \citet{Serrà2020Input, kamkari2024a} proposed incorporating measures of input image complexity such as local intrinsic dimension or compression length using general-purpose compressors like PNG into the scoring function for improved OOD detection. \citet{ahmadian2021likelihood, morningstar2021density, osada2024understanding} argued that relying solely on input data likelihood often leads to failures in OOD detection, and therefore proposed methods that incorporate additional statistics—such as latent likelihood, input complexity estimated via general-purpose compressors, and Jacobian determinants—within auxiliary classifiers trained to improve detection performance. \citet{zhang2021understanding, le2021perfect} demonstrated that paradoxical behavior in likelihood assignment can arise even when a density estimation model perfectly estimates the ID. \citet{choi2018waic} proposed an anomaly scoring method by ensembling generative models and employing the Watanabe-Akaike Information Criterion (WAIC) \citep{watanabe2010asymptotic}, and \citet{nalisnick2019detecting} introduced an OOD detection approach based on the typicality set of ID data, which can outperform a likelihood-based detection approach. \citet{caterini2022entropic}, which forms the foundation of Sections \ref{sec:motivation} and \ref{sec:method} of our paper, analyzed likelihood inversion and the effectiveness of likelihood ratio–based methods from an entropic perspective. Building on this entropic perspective, we analyze entropy manipulation and its effect on the expected log-likelihood gap between ID and OOD samples.

\section{Can Entropy Manipulation Increase Performance?}
\label{sec:motivation}
In this section, we examine whether manipulating entropy can improve the performance of likelihood-based OOD detection. More specifically, in an oracle setting, we increase the entropy of OOD and examine how this affects the separation between ID and OOD log-likelihoods.

\subsection{Problem Statement}
We estimate the probability density function of the ID $P$ and aim to learn a model $P_\theta$ (e.g., a normalizing flow) that can estimate its likelihoods using only samples  $\mathbf{x} \sim P$, assuming no access to OOD data during training. After training, $P_\theta$ is used to detect OOD samples $\mathbf{y} \sim Q \neq P$ based on their likelihoods. If the likelihood assigned by $P_\theta$ to a test vector $\mathbf{x}_{\text{test}}$ is below a threshold $\tau$, determined on ID, then $\mathbf{x}_{\text{test}}$ is classified as OOD. However, it has been reported that the majority of OOD data receive higher likelihood under $P_\theta$ than ID data, leading to failures of likelihood-based detection. In such cases, the following inequality generally holds:
\begin{align}
    \begin{split}
        \mathbb{E}_{\mathbf{x}\sim P}[\log P_{\theta}(\mathbf{x})] < \mathbb{E}_{\mathbf{x}\sim Q}[\log P_{\theta}(\mathbf{x})] 
    \end{split}
\end{align}

\subsection{Entropy Manipulation}
Previous studies have reported that the entropy (i.e., a proxy for complexity) of input data can affect the assignment of likelihood, and unintuitive behavior often arises when $\mathbb{H}(P)>\mathbb{H}(Q)$ \citep{caterini2022entropic,Serrà2020Input, osada2024understanding}. Under the decomposition in Equation~\ref{eq:entropy}, the expected log-likelihood difference splits into two terms: $D_{KL}(Q || P_\theta)$ and $\mathbb{H}(Q) - \mathbb{H}(P)$. Because $Q$ is unavailable during training, directly controlling $D_{KL}(Q \| P_\theta)$ is infeasible, whereas the entropy term can be increased by perturbing OOD samples to obtain a higher-entropy $Q'$. This insight raises the methodological question of whether manipulating entropy can systematically improve the performance of likelihood-based OOD detection:

\begin{question}
If we increase OOD entropy via perturbation, wouldn’t the expected log-likelihood difference align with intuition?
\end{question}

Let $P$ be a continuous distribution on $\mathbb{R}^d$. For $X\sim P$, consider applying a Gaussian perturbation $Z\sim \mathcal{N}(0,\sigma^2I_d)$. Then $X+Z$ follows $P' = P * Z$, and we expect $\mathbb{H}(P') > \mathbb{H}(P)$ because the perturbation increases uncertainty (see Appendix \ref{appendix:proof} for a formal proof of Theorem \ref{theorem:manipulate}). Hence, we may be able to mitigate the likelihood inversion by increasing the entropy term of OOD that contributes to the expected likelihood difference.

To verify this hypothesis, we examine how detection performance changes as OOD entropy increases. We adopt settings known to exhibit likelihood inversion: CIFAR-10 or CelebA \citep{liu2015deep} image datasets as the ID and SVHN as the OOD dataset, training Glow \citep{kingma2018glow} on each ID dataset. The ID/OOD pairs used in our experiments are those for which likelihood inversion has been reported in density estimation models, making them well-suited for testing our hypothesis \citep{kirichenko2020normalizing, kamkari2024a}. We then added perturbations $Z \sim \mathcal{N}(0,\sigma^2 I_d)$ to SVHN samples and examined how OOD detection performance changes with different scales of $\sigma^2$. The corresponding experiment is shown in Figure \ref{fig:entropy_manipulation}, and the following observations were obtained.

\begin{figure*}[!t]
    \centering
    \begin{subfigure}{0.3\textwidth}
        \centering
        \includegraphics[width=1\linewidth]{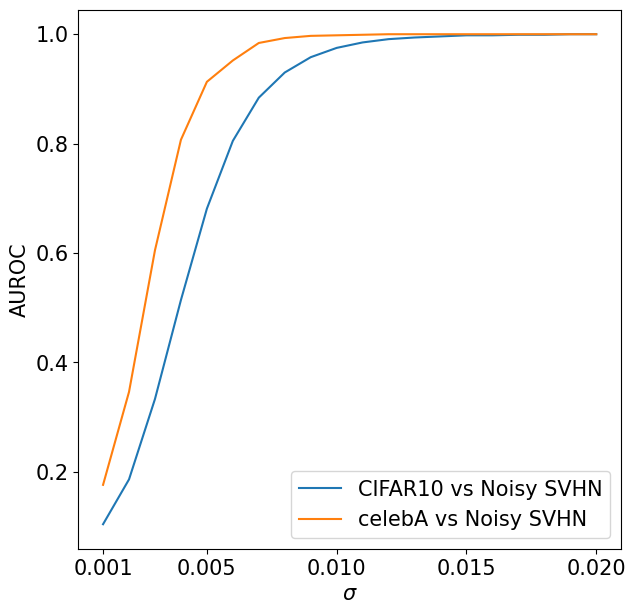} 
    \end{subfigure}
    \begin{subfigure}{0.6\textwidth}
        \centering
        \includegraphics[width=1\linewidth]{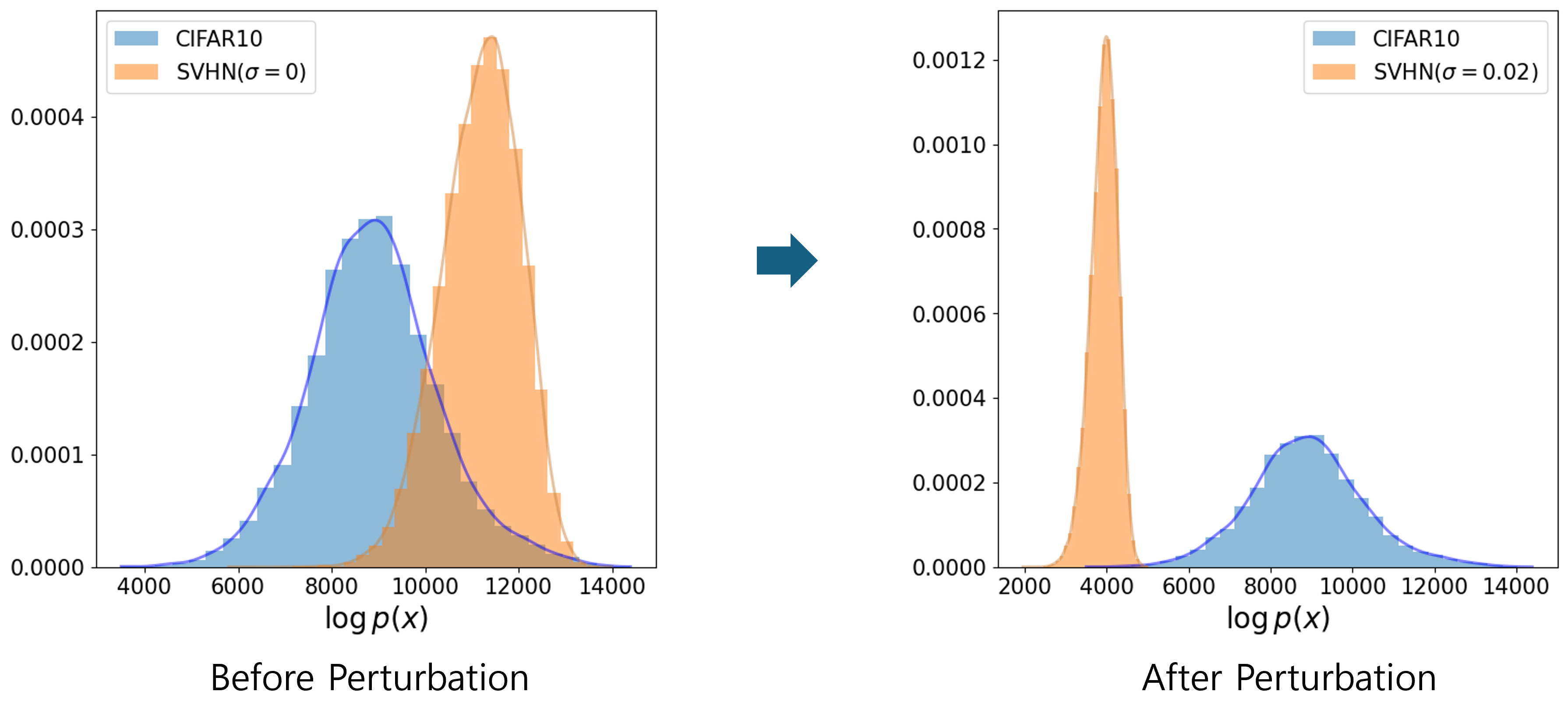} 
    \end{subfigure}
    \caption{AUROC changes with entropy manipulation intensity and log-likelihood assignments with and without perturbation. We increase $\sigma$ from 0.001 to 0.02 and add $\mathbf{z} \sim N(0, \sigma^2I_d)$ perturbation to SVHN to create a noisy SVHN distribution, and perform OOD detection through likelihood with Glow trained on CIFAR-10 and CelebA. The histogram shows the change in log-likelihood assignment before and after perturbation when Glow is trained with CIFAR-10.}
    \label{fig:entropy_manipulation}
\end{figure*}

\begin{observation}
When detection fails in cases where $\mathbb{H}(\text{OOD}) < \mathbb{H}(\text{ID})$, AUROC increases monotonically with the scale of Gaussian perturbation applied to the OOD.
\end{observation}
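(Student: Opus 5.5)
The statement is an empirical observation, so the argument has two parts. First, a theoretical argument that the population quantity behind AUROC, the expected log-likelihood gap, is monotone in the perturbation scale. Second, an empirical check that closes the remaining distance to AUROC itself.

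\textbf{Step 1: rewrite the gap.} Let $Q_\sigma = Q * \mathcal{N}(0,\sigma^2 I_d)$ and write $t=\sigma^2$. By the decomposition in Equation~\ref{eq:entropy}, the gap is
\begin{align*}
G(t) = \mathbb{E}_{P}[\log P_\theta] + \mathbb{H}(Q_\sigma) + D_{KL}(Q_\sigma\|P_\theta).
\end{align*}
The ID term does not depend on $t$. So it suffices to show that the cross-entropy $C(t)=-\mathbb{E}_{\mathbf{y}\sim Q_\sigma}[\log P_\theta(\mathbf{y})]$ is nondecreasing in $t$.

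\textbf{Step 2: differentiate in $t$.} I would first handle the entropy part with de Bruijn's identity,
\begin{align*}
\tfrac{d}{dt}\mathbb{H}(Q_\sigma)=\tfrac12 J(Q_\sigma) > 0,
\end{align*}
where $J$ is the Fisher information. This gives strict entropy growth, which is what Theorem~\ref{theorem:manipulate} formalizes.

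I would not try to bound the KL term separately. Instead I would differentiate $C(t)$ directly using the heat equation $\partial_t q_t = \tfrac12\Delta q_t$ and integrate by parts:
\begin{align*}
C'(t) = -\tfrac12\,\mathbb{E}_{Q_\sigma}\big[\Delta \log P_\theta(\mathbf{y})\big].
\end{align*}
So monotonicity reduces to $\log P_\theta$ having, on average over the noisy OOD region, a nonpositive Laplacian.

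\textbf{Step 3: justify the Laplacian condition.} I would argue this in two ways.
\begin{itemize}
\item For a flow with Gaussian base and a near-linear map $f$ around the OOD support, $\Delta\log P_\theta \approx -\|\nabla f\|_F^2<0$. This holds because the Jacobian term contributes only through second derivatives, which are small for small $\sigma$.
\item The likelihood-inversion regime itself says that OOD points such as SVHN sit near a high-density mode of $P_\theta$. Near a mode the log-density is locally concave.
\end{itemize}
This is the step I expect to be the main obstacle, since nothing forces global concavity of $\log P_\theta$. I would settle for a local second-order Taylor argument valid for $\sigma$ in the range $[0.001, 0.02]$ used in Figure~\ref{fig:entropy_manipulation}.

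\textbf{Step 4: pass from the mean gap to AUROC.} I would approximate the ID and OOD log-likelihood scores as Gaussians with means $\mu_P$, $\mu_{Q_\sigma}$ and variances $s_P^2$, $s_{Q_\sigma}^2$. Then
\begin{align*}
\mathrm{AUROC}\approx\Phi\!\left(\frac{\mu_P-\mu_{Q_\sigma}}{\sqrt{s_P^2+s_{Q_\sigma}^2}}\right).
\end{align*}
Since $\mu_P-\mu_{Q_\sigma}=G(t)$ increases, AUROC is monotone as long as the OOD score variance grows more slowly than the gap. The variance condition cannot be proven in general. I would verify it empirically, alongside the AUROC curve itself, from the histograms in Figure~\ref{fig:entropy_manipulation}, using Glow trained on CIFAR-10 and CelebA with SVHN as OOD over a grid of $\sigma$. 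This empirical check completes support for the observation, including its restriction to the failure case $\mathbb{H}(\text{OOD})<\mathbb{H}(\text{ID})$, where the initial gap is negative and so there is room to improve.
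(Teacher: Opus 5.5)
The part of your plan that actually establishes the observation is the same as the paper's. This is an empirical claim about particular trained models, and the paper supports it exactly by the experiment in your Step~4: Glow trained on CIFAR-10 and CelebA, SVHN perturbed by $\mathcal{N}(0,\sigma^2 I_d)$ for $\sigma\in[0.001,0.02]$, with the AUROC curve and before/after histograms of Figure~\ref{fig:entropy_manipulation}.

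Where you genuinely differ is the accompanying explanation. The paper's Theorem~\ref{theorem:manipulate} discards $D_{KL}(Q'\|P_\theta)\ge 0$ and lower-bounds $\mathbb{H}(Q')$ with the entropy power inequality. This gives an unconditional bound that grows with $\sigma$, so the expected gap must eventually become positive. But an increasing lower bound says nothing about whether the gap itself is monotone. Your identity $C'(t)=-\tfrac12\mathbb{E}_{Q_\sigma}[\Delta\log P_\theta]$ is exact under mild regularity and keeps the KL term, so it characterizes exactly when the mean gap increases. The price is a sign condition that is not automatic. If $Q$ concentrates where $\log P_\theta$ is locally convex, such as a valley between modes, small noise \emph{raises} the expected OOD log-likelihood and the gap first shrinks. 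For the same reason your de Bruijn step does no work: entropy growth cannot give monotonicity while the KL term is free to fall.

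The justifications in Step~3 do not hold as stated. $\Delta\log P_\theta(\mathbf{x})$ is a pointwise property of the trained flow, so small $\sigma$ does not make the curvature terms of $f$ small; near-linearity is an extra assumption. Likelihood inversion says only that SVHN has higher \emph{average} likelihood than the ID data, not that it sits near a local maximum of $P_\theta$. And with $d=3072$ the noise has norm about $\sigma\sqrt{d}\approx 1.1$ at $\sigma=0.02$ (for pixels in $[0,1]$), so it is doubtful you are in a ``local'' Taylor regime. The binormal step has a similar limitation: a monotone mean gap is neither necessary nor sufficient for monotone AUROC. So Steps~1--3 should be presented as a conditional explanation, not a proof. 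The observation rests on the empirical curve, as it does in the paper.
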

Figure \ref{fig:entropy_manipulation} demonstrates that likelihood-based OOD detection yields low performance for very small $\sigma$, but the AUROC rapidly converges to 1 when $\sigma$ is greater than about 0.01. The histograms further indicate that perturbed log-likelihoods move in the intuitive direction at the sample level. These observations provide empirical evidence that increasing OOD entropy via perturbation can enhance likelihood-based detection. To explain why this improvement can occur, we derive Theorem \ref{theorem:manipulate} by extending Equation \ref{eq:entropy}, and a formal proof is provided in Appendix \ref{appendix:proof}.

\begin{theorem}
\label{theorem:manipulate}
Let $P$, $P_\theta$, $Q$ be $d$-dimensional continuous probability distributions on $\mathbb{R}^d$. Let $X \sim Q$, $Z \sim \mathcal{N}(0, \sigma^2 I_d)$, and define $Q'$ as the distribution of $X+Z$. Then a lower bound on the expected log-likelihood difference estimated by $P_\theta$ between $P$ and $Q'$ is
\begin{align*}
\frac{d}{2}\log{(e^{\frac{2}{d}\mathbb{H}(X)}+2\pi e \sigma^2)} -\mathbb{H}(P)-D_{KL}(P||P_{\theta}).
\end{align*}
\end{theorem}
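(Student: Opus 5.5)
The plan is to apply the decomposition in Equation~\ref{eq:entropy} with $Q'$ in place of $Q$, and then bound the entropy of $Q'$ from below using the entropy power inequality. Concretely, applying Equation~\ref{eq:entropy} with $Q'$ as the OOD distribution gives
\begin{align*}
\mathbb{E}_{\mathbf{x}\sim P}[\log P_{\theta}(\mathbf{x})] - \mathbb{E}_{\mathbf{y}\sim Q'}[\log P_{\theta}(\mathbf{y})]
= D_{KL}(Q'\|P_{\theta}) - D_{KL}(P\|P_{\theta}) + \mathbb{H}(Q') - \mathbb{H}(P).
\end{align*}
I will assume, as in the footnote, that the KL divergences and entropies involved are finite, so this identity is legitimate for $Q'$. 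Note that $Q'$ has a smooth, strictly positive density, being a Gaussian convolution.

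The first step is to drop the term $D_{KL}(Q'\|P_\theta)$ using Gibbs' inequality, $D_{KL}(Q'\|P_\theta)\ge 0$. This leaves the lower bound $\mathbb{H}(Q') - \mathbb{H}(P) - D_{KL}(P\|P_\theta)$. The two terms $-\mathbb{H}(P)-D_{KL}(P\|P_\theta)$ already appear in the claimed bound, so it remains only to show that
\begin{align*}
\mathbb{H}(Q') \ge \tfrac{d}{2}\log\bigl(e^{\frac{2}{d}\mathbb{H}(X)} + 2\pi e\sigma^2\bigr).
\end{align*}

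For this I use the entropy power inequality. Independence of $X$ and $Z$ is implicit in defining $Q'=Q*\mathcal{N}(0,\sigma^2 I_d)$, and I will state it explicitly. The EPI then gives $N(X+Z)\ge N(X)+N(Z)$, where $N(W)=\frac{1}{2\pi e}e^{\frac{2}{d}\mathbb{H}(W)}$. The Gaussian entropy is $\mathbb{H}(Z)=\frac{d}{2}\log(2\pi e\sigma^2)$, so $e^{\frac{2}{d}\mathbb{H}(Z)}=2\pi e\sigma^2$. Multiplying the EPI by $2\pi e$ gives $e^{\frac{2}{d}\mathbb{H}(Q')}\ge e^{\frac{2}{d}\mathbb{H}(X)}+2\pi e\sigma^2$. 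Taking $\frac{d}{2}\log$ of both sides, which is monotone, yields the displayed inequality, and combining it with the previous step completes the proof.

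The main obstacle is the set of technical hypotheses rather than the algebra. The EPI requires $X$ to have a density with well-defined (possibly $-\infty$) entropy, and the decomposition requires $\mathbb{E}_{Q'}[\log P_\theta]$ and $\mathbb{H}(Q')$ to be finite. I would handle this by invoking the standard EPI for independent random vectors with densities, where the case $\mathbb{H}(X)=-\infty$ is trivially covered because $e^{-\infty}=0$ and the bound reduces to $\mathbb{H}(Q')\ge\mathbb{H}(Z)$, which follows from $\mathbb{H}(X+Z)\ge\mathbb{H}(X+Z\mid X)=\mathbb{H}(Z)$. I would also adopt the paper's standing assumption that the relevant KL divergences exist. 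As a side remark, the same bound shows $\mathbb{H}(Q')>\mathbb{H}(Q)$ for $\sigma>0$, which is the formal justification referenced in the text.
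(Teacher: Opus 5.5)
Your proposal is correct and follows the paper's proof essentially step for step: substitute $Q'$ into the entropy/KL decomposition, drop $D_{KL}(Q'\|P_\theta)\ge 0$, and lower-bound $\mathbb{H}(X+Z)$ with the entropy power inequality using $e^{\frac{2}{d}\mathbb{H}(Z)}=2\pi e\sigma^2$. Your remarks on independence of $X$ and $Z$ and on the $\mathbb{H}(X)=-\infty$ edge case spell out hypotheses the paper leaves implicit.
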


By Theorem \ref{theorem:manipulate}, the lower bound increases with the perturbation scale $\sigma$. This suggests that increasing the entropy gap, $\mathbb{H}(X+Z) - \mathbb{H}(P)$, helps the expected log-likelihood difference to align with our intuitive ordering. Note that interpreting performance variations purely as a function of the intensity of semantic information degradation is misleading; prior work shows that applying entropy-reducing transformations to the original distribution $Q$, such as collapsing inputs to a constant image, can worsen likelihood inversion \citep{osada2024understanding}.

\section{Semantic Proportional Entropy Manipulation}
\label{sec:method}

In the previous section, we demonstrated that manipulating entropy through perturbations of the OOD data can improve OOD detection performance. However, at inference time we do not know whether a test sample comes from the ID or from OOD, so selectively perturbing only OOD samples is infeasible. Therefore, we consider a similarity-aware strategy that manipulates the perturbation strength using an ID memory bank. This raises a practical question:

\begin{question}
    Can we improve detection by increasing entropy more for low-similarity inputs than for high-similarity inputs, where similarity is measured by an ID memory bank?
\end{question}

To analyze the effect of perturbations, we focus on the term $\mathbb{H}(Q) - \mathbb{H}(P) - D_{KL}(P || P_{\theta})$, which forms part of the likelihood expectation difference under the setting $X \sim P$, $Y \sim Q$, and $P_{\theta}$ trained to estimate $P$. Assume that the KL-divergences among $P$, $Q$, and $P_{\theta}$ exist. Suppose we apply a weak perturbation $Z$ to $P$ and a stronger perturbation $Z'$ to $Q$, yielding perturbed distributions $X+Z \sim P'$ and $Y+Z' \sim Q'$. Under this setting, the lower bound for the gain in the likelihood expectation gap before and after applying perturbations to the ID and OOD can be derived as follows:

\begin{align}
\small
\label{eq:perturbation__expectation_difference}
\begin{split}
    &\mathbb{E}_{\mathbf{x}\sim P'}[\log P_{\theta}(\mathbf{x})] - \mathbb{E}_{\mathbf{x}\sim Q'}[\log P_{\theta}(\mathbf{x})]\\
    &- (\mathbb{E}_{\mathbf{x}\sim P}[\log P_{\theta}(\mathbf{x})] - \mathbb{E}_{\mathbf{x}\sim Q}[\log P_{\theta}(\mathbf{x})])\\
    &\ge (\mathbb{H}(Q') - \mathbb{H}(Q)- D_{KL}(Q||P_{\theta})) \\
&-(\mathbb{H}(P') - \mathbb{H}(P) + D_{KL}(P'||P_{\theta}) - D_{KL}(P||P_{\theta})).
\end{split}
\end{align}

According to Equation \ref{eq:perturbation__expectation_difference}, if the entropy increase for the ID, together with the corresponding KL-divergence increment, $\mathbb{H}(P') - \mathbb{H}(P)+D_{KL}(P' || P_\theta) - D_{KL}(P || P_\theta)$, is smaller than the entropy increase for the OOD, $\mathbb{H}(Q') - \mathbb{H}(Q)$, then the lower bound of the gain in the likelihood expectation difference increases. The remaining issue is how to assign different perturbation strengths to ID and OOD samples. To this end, we propose \textbf{Semantic Proportional Entropy Manipulation (SPEM)}, a methodology that selectively adjusts perturbation intensity per sample using semantic similarity so that inputs with lower semantic similarity receive stronger perturbations than high-similarity inputs. This design manipulates OOD entropy efficiently without additional training, providing a lightweight mechanism to enhance likelihood-based OOD detection.

The proposed algorithm proceeds in three steps: first, given the ID dataset 
\[
\mathcal{X}_{\text{in}} = \{\mathbf{x}_i\}_{i=1}^n,
\]
we train a density estimation model $f$, such as a normalizing flow, 
that provides tractable likelihoods for input data. Next, we construct a memory bank $\mathcal{M}$
\[
\mathcal{M} = \{h(\mathbf{x}_i)\}_{i=1}^n, \:\: s.t. \:\:h(\mathbf{x}_i)=\{\min(g(\mathbf{x}_i)_j,\beta)\}^{d'}_{j=1}
\]

where $g:\mathbb{R}^d \rightarrow \mathbb{R}^{d'}$ is a fixed feature extractor with publicly available pretrained weights on large-scale image data (e.g., ImageNet \citep{deng2009imagenet}), and $g(\mathbf{x}_i)_j$ denotes the $j$-th element of $g(\mathbf{x}_i)$. Each element of $\mathcal{M}$ stores the embedding vector of an ID sample. Additionally, we apply ReAct \citep{sun2021react}, which mitigates OOD overconfidence in the feature extractor, by rectifying the activation, which can be expressed as $h(\mathbf{x}_i)$ and $\beta$ is the limit of maximum activation determined by $p$-th quantile of activations. Through this process, the memory bank $\mathcal{M}$ stores compressed $d'$-dimensional embeddings rather than raw images, enabling an efficient estimate of ID proximity in representation space. Finally, for a test vector $\mathbf{x}_{\text{test}}$, we manipulate its entropy by applying the following Gaussian perturbation:
\begin{align}
\begin{split}
&\mathbf{x}'_{\text{test}} 
= \mathbf{x}_{\text{test}} 
+ \mathcal{N}(0,((1 - \lambda)\alpha)^2I_d),\\
&s.t. \:\: \lambda =\max_{h(\mathbf{x_i})\in\mathcal{M}} \frac{h(\mathbf{x}_{\text{test}}) \cdot h(\mathbf{x}_i)}{\|h(\mathbf{x}_{\text{test}})\| \,\|h(\mathbf{x}_i)\|}
\end{split}
\end{align}

where the perturbation intensity $\lambda$ is adaptively scaled according to the maximum cosine similarity between the test vector and the stored ID embeddings and $\alpha$ is a hyperparameter that controls the overall strength of entropy manipulation. Since the ID is assumed to contain multiple heterogeneous classes, we employ the maximum cosine similarity rather than the mean. Finally, we define the anomaly score as the log-likelihood estimated by the density estimation model $f$, and perform detection by classifying samples with lower likelihood values as OOD.

\begin{figure*}[t]
    \centering
    \includegraphics[width=0.9\linewidth, trim=1 1 1 1, clip]{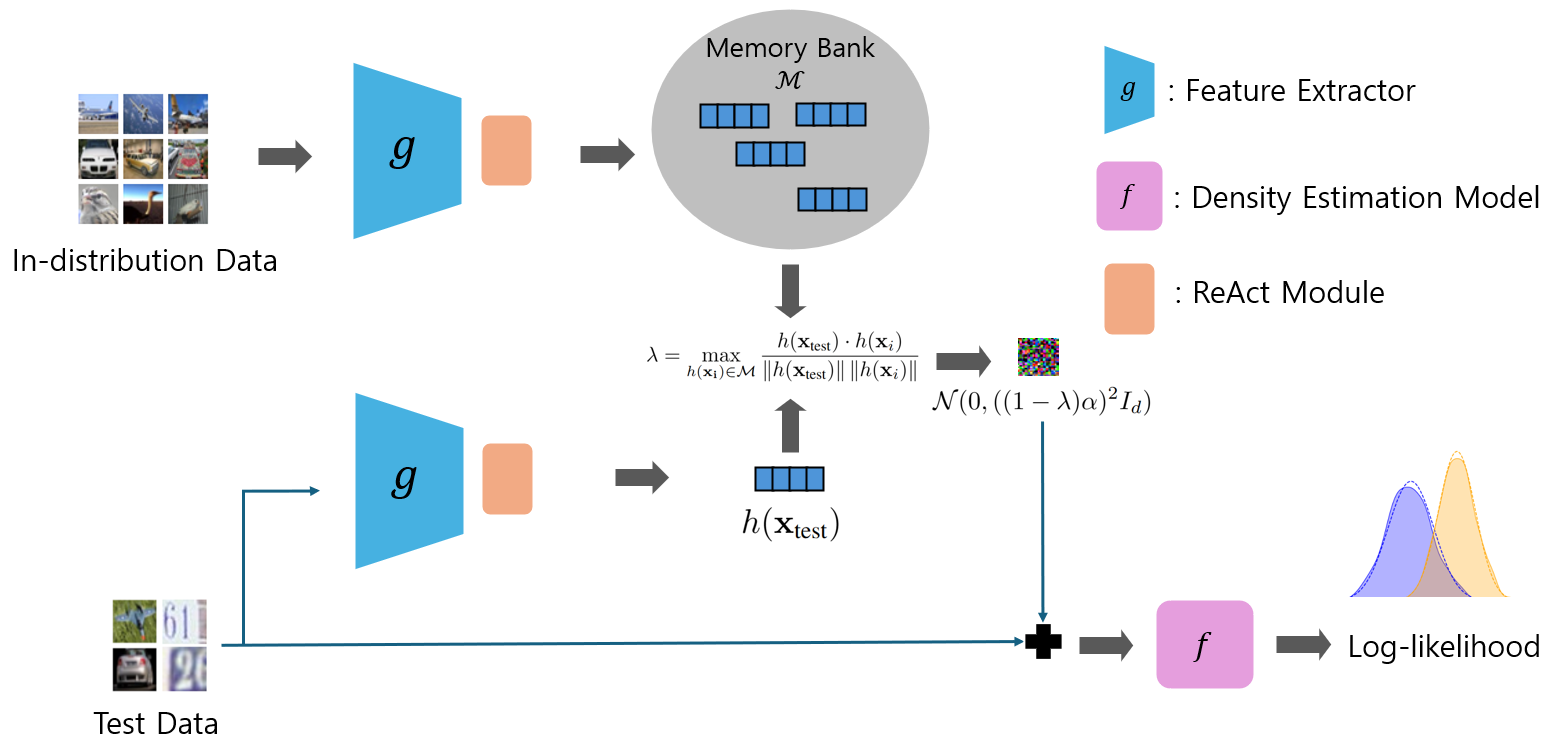}
    \caption{The overall framework of SPEM. $f$ is a density estimation model that provides a tractable likelihood for the input vector and estimates the ID, and $g$ is a feature extractor pretrained with general image data, capable of sufficiently extracting features for each image.}
    \label{fig:SPEM_framwork}
\end{figure*}

SPEM enables likelihood-based OOD detection to remain effective under both possible entropy orderings between ID and OOD, whereas existing likelihood-based scores often degrade in at least one of the two regimes \citep{osada2024understanding}. It uses the similarity $\lambda$, as an observable proxy of ID proximity to manipulate the amount of entropy increase at test time. Specifically, inputs with larger $\lambda$ are perturbed conservatively, whereas inputs with smaller $\lambda$ receive stronger perturbations. We then evaluate the test sample after this manipulation, and the final anomaly score is always computed from the flow log-likelihood of the perturbed input. In the inversion regime where OOD entropy is lower than ID entropy, this monotone scaling can suppress spuriously high likelihoods by increasing entropy more aggressively for low-similarity inputs, which helps improve separability. In the opposite regime where OOD entropy is higher than ID entropy, the same mechanism avoids unnecessarily disrupting high-similarity inputs and can further widen the effective discrepancy between the two distributions, leading to robust detection across both cases.

Furthermore, we formally derive how entropy manipulation through SPEM modifies the expected likelihood difference between ID and OOD samples under a density estimation model, and establish a lower bound on this difference as presented in Theorem \ref{theorem:manipulate_expand}. The proof of this theorem is reported in Appendix \ref{appendix:proof}.

\begin{theorem}
\label{theorem:manipulate_expand}
Let $P$, $P_\theta$, and $Q$ be $d$-dimensional continuous 
probability distributions on $\mathbb{R}^d$. Let $X \sim P, Y \sim Q, Z \sim \mathcal{N}(0, \sigma_P^2 I_d), Z' \sim \mathcal{N}(0, \sigma_Q^2 I_d)$. Let $X+Z \sim P', \quad Y+Z' \sim Q'$. Let $P$ have covariance matrix $\Sigma$. Further assume that $\sigma_P$ and $\sigma_Q$ are positive random variables, each drawn independently from a continuous probability distribution supported on $(0,\infty)$. Then, the lower bound on the expected log-likelihood difference estimated by $P_\theta$ between $P'$ and $Q'$ is

\begin{align*}
     \dfrac{d}{2}\left( \log\left(\dfrac{e^{\frac{2}{d}\mathbb{H}(Y)}+2\pi e(e^{\mathbb{E}[\log \sigma_Q^2]})}{2\pi e(\Pi^d_{i=1}(\lambda_i+\mathbb{E}[\sigma_P^2]))^{\frac{1}{d}}}\right)\right) -D_{KL}(P'||P_{\theta})
\end{align*}

where $\lambda_i$ denotes the $i$-th eigenvalue of $\Sigma$.
\end{theorem}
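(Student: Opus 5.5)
Apply Equation~\ref{eq:entropy} with $P'$ and $Q'$ in place of $P$ and $Q$:
\[
\mathbb{E}_{P'}[\log P_\theta]-\mathbb{E}_{Q'}[\log P_\theta]=D_{KL}(Q'\|P_\theta)-D_{KL}(P'\|P_\theta)+\mathbb{H}(Q')-\mathbb{H}(P').
\]
Since $D_{KL}(Q'\|P_\theta)\ge 0$, it remains to lower bound $\mathbb{H}(Q')$ and upper bound $\mathbb{H}(P')$, with the randomness of $\sigma_Q$ and $\sigma_P$ handled through conditioning and Jensen-type inequalities. All entropies are differential entropies of the mixtures, since $\sigma_P$ and $\sigma_Q$ are random and independent of $X$ and $Y$.

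\textbf{Lower bound on $\mathbb{H}(Q')$.} Conditioning reduces entropy, so $\mathbb{H}(Y+Z')\ge \mathbb{H}(Y+Z'\mid \sigma_Q)=\mathbb{E}_{\sigma_Q}[\mathbb{H}(Y+Z'\mid\sigma_Q=s)]$. For fixed $s$, $Y$ and $Z'$ are independent. The entropy power inequality, exactly as used for Theorem~\ref{theorem:manipulate}, then gives
\[
\mathbb{H}(Y+Z'\mid\sigma_Q=s)\ge \frac d2\log\bigl(e^{\frac2d\mathbb{H}(Y)}+2\pi e s^2\bigr).
\]
The function $u\mapsto \frac d2\log(a+2\pi e\,e^{u})$ is convex in $u=\log s^2$, because $\log(a+be^u)$ is a softplus-type function. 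Jensen's inequality in $u$ therefore yields
\[
\mathbb{E}_{\sigma_Q}\Bigl[\tfrac d2\log\bigl(e^{\frac2d\mathbb{H}(Y)}+2\pi e\sigma_Q^2\bigr)\Bigr]\ge \frac d2\log\bigl(e^{\frac2d\mathbb{H}(Y)}+2\pi e\, e^{\mathbb{E}[\log\sigma_Q^2]}\bigr).
\]
This is precisely where the geometric-mean term $e^{\mathbb{E}[\log\sigma_Q^2]}$ in the numerator comes from.

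\textbf{Upper bound on $\mathbb{H}(P')$.} Use the maximum-entropy property of the Gaussian. The marginal $X+Z$ has mean $\mathbb{E}X$ and covariance $\Sigma+\mathbb{E}[\sigma_P^2]I_d$, because $Z$ is conditionally centered and independent of $X$, so the law of total covariance applies. Hence
\[
\mathbb{H}(P')\le \tfrac12\log\bigl((2\pi e)^d\det(\Sigma+\mathbb{E}[\sigma_P^2]I_d)\bigr)=\frac d2\log\Bigl(2\pi e\bigl(\textstyle\prod_{i=1}^d(\lambda_i+\mathbb{E}[\sigma_P^2])\bigr)^{1/d}\Bigr),
\]
using the eigen-decomposition of $\Sigma$. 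Subtracting this from the lower bound on $\mathbb{H}(Q')$ and dropping $D_{KL}(Q'\|P_\theta)$ gives the stated expression.

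\textbf{Main obstacle.} The technical step most at risk is the $\mathbb{H}(Q')$ bound. It requires justifying the conditioning inequality for a continuous scale mixture and checking that all the entropies involved are finite, which needs $\mathbb{E}[\sigma_P^2]<\infty$ and $\mathbb{E}[\log\sigma_Q^2]$ to exist. The Jensen direction also has to be verified carefully: I would confirm convexity directly by computing the second derivative of $\log(a+be^u)$, which is $abe^u/(a+be^u)^2\ge 0$. Everything else is routine.
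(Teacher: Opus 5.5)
Your proof is correct. Its skeleton matches the paper's: the decomposition of Equation~\ref{eq:entropy} applied to $P',Q'$, dropping $D_{KL}(Q'\|P_\theta)\ge 0$, and the Gaussian maximum-entropy bound on $\mathbb{H}(P')$ via the law of total covariance.

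The genuine difference is in how you handle the random scale $\sigma_Q$ when lower-bounding $\mathbb{H}(Q')$.

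\textbf{The paper's route.} It applies the entropy power inequality once, to $Y$ and the unconditional scale mixture $Z'$. It then lower-bounds the mixture's entropy by writing $\mathbb{H}(Z')=\mathbb{E}[\mathbb{H}(Z'\mid\sigma_Q)]+\mathcal{I}(Z';\sigma_Q)$ and discarding the mutual information. Since $\mathbb{H}(Z'\mid\sigma_Q)=\frac d2\log(2\pi e\sigma_Q^2)$ is affine in $\log\sigma_Q^2$, the geometric mean $e^{\mathbb{E}[\log\sigma_Q^2]}$ appears directly, with no convexity argument.

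\textbf{Your route.} You condition first, which discards $\mathcal{I}(Y+Z';\sigma_Q)$. You then apply the EPI only with an exactly Gaussian summand for each fixed $\sigma_Q$, i.e.\ you reuse the inequality of Theorem~\ref{theorem:manipulate} verbatim. Finally you need Jensen on the convex function $u\mapsto\log(a+be^u)$ to move the expectation inside; your second-derivative check confirms the direction.

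\textbf{What each buys.} The paper's argument is shorter, and it isolates the cost of randomizing the scale as the single term $\mathcal{I}(Z';\sigma_Q)$ in the exponent. However, it invokes the EPI with a non-Gaussian summand. Yours needs only the Gaussian-summand case of the EPI. It also passes through the intermediate bound $\mathbb{E}_{\sigma_Q}\bigl[\frac d2\log(e^{\frac2d\mathbb{H}(Y)}+2\pi e\sigma_Q^2)\bigr]$, which is at least as strong as the stated one and is simply Theorem~\ref{theorem:manipulate}'s bound averaged over $\sigma_Q$.

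Both arguments rely on the noise scale being independent of $Y$. Both also rely on the integrability conditions you flag, namely $\mathbb{E}[\sigma_P^2]<\infty$ and finiteness of $\mathbb{E}[\log\sigma_Q^2]$, which the paper uses only tacitly.
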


Theorem \ref{theorem:manipulate_expand} demonstrates that, if the perturbation applied to $Q$ is sufficiently stronger than that applied to $P$ (i.e., generally $\sigma_Q > \sigma_P$), then the lower bound of the expected log-likelihood difference can be increased. Additionally, if the feature extractor $g$ is well trained, the similarity between $g(\mathbf{x}_{\text{test}})$ and the ID embeddings stored in the memory bank naturally yields a smaller perturbation scale for high-similarity inputs than for low-similarity inputs. Consequently, Theorem \ref{theorem:manipulate_expand} provides theoretical evidence that SPEM enlarges the expected log-likelihood difference between ID and OOD.

\section{Experiment}
\label{sec:experiment}
In this section, we compare the OOD detection performance of SPEM on real world datasets against other likelihood-based approaches. 

\textbf{Data Preprocessing} We utilized CIFAR-10, CIFAR-100, SVHN, CelebA, MNIST \citep{deng2012mnist}, and FashionMNIST \citep{xiao2017fashion} datasets provided by torchvision \citep{marcel2010torchvision}, which are widely adopted benchmark datasets in likelihood-based OOD detection methods.  From these, we constructed five dataset pairs, each consisting of one ID dataset and one OOD dataset, and we also evaluated the reversed pairing. In total, we evaluated OOD detection performance across ten ID/OOD dataset pairs. All images were resized to $32 \times 32$, and the train/test splits provided by torchvision were adopted for each dataset. We applied uniform dequantization by adding pixel-wise noise $u \sim U(0,1/256)$ to the inputs, enabling them to be interpreted as continuous distributions. Since we employed feature extractors pretrained on ImageNet, which generally take three channels images as input, we converted MNIST and FashionMNIST to three-channel format by replicating the single channel across all three channels. In our experiments, the detection performance was measured using AUROC.

\textbf{Likelihood-based Models} All comparison models except the local intrinsic dimension-based method are based on ResFlow, a normalizing flow model commonly used in the image domain, with latent distribution $\mathcal{N}(0,I_d)$, and we compare our proposed method with the following seven approaches:

\begin{enumerate}
    \item Likelihood only. For a test vector $\mathbf{x}$, we directly use negative log-likelihood $S(\mathbf{x}) = -\log p(\mathbf{x})$ as the anomaly score for OOD detection.

    \item Complexity-based method \citep{Serrà2020Input}. We measure the bit length $L(\mathbf{x})$ by compressing the image using PNG and obtaining the resulting number of bits. The OOD score is then defined as $S(\mathbf{x}) = - \log p(\mathbf{x}) - L(\mathbf{x})$, which incorporates the complexity term into the likelihood.

    \item Typicality-based method ($\sqrt{d}$) \citep{nalisnick2019detecting}. Since the latent of the normalizing flow is assumed to follow $\mathcal{N}(0,I_d)$, when the ID is $P$, we use $S(\mathbf{z'}) = |\sqrt{d}-||\mathbf{z'}||^2|$ as the OOD score, leveraging high-dimensional Gaussian concentration around a thin shell; thus $S(\mathbf{z'})$ measures how far the latent $\mathbf{z'}$ deviates from this typical set.
    
    \item Typicality-based method (Entropy) \citep{nalisnick2019detecting}. 
    Similar to typicality test using latent, we define $S(\mathbf{x'}) = |\mathbb{E}_P[\log p(\mathbf{x})]-\log p(\mathbf{x'})|$ as OOD score, which measures how far the input data $\mathbf{x}'$ falls from the input data's typicality set.
    
    \item Likelihood ratio method \citep{ren2019likelihood}. We define the OOD score as the likelihood ratio score $S(\mathbf{x}) = -(\log p_\theta(\mathbf{x}) - \log p_{\theta_b}(\mathbf{x}))$, where $p_\theta$ denotes the model trained on the ID and $p_{\theta_b}$ is the background-trained model constructed by introducing perturbations at the pixel level.

    \item GMM method using various statistics \citep{osada2024understanding}. We train a Gaussian Mixture Model (GMM) using the latent log-likelihood $\log p(\mathbf{z})$ and the bit-length $L(\mathbf{x})$ of the input data $\mathbf{x}$ obtained from a general compressor such as PNG, and employ the negative log-likelihood estimated by the GMM as the OOD score.

    \item Local intrinsic dimension-based method \citet{kamkari2024a}. This method performs OOD detection using a dual threshold based on local intrinsic dimension (LID) and log-likelihood. While the original formulation requires computing the Jacobian of the flow, the computational cost makes a faithful re-implementation impractical. To ensure comparability, we therefore report the AUROC values provided in the original paper. Since the official implementation of LID relies on a validation set and a slightly different data-split protocol, we treat the reported AUROC as a reference upper bound rather than a strictly comparable baseline. Nevertheless, including LID in our tables enables a fairer contextualization against established state-of-the-art methods.

\end{enumerate}

\begin{table*}[!t]
\caption{OOD detection performance on real image dataset using ResFlow. The five dataset pairs on the left are widely recognized as cases where likelihood-based OOD detection fails, and the remaining pairs do not exhibit problematic behavior in the estimated likelihood. For each dataset pair, the values that ranked within the top two across all compared models (excluding the $\lambda$) are highlighted in bold.
}
\label{tab:main_exp}
\centering
\small
\setlength{\tabcolsep}{4.0pt}
\renewcommand{\arraystretch}{1.15}
\resizebox{\textwidth}{!}{%
\begin{tabular}{lcccccccccc}
\hline
$P$ (In)  & CIFAR-10 & CIFAR-100 & CelebA & CIFAR-10 & FashionMNIST & SVHN & SVHN & SVHN & CelebA & MNIST \\
$Q$ (Out) & SVHN     & SVHN      & SVHN   & CelebA   & MNIST        & CIFAR-10 & CIFAR-100 & CelebA & CIFAR-10 & FashionMNIST \\
\hline
Likelihood              & 0.0256 & 0.0277 & 0.0163 & 0.6346 & 0.8797 & 0.9967 & 0.9950 & 0.9988 & 0.5819 & 0.9786 \\
Complexity              & 0.7943 & 0.7331 & 0.7998 & 0.6252 & 0.8811 & 0.9912 & 0.9890 & 0.9983 & 0.5740 & 0.3411 \\
Typicality ($\sqrt{d}$) & 0.0226 & 0.0273 & 0.0306 & 0.5429 & 0.7914 & 0.9969 & 0.9954 & 0.9987 & 0.6628 & 0.9954 \\
Typicality (Entropy)    & 0.8866 & 0.8783 & 0.9489 & 0.3633 & 0.7703 & 0.9938 & 0.9893 & 0.9987 & 0.6082 & 0.9688 \\
Likelihood Ratio        & 0.8512 & 0.7952 & 0.4458 & 0.7018 & \textbf{0.9718} & 0.9640 & 0.9418 & 0.9962 & 0.1826 & 0.0874 \\
GMM                     & 0.8916 & 0.8546 & 0.9328 & 0.4050 & 0.8486 & 0.9930 & 0.9899 & 0.9962 & 0.6845 & 0.9919 \\
LID                     & 0.9360 & 0.9330 & 0.9490 & 0.6550 & \textbf{0.9510} & 0.9870 & 0.9860 & 0.9960 & 0.9390 & \textbf{1.0000} \\
\hline
SPEM                    & \textbf{0.9913} & \textbf{0.9359} & \textbf{1.0000} & \textbf{0.9830} & 0.9207 & \textbf{0.9997} & \textbf{0.9992} & \textbf{1.0000} & \textbf{0.9999} & \textbf{1.0000} \\
SPEM-noise              & \textbf{0.9943} & \textbf{0.9458} & \textbf{1.0000} & \textbf{0.9873} & 0.9432 & \textbf{0.9997} & \textbf{0.9990} & \textbf{1.0000} & \textbf{0.9999} & \textbf{1.0000} \\
\hline
$\lambda$ (Similarity-only) & 0.9954 & 0.9530 & 1.0000 & 0.9894 & 0.9445 & 0.9998 & 0.9990 & 1.0000 & 0.9999 & 1.0000 \\
\hline
\end{tabular}%
}
\end{table*}

SPEM uses a pretrained ResNet-152 \citep{he2016deep} feature extractor provided by torchvision \citep{marcel2010torchvision} and the hyperparameter $\alpha$, which controls the manipulation intensity, was set to 0.4 in all experimental settings. The feature extractor is kept fixed (no training or fine-tuning on ID/OOD data) and is used only to compute embeddings for scaling the perturbation magnitude. Additionally, to examine the influence of the semantic information contained in the original images, we introduced \textbf{SPEM-noise} as a comparative model, which uses only the noise derived through similarity as input to the density model. In this setting, the hyperparameter $\alpha$ was fixed at 0.1. We also report the results of using the similarity $\lambda$ alone for OOD detection (i.e., without likelihood evaluation), to assess how much separability is provided by the similarity itself.

\textbf{Experiment Result}
According to Table~\ref{tab:main_exp}, SPEM improves substantially over raw likelihood on the widely recognized failure pairs, and remains competitive on the remaining pairs where likelihood-based scoring is already near-saturated. More broadly, SPEM is the only method (aside from LID, which is not directly comparable in our setting) that maintains consistently high AUROC across both inversion and non-inversion regimes. Interestingly, we find that SPEM-noise, which does not incorporate the original test data, outperforms SPEM in some hard cases. This result implies that strong detection can be achieved even when the original test content is removed, since SPEM-noise still induces a well-controlled entropy term through Gaussian inputs whose scale is adjusted by similarity. This suggests that the likelihood ordering is not primarily determined by fine-grained semantics of the original images; rather, the entropy of the original ID/OOD distributions can act as a confounder in likelihood-based ordering. To further substantiate this observation, we include in Appendix \ref{appendix:ablation_SPEM} an ablation study that analyzes the impact of the feature extractor used in SPEM, the effectiveness of ReAct in refining the embedding vector, and additional experiments on alternative flow models to assess the generality of our findings beyond ResFlow.

\begin{table}[!h]
\caption{OOD detection performance of SPEM and sampled $\lambda$. Results are reported as the mean AUROC over three repeated samplings. We use ResFlow as the flow model and set $\alpha=0.1$.}
\centering
\label{tab:lambda_limited}
\small
\setlength{\tabcolsep}{6.0pt}
\renewcommand{\arraystretch}{1.15}
\begin{tabular}{lccc}
\hline
$P$ (In)  & \multicolumn{3}{c}{SVHN} \\
$Q$ (Out) & CIFAR-10 & CIFAR-100 & CelebA \\
\hline
SPEM      & \textbf{0.8159} & \textbf{0.8190} & \textbf{0.7846} \\
$\lambda$ & 0.7591          & 0.7598          & 0.7603          \\
\hline
\end{tabular}
\end{table}

Although SPEM slightly underperforms the similarity-only baseline $\lambda$ on some hard pairs and is comparable to it on easy pairs, one might reasonably interpret $\lambda$-only detection as an empirical ceiling for SPEM in such regimes. This ceiling can appear when $\lambda$ already offers strong separability and the AUROC is near saturation, making further improvements difficult to observe. In Table \ref{tab:main_exp}, the similarity-only score $\lambda$ is near-saturated on several pairs, making it hard to assess whether SPEM improves beyond similarity itself. To avoid this saturation, we run a controlled similarity test where $\lambda$ has only moderate discriminative power: we draw $\lambda_{\text{ID}} \sim \mathcal{N}(0.65, 0.05^2)$ and $\lambda_{\text{OOD}} \sim \mathcal{N}(0.60, 0.05^2)$, clipped to [0, 1], which yields AUROC $\approx 0.75$ for $\lambda$-only. We then use these $\lambda$ values only to set SPEM’s perturbation scale, while scoring with the flow log-likelihood on perturbed inputs. As shown in Table \ref{tab:lambda_limited}, SPEM consistently outperforms the $\lambda$-only baseline across all three settings, indicating that $\lambda$-only is not a strict ceiling for SPEM when similarity is imperfect. These results correspond to a regime that is complementary to SPEM-noise. In this case, retaining the original input distribution and its inherent entropy appears to contribute to likelihood separation. From a practitioner’s perspective, if one can reasonably hypothesize whether the OOD entropy is higher or lower than the ID entropy based on domain knowledge or prior deployments, then choosing an appropriate perturbation scale $\alpha$ provides a simple knob to further improve detection in the corresponding regime. However, when such prior information is unavailable at test time, our similarity-conditioned approach remains effective regardless of the entropy ordering between ID and OOD.

\section{Analysis of SPEM-noise}
\label{sec:discussion}

In Tables~\ref{tab:main_exp} and~\ref{tab:main_exp_glow}, SPEM-noise matches or even surpasses SPEM in several settings. Unlike SPEM, SPEM-noise ignores the test input and evaluates the density model on Gaussian noise whose scale is calibrated by the maximum similarity between the test embedding and the ID memory bank. Despite containing no semantic content, this procedure can yield comparable or superior OOD detection performance. To explain this behavior, we analyze how SPEM and SPEM-noise change the expected log-likelihood gap between in- and out-of-distribution samples.

Let $X\sim P$ (ID) and $Y\sim Q$ (OOD) with covariance $\Sigma_Q$. Let $Z\sim\mathcal{N}(0,\sigma_P^2 I_d)$ and $Z'\sim\mathcal{N}(0,\sigma_Q^2 I_d)$ denote the noises applied to $P$ and $Q$, and write $X+Z\sim P'$ and $Y+Z'\sim Q'$ for the perturbed distributions under the density model $P_\theta$. While $\sigma_P^2$ and $\sigma_Q^2$ depend on similarity in practice, we treat them as constants here for analytical tractability. We then define $\Delta$ as the increment in the log-likelihood difference under SPEM-noise relative to SPEM, and decompose it into an entropy increment $\Delta_E$ and a KL-divergence increment $\Delta_{KL}$. We first establish conditions for $\Delta_E>0$ and characterize its dependence on the noise strength in Theorem~\ref{theorem:condition_of_entropy_increment_main} and Theorem~\ref{theorem:incremental_monotone_main}.

\begin{theorem}
\label{theorem:condition_of_entropy_increment_main}
Let $P$, $P_\theta$, and $Q$ be $d$-dimensional continuous 
probability distributions on $\mathbb{R}^d$. Let
\[
X \sim P, \quad Y \sim Q, \quad 
Z \sim \mathcal{N}(0, \sigma_P^2 I_d), \quad 
Z' \sim \mathcal{N}(0, \sigma_Q^2 I_d),
\]
and define
\[
X+Z \sim P', \quad Y+Z' \sim Q'.
\]
 Let $Q$ have covariance matrix $\Sigma_Q$. Then, the sufficient condition of $\Delta_{E}>0$ is
\[
\frac{\sigma^2_Q}{\sigma^2_P}  > \frac{2\pi e\left(\text{tr}(\Sigma_Q)\right)}{de^{\frac{2}{d}\mathbb{H}(X)}}.
\]
\end{theorem}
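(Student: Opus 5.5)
The plan is to first make $\Delta_E$ explicit, then bound each entropy in it by a closed-form quantity, and finally reduce the positivity of the resulting bound to the stated ratio condition.

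\textbf{Making $\Delta_E$ explicit.} Following the decomposition in Equation~\ref{eq:entropy}, the entropy contribution to the expected log-likelihood gap between ID and OOD is $\mathbb{H}(\text{OOD input}) - \mathbb{H}(\text{ID input})$. Under SPEM the density model sees $X+Z$ and $Y+Z'$, so this contribution is $\mathbb{H}(Y+Z') - \mathbb{H}(X+Z)$. Under SPEM-noise it sees only $Z$ and $Z'$, so the contribution is $\mathbb{H}(Z') - \mathbb{H}(Z)$. I therefore take
\[
\Delta_E = \bigl(\mathbb{H}(Z') - \mathbb{H}(Z)\bigr) - \bigl(\mathbb{H}(Y+Z') - \mathbb{H}(X+Z)\bigr).
\]
The strategy is to lower bound $\Delta_E$ by an explicit quantity and show that this quantity is positive under the stated ratio condition.

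\textbf{Bounding the three entropy terms.}
\begin{itemize}
\item The Gaussian terms are exact. Since $\mathbb{H}(\mathcal{N}(0,\sigma^2 I_d)) = \tfrac{d}{2}\log(2\pi e\sigma^2)$, we have $\mathbb{H}(Z') - \mathbb{H}(Z) = \tfrac{d}{2}\log(\sigma_Q^2/\sigma_P^2)$.
\item For $\mathbb{H}(X+Z)$, which enters with a plus sign, I use the entropy power inequality exactly as in the proof of Theorem~\ref{theorem:manipulate}. It gives $\mathbb{H}(X+Z) \ge \tfrac{d}{2}\log\bigl(e^{\frac{2}{d}\mathbb{H}(X)} + 2\pi e\sigma_P^2\bigr)$.
\item For $\mathbb{H}(Y+Z')$, which enters with a minus sign, I need an upper bound. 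I use the maximum-entropy property of the Gaussian: $Y+Z'$ has covariance $\Sigma_Q + \sigma_Q^2 I_d$, so $\mathbb{H}(Y+Z') \le \tfrac12\log\bigl((2\pi e)^d \det(\Sigma_Q+\sigma_Q^2 I_d)\bigr)$.
\item I then apply AM--GM to the eigenvalues of $\Sigma_Q + \sigma_Q^2 I_d$. This gives $\det(\Sigma_Q+\sigma_Q^2 I_d)^{1/d} \le \operatorname{tr}(\Sigma_Q)/d + \sigma_Q^2$, and hence $\mathbb{H}(Y+Z') \le \tfrac d2 \log\bigl(2\pi e(\operatorname{tr}(\Sigma_Q)/d + \sigma_Q^2)\bigr)$.
\end{itemize}

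\textbf{Combining and simplifying.} Substituting the three bounds, and writing $a = e^{\frac{2}{d}\mathbb{H}(X)}$ and $t = \operatorname{tr}(\Sigma_Q)$, gives
\[
\Delta_E \ge \tfrac{d}{2}\log\frac{(\sigma_Q^2/\sigma_P^2)\,(a+2\pi e\sigma_P^2)}{2\pi e\,(t/d+\sigma_Q^2)}.
\]
This is positive precisely when the argument of the logarithm exceeds $1$. Expanding the numerator, that means $\sigma_Q^2 a/\sigma_P^2 + 2\pi e\sigma_Q^2 > 2\pi e\, t/d + 2\pi e\sigma_Q^2$. The $2\pi e\sigma_Q^2$ terms cancel, leaving $\sigma_Q^2/\sigma_P^2 > 2\pi e\, t/(d a)$, which is exactly the stated condition.

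\textbf{Main obstacle.} I expect the main difficulty to be pinning down the definition of $\Delta_E$, in particular its sign convention and which entropies it contains, consistently with the paper's definition of $\Delta$. I also need to check that the SPEM-noise gap indeed involves only $\mathbb{H}(Z)$ and $\mathbb{H}(Z')$. The remaining steps use only the entropy power inequality, Gaussian maximum entropy and AM--GM, all of which are routine.
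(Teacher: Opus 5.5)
Your proposal is correct and follows the paper's proof. It uses the same ingredients: the exact Gaussian entropy difference $\frac{d}{2}\log(\sigma_Q^2/\sigma_P^2)$, the entropy power inequality for $\mathbb{H}(X+Z)$, the Gaussian maximum-entropy bound plus AM--GM on the eigenvalues for $\mathbb{H}(Y+Z')$, and the cancellation of the $2\pi e\sigma_Q^2$ terms, and your $\Delta_E$ matches the paper's definition. The only difference is order: the paper first states the sharper determinant-based condition and then applies AM--GM, while you apply AM--GM before combining, which gives the same final inequality.
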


\begin{theorem}
\label{theorem:incremental_monotone_main}
Let $P$, $P_\theta$, and $Q$ be $d$-dimensional continuous 
probability distributions on $\mathbb{R}^d$. Let
\[
X \sim P, \quad Y \sim Q, \quad 
Z \sim \mathcal{N}(0, \sigma_P^2 I_d), \quad 
Z' \sim \mathcal{N}(0, \sigma_Q^2 I_d),
\]
and define
\[
X+Z \sim P', \quad Y+Z' \sim Q'.
\] Assume the Fisher information of $X$ and $Y$ are non-zero finite. Then, $\frac{\partial\Delta_E}{\partial\sigma_P^2} <0$ and $\frac{\partial\Delta_E}{\partial\sigma_Q^2} > 0$. 
\end{theorem}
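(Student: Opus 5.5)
The plan is to write $\Delta_E$ in closed form, differentiate it with de Bruijn's identity, and fix the sign of each derivative with Stam's Fisher-information inequality.

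\textbf{Closed form for $\Delta_E$.} Under SPEM the inputs are $X+Z$ and $Y+Z'$. Under SPEM-noise they are $Z$ and $Z'$ alone. By the decomposition in Equation~\ref{eq:entropy}, the entropy part of the expected log-likelihood gap is $\mathbb{H}(Y+Z')-\mathbb{H}(X+Z)$ for SPEM and $\mathbb{H}(Z')-\mathbb{H}(Z)$ for SPEM-noise. I read the paper's definition of $\Delta_E$ as the difference of these two, that is,
\begin{align*}
\Delta_E &= \mathbb{H}(Z')-\mathbb{H}(Z)-\mathbb{H}(Y+Z')+\mathbb{H}(X+Z)\\
&= \frac{d}{2}\log\sigma_Q^2-\frac{d}{2}\log\sigma_P^2-\mathbb{H}(Y+Z')+\mathbb{H}(X+Z).
\end{align*}
The second line uses the Gaussian entropy $\frac{d}{2}\log(2\pi e\sigma^2)$; the $2\pi e$ terms cancel. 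This is the same $\Delta_E$ used for Theorem~\ref{theorem:condition_of_entropy_increment_main}.

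\textbf{Differentiating.} The variable $\sigma_P^2$ appears only in $-\frac{d}{2}\log\sigma_P^2+\mathbb{H}(X+Z)$, and $\sigma_Q^2$ appears only in $\frac{d}{2}\log\sigma_Q^2-\mathbb{H}(Y+Z')$. I will apply de Bruijn's identity, $\frac{\partial}{\partial t}\mathbb{H}(X+\sqrt{t}\,N)=\frac12 J(X+\sqrt{t}\,N)$ with $N\sim\mathcal{N}(0,I_d)$ and $J$ the trace of the Fisher information matrix. This gives
\begin{align*}
\frac{\partial\Delta_E}{\partial\sigma_P^2}&=\frac12\Big(J(X+Z)-\frac{d}{\sigma_P^2}\Big),\\
\frac{\partial\Delta_E}{\partial\sigma_Q^2}&=\frac12\Big(\frac{d}{\sigma_Q^2}-J(Y+Z')\Big).
\end{align*}
The identity holds for any $X$ with finite covariance and any $t>0$, since Gaussian smoothing makes the density smooth and positive. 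So the whole claim reduces to the strict inequalities $J(X+Z)<d/\sigma_P^2=J(Z)$ and $J(Y+Z')<d/\sigma_Q^2=J(Z')$.

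\textbf{The main step: strict Fisher-information inequalities.} I expect this to be the crux, and it is where the hypothesis that the Fisher informations of $X$ and $Y$ are non-zero and finite enters.
\begin{itemize}
\item I will use Stam's Fisher information inequality in its trace form for independent summands: $\frac{1}{J(X+Z)}\ge\frac{1}{J(X)}+\frac{1}{J(Z)}$. If only the matrix form is at hand, I will obtain the trace version by applying the scalar inequality coordinatewise (or along directions), or via the matrix convolution inequality $J(X+Z)\preceq J(Z)$ together with the strict version.
\item Since $0<J(X)<\infty$, the term $1/J(X)$ is strictly positive. Hence $J(X+Z)<J(Z)=d/\sigma_P^2$, so $\partial\Delta_E/\partial\sigma_P^2<0$.
\item The same argument applied to $Y$ and $Z'$ gives $J(Y+Z')<d/\sigma_Q^2$, so $\partial\Delta_E/\partial\sigma_Q^2>0$.
\end{itemize}
The points to check carefully are that Stam's inequality applies to sums with a Gaussian component under only finite Fisher information, and that de Bruijn's identity may be differentiated at fixed $\sigma^2>0$. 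Both are standard once the density is Gaussian-smoothed.
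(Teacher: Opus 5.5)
Your proposal is correct and follows the paper's proof. De Bruijn's identity reduces the two partial derivatives to $\tfrac12\bigl(J(X+Z)-d/\sigma_P^2\bigr)$ and $\tfrac12\bigl(d/\sigma_Q^2-J(Y+Z')\bigr)$, and Stam's Fisher information inequality with $0<J(X),J(Y)<\infty$ gives the strict bounds $J(X+Z)<d/\sigma_P^2$ and $J(Y+Z')<d/\sigma_Q^2$; the paper records the first quantitatively as $\partial\Delta_E/\partial\sigma_P^2\le -\frac{d}{2\sigma_P^2}\frac{J(Z)}{J(X)+J(Z)}$, and symmetrically for $\sigma_Q^2$, while your finite-covariance condition for de Bruijn's identity is not a stated hypothesis but is only an added technical caveat (the paper invokes the identity without comment).
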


Therefore, by Theorems~\ref{theorem:condition_of_entropy_increment_main} and~\ref{theorem:incremental_monotone_main}, we establish that $\Delta_{E}$ not only becomes strictly positive once the noise variance ratio $\sigma_Q^2/\sigma_P^2$ exceeds a certain threshold, but also exhibits opposite monotonic behaviors with respect to the ID/OOD noise variances. Specifically, $\Delta_{E}$ exhibits a negative gradient with respect to $\sigma_P^2$ and a positive gradient with respect to $\sigma_Q^2$. Consequently, by decreasing $\sigma_P^2$ while simultaneously increasing $\sigma_Q^2$ such that their ratio surpasses the identified threshold, one can guarantee that $\Delta_{E}$, as a constituent component of $\Delta$, remains positive and strictly increases. 

Since the overall increment $\Delta$ is influenced by $\Delta_{\mathrm{KL}}$, an analysis of $\Delta_{\mathrm{KL}}$ is required in order to explain the expected log-likelihood under both SPEM-noise and SPEM. However, unlike entropy, $\Delta_{\mathrm{KL}}$ is difficult to bound due to the arbitrariness of the OOD setting. Therefore, we relax this condition and, in Theorem~\ref{theorem:semiconvex_main}, derive the bound under the realistic assumptions that the negative log-likelihood is $L$-smooth and $\lambda$-semiconvex, in order to analyze the bound of the overall difference $\Delta$. These conditions accommodate weakly multi-modal distributions and are more practically plausible than assuming an $L$-Lipschitz log-likelihood, as required in Theorem~\ref{theorem:delta_bound}.

\begin{theorem}
\label{theorem:semiconvex_main}
Let $P$, $P_\theta$, and $Q$ be $d$-dimensional continuous 
probability distributions on $\mathbb{R}^d$ and have finite first and second moments. Let
\[
X \sim P, \quad Y \sim Q, \quad 
Z \sim \mathcal{N}(0, \sigma_P^2 I_d), \quad 
Z' \sim \mathcal{N}(0, \sigma_Q^2 I_d),
\]
and define
\[
X+Z \sim P', \quad Y+Z' \sim Q'.
\]
Let $f(\mathbf{x})=-\log P_\theta(\mathbf{x})$ be $\lambda$-semiconvex and $L$-smooth. Then
\[
\Delta \in [-C- \frac{d(\lambda+L)}{2}(\sigma_P^2+\sigma_Q^2), -C+\frac{d(\lambda+L)}{2}(\sigma_P^2+\sigma_Q^2)]
\]
where $C=\mathbb{E}_{\mathbf{x}\sim P}[\log P_{\theta}(\mathbf{x})]-\mathbb{E}_{\mathbf{x}\sim Q}[\log P_{\theta}(\mathbf{x})]$. Also, a sufficient condition for $\Delta>0$ is $-\tfrac{2C}{d(\lambda+L)} > \sigma_P^2 + \sigma_Q^2$.

\end{theorem}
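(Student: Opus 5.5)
Throughout, I take $\Delta$ as defined in the text: the SPEM-noise log-likelihood gap minus the SPEM gap, with $\sigma_P,\sigma_Q$ fixed constants. SPEM-noise evaluates $P_\theta$ on the pure noises $Z,Z'$, while SPEM evaluates it on $X+Z$ and $Y+Z'$. Writing $f=-\log P_\theta$, this gives
\begin{align*}
\Delta = -\mathbb{E} f(Z) + \mathbb{E} f(Z') + \mathbb{E} f(X+Z) - \mathbb{E} f(Y+Z').
\end{align*}
The plan is to add and subtract $\mathbb{E}f(X)$, $\mathbb{E}f(Y)$ and $f(\mathbf{0})$. Since $\mathbb{E} f(X)-\mathbb{E} f(Y) = -C$, this rewrites $\Delta$ as
\begin{align*}
\Delta = -C &+ \big[\mathbb{E} f(X+Z)-\mathbb{E} f(X)\big] - \big[\mathbb{E} f(Z)-f(\mathbf{0})\big] \\
&- \big[\mathbb{E} f(Y+Z')-\mathbb{E} f(Y)\big] + \big[\mathbb{E} f(Z')-f(\mathbf{0})\big].
\end{align*}
The problem then reduces to bounding four terms of a single type: $\mathbb{E} f(W+N)-\mathbb{E} f(W)$, where $N\sim\mathcal{N}(0,\sigma^2 I_d)$ is independent of $W$. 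Here $W$ is one of $X$, $Y$ or the constant $\mathbf{0}$.

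The key lemma is a second-order sandwich. I read $\lambda$-semiconvexity as convexity of $f+\tfrac{\lambda}{2}\|\cdot\|^2$, which gives $f(w+h)\ge f(w)+\nabla f(w)\cdot h-\tfrac{\lambda}{2}\|h\|^2$. $L$-smoothness gives the matching upper bound $f(w+h)\le f(w)+\nabla f(w)\cdot h+\tfrac{L}{2}\|h\|^2$. Set $h=N$ and take expectations. The linear term $\mathbb{E}[\nabla f(W)\cdot N]$ vanishes by independence and $\mathbb{E}N=0$, and $\mathbb{E}\|N\|^2=d\sigma^2$. Hence
\begin{align*}
-\tfrac{\lambda d\sigma^2}{2}\le \mathbb{E} f(W+N)-\mathbb{E} f(W)\le \tfrac{L d\sigma^2}{2}.
\end{align*}

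Apply this with $\sigma=\sigma_P$ to the pair of brackets involving $Z$. Their difference lies in $[-\tfrac{d(\lambda+L)}{2}\sigma_P^2,\ \tfrac{d(\lambda+L)}{2}\sigma_P^2]$, because each bracket lies in $[-\lambda d\sigma_P^2/2,\ Ld\sigma_P^2/2]$. The same argument with $\sigma=\sigma_Q$ bounds the pair involving $Z'$. Summing the two ranges around $-C$ gives exactly the stated interval for $\Delta$. The sufficient condition is then immediate: require the left endpoint to be positive, $-C-\tfrac{d(\lambda+L)}{2}(\sigma_P^2+\sigma_Q^2)>0$, and rearrange to $-\tfrac{2C}{d(\lambda+L)}>\sigma_P^2+\sigma_Q^2$.

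I expect the main obstacle to be integrability rather than algebra. Every expectation above must be finite, and the linear term must vanish legitimately. $L$-smoothness together with semiconvexity gives $f$ at most quadratic growth, $|f(x)|\le a+b\|x\|+c\|x\|^2$, with $\|\nabla f(x)\|$ of at most linear growth. The assumed finite first and second moments of $P$ and $Q$, together with Gaussian moments of $Z$ and $Z'$, then make $f(W+N)$, $f(W)$ and $\nabla f(W)\cdot N$ integrable. This justifies both Fubini and the cancellation $\mathbb{E}[\nabla f(W)\cdot N]=0$. I would state this growth bound explicitly before invoking the sandwich lemma.
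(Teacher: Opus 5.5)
Your proposal is correct and is essentially the paper's proof. Each of the four increments $\mathbb{E}f(W+N)-\mathbb{E}f(W)$, with $W\in\{X,Y,\mathbf{0}\}$, is sandwiched between $-\lambda d\sigma^2/2$ and $Ld\sigma^2/2$ via semiconvexity and $L$-smoothness (the linear term vanishes because the noise is centered and independent), and the bounds are summed around $-C$. The differences are cosmetic: you pair the brackets by noise scale ($\sigma_P$ versus $\sigma_Q$) rather than pure-noise versus data-plus-noise, and you justify integrability explicitly via the quadratic growth of $f$, which the paper leaves implicit.
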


By Theorem~\ref{theorem:semiconvex_main}, we establish the existence of a guaranteed lower bound when the negative log-likelihood satisfies the $L$-smoothness and $\lambda$-semiconvexity conditions. This existence result holds when $C$ is negative, that is, when the log-likelihood expectation of the OOD exceeds that of the ID. Consequently, this suggests that SPEM-noise may achieve superior OOD detection performance compared to SPEM in certain cases. For future work, it would be of interest to relax the assumption and investigate the effectiveness of SPEM-noise under weaker conditions, and to analyze the conditions under which $\Delta_{KL}$ becomes positive or takes values smaller than $\Delta_{E}$. Such analyses would further explain when and why SPEM-noise can yield improved OOD detection performance. Detailed analyses of SPEM-noise and proofs of the theorems are provided in Appendix \ref{appendix:ablation_spem_noise}.

\section{Conclusion}
\label{sec:conclusion}
We proposed SPEM to mitigate the likelihood paradox arising from likelihood assignment in generative models. From an entropic perspective, we explained why SPEM succeeds not only in cases where conventional likelihood-based detection methods fail in OOD detection but also in scenarios where they already perform well. Furthermore, we empirically demonstrated on real-world datasets that our method outperforms existing baselines that rely solely on generative model likelihoods. Since both our method and other approaches in this research field still require auxiliary processes, we anticipate future work to develop training methodologies that align the likelihood assignment of generative models with human intuition, enabling likelihood-based OOD detection without auxiliary procedures. Through our work, we hope this study will stimulate further investigation into how to interpret and utilize likelihood estimates produced by generative models.

\subsubsection*{Broader Impact Statement}
This work advances the methodological understanding of likelihood-based OOD detection in generative models. By clarifying the role of entropy in likelihood inversion and introducing a training-free test-time procedure, the proposed approach can improve the reliability of likelihood-based OOD detection under distributional shift, with potential relevance to anomaly detection and monitoring settings. We do not anticipate direct negative societal impacts from this work; however, as with other machine learning methods, likelihood-based models should be applied with caution in high-stakes scenarios without sufficient validation.

\bibliography{main}
\bibliographystyle{tmlr}

\appendix
\section{Proof of Theorems}
\label{appendix:proof}

\begin{theorem}
\label{theorem:manipulate_appendix}
Let $P$, $P_\theta$, $Q$ be $d$-dimensional continuous probability distributions on $\mathbb{R}^d$. Let $X \sim Q$, $Z \sim \mathcal{N}(0, \sigma^2 I_d)$, and define $Q'$ as the distribution of $X+Z$. Then a lower bound on the expected log-likelihood difference estimated by $P_\theta$ between $P$ and $Q'$ is
\begin{align*}
\frac{d}{2}\log{(e^{\frac{2}{d}\mathbb{H}(X)}+2\pi e \sigma^2)} -\mathbb{H}(P)-D_{KL}(P||P_{\theta}).
\end{align*}
\end{theorem}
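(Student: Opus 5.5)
The plan is to start from the decomposition in Equation~\ref{eq:entropy}, applied with $Q'$ in place of $Q$:
\begin{align*}
\mathbb{E}_{\mathbf{x}\sim P}[\log P_{\theta}(\mathbf{x})] - \mathbb{E}_{\mathbf{x}\sim Q'}[\log P_{\theta}(\mathbf{x})]
= D_{KL}(Q'\|P_{\theta}) - D_{KL}(P\|P_{\theta}) + \mathbb{H}(Q') - \mathbb{H}(P).
\end{align*}
This identity needs no new ingredients. We write $\mathbb{E}_{P}[\log P_\theta] = -\mathbb{H}(P) - D_{KL}(P\|P_\theta)$ and $\mathbb{E}_{Q'}[\log P_\theta] = -\mathbb{H}(Q') - D_{KL}(Q'\|P_\theta)$ and subtract. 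We assume, as in the paper's footnote, that these quantities are finite. Note that $Q' = Q * \mathcal{N}(0,\sigma^2 I_d)$ has a smooth, strictly positive density, so $\mathbb{H}(Q')$ is well defined whenever $X$ has finite second moment, or more generally whenever $\mathbb{H}(Q')$ is finite.

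Next, drop the nonnegative term $D_{KL}(Q'\|P_\theta)\ge 0$ using Gibbs' inequality. This leaves the lower bound $\mathbb{H}(X+Z) - \mathbb{H}(P) - D_{KL}(P\|P_\theta)$. It then remains to lower bound $\mathbb{H}(X+Z)$, where $X\sim Q$ and $Z$ are independent.

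For this step I would invoke the entropy power inequality (Shannon--Stam). For independent random vectors with densities on $\mathbb{R}^d$ it states
\begin{align*}
e^{\frac{2}{d}\mathbb{H}(X+Z)} \ge e^{\frac{2}{d}\mathbb{H}(X)} + e^{\frac{2}{d}\mathbb{H}(Z)}.
\end{align*}
The Gaussian entropy is $\mathbb{H}(Z) = \frac{d}{2}\log(2\pi e\sigma^2)$, so $e^{\frac{2}{d}\mathbb{H}(Z)} = 2\pi e\sigma^2$. Taking $\frac{d}{2}\log$ of both sides gives
\begin{align*}
\mathbb{H}(X+Z)\ge \frac{d}{2}\log\bigl(e^{\frac{2}{d}\mathbb{H}(X)}+2\pi e\sigma^2\bigr).
\end{align*}
Substituting this into the previous bound yields the stated expression. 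As a side remark, since $2\pi e\sigma^2>0$, the same inequality shows $\mathbb{H}(Q')>\mathbb{H}(Q)$, which is the entropy-increase claim mentioned in the text.

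The main obstacle is not computational but concerns hypotheses. The entropy power inequality requires $X$ to have a density with well-defined differential entropy; "continuous distribution" is taken to mean this. We also need independence of $X$ and $Z$, which is implicit in the construction of $Q'$. Finally, the KL terms must be finite so that the decomposition does not produce $\infty-\infty$. I would state these as standing assumptions. One edge case: if $\mathbb{H}(X)=-\infty$, the bound still holds with $e^{\frac{2}{d}\mathbb{H}(X)}=0$, since $\mathbb{H}(X+Z)\ge\mathbb{H}(Z)$ by conditioning.
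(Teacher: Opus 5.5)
Your proposal is correct and matches the paper's proof essentially step for step: the entropy/KL decomposition with $Q'$ in place of $Q$, dropping $D_{KL}(Q'\|P_\theta)\ge 0$, and then applying the entropy power inequality with $e^{\frac{2}{d}\mathbb{H}(Z)}=2\pi e\sigma^2$. The paper leaves implicit the standing assumptions you spell out (independence of $X$ and $Z$, existence of densities, finiteness of the KL terms), and you also handle the $\mathbb{H}(X)=-\infty$ edge case, which the paper does not.
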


\begin{proof}
The difference between the log-likelihood expectations estimated by $P_\theta$ for $P$ and $Q$ can be derived as follows:

\begin{align}
\begin{split}
        &\mathbb{E}_{\mathbf{x}\sim P}[\log P_{\theta}(\mathbf{x})] - \mathbb{E}_{\mathbf{x}\sim Q}[\log P_{\theta}(\mathbf{x})]\\ 
        &= -D_{KL}(P||P_{\theta})-\mathbb{H}(P) + D_{KL}(Q||P_{\theta})+\mathbb{H}(Q) \\
        &= D_{KL}(Q||P_{\theta}) - D_{KL}(P||P_{\theta}) + \mathbb{H}(Q) -\mathbb{H}(P) \\
\end{split}
\end{align}

Therefore, if we rewrite this equation by modifying $Q$ to $Q'$,
\begin{align}
\begin{split}
        &\mathbb{E}_{\mathbf{x}\sim P}[\log P_{\theta}(\mathbf{x})] - \mathbb{E}_{\mathbf{x}\sim Q'}[\log P_{\theta}(\mathbf{x})]\\ 
        &= D_{KL}(Q'||P_{\theta}) - D_{KL}(P||P_{\theta}) + \mathbb{H}(Q') -\mathbb{H}(P) \\
        &\ge -D_{KL}(P||P_{\theta}) + \mathbb{H}(Q')-\mathbb{H}(P)
\end{split}
\end{align}
If we express $\mathbb{H}(Q')$ in terms of a random variable rather than a distribution, we get the following:
\begin{align}
\begin{split}
        \mathbb{H}(Q') = \mathbb{H}(X+Z)
\end{split}
\end{align}

The entropy power of $X$ is defined as \citep{cover1999elements}:
\begin{align}
\begin{split}
        N(X)= \frac{1}{2\pi e}e^{\frac{2}{d}\mathbb{H}(X)}
\end{split}
\end{align}
Since $X$ and $Z$ are independent, the following formula holds due to entropy power inequality:

\begin{align}
\begin{split}
        &\frac{1}{2\pi e} e^{\frac{2}{d}\mathbb{H}(X+Z)} 
        \ge \frac{1}{2\pi e}(e^{\frac{2}{d}\mathbb{H}(X)} +e^{\frac{2}{d}\mathbb{H}(Z)})\\
        &\Rightarrow e^{\frac{2}{d}\mathbb{H}(X+Z)} \ge e^{\frac{2}{d}\mathbb{H}(X)} +e^{\frac{2}{d}\mathbb{H}(Z)}\\
        &\Rightarrow \mathbb{H}(X+Z) \ge \frac{d}{2}\log{(e^{\frac{2}{d}\mathbb{H}(X)}+2\pi e \sigma^2)}\\
\end{split}
\end{align}

Therefore, we can derive as follows:
\begin{align}
\begin{split}
    &\mathbb{E}_{\mathbf{x}\sim P}[\log P_{\theta}(\mathbf{x})] - \mathbb{E}_{\mathbf{x}\sim Q'}[\log P_{\theta}(\mathbf{x})]\\ 
    &\ge -D_{KL}(P||P_{\theta}) + \mathbb{H}(Q')-\mathbb{H}(P)\\
    &\ge \frac{d}{2}\log{(e^{\frac{2}{d}\mathbb{H}(X)}+2\pi e \sigma^2)} -\mathbb{H}(P)-D_{KL}(P||P_{\theta}) \\
\end{split}
\end{align}

\end{proof}

\begin{theorem}
\label{theorem:manipulate_expand_appendix}
Let $P$, $P_\theta$, and $Q$ be $d$-dimensional continuous 
probability distributions on $\mathbb{R}^d$. Let $X \sim P, Y \sim Q, Z \sim \mathcal{N}(0, \sigma_P^2 I_d), Z' \sim \mathcal{N}(0, \sigma_Q^2 I_d)$. Let $X+Z \sim P', \quad Y+Z' \sim Q'$. Let $P$ have covariance matrix $\Sigma$. Further assume that $\sigma_P$ and $\sigma_Q$ are positive random variables, each drawn independently from a continuous probability distribution supported on $(0,\infty)$. Then, the lower bound on the expected log-likelihood difference estimated by $P_\theta$ between $P'$ and $Q'$ is

\begin{align*}
     \dfrac{d}{2}\left( \log\left(\dfrac{e^{\frac{2}{d}\mathbb{H}(Y)}+2\pi e(e^{\mathbb{E}[\log \sigma_Q^2]})}{2\pi e(\Pi^d_{i=1}(\lambda_i+\mathbb{E}[\sigma_P^2]))^{\frac{1}{d}}}\right)\right) -D_{KL}(P'||P_{\theta})
\end{align*}

where $\lambda_i$ denotes the $i$-th eigenvalue of $\Sigma$.
\end{theorem}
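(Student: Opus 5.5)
We begin from the same entropy--KL decomposition used in the proof of Theorem~\ref{theorem:manipulate_appendix}, now applied to $P'$ and $Q'$:
\begin{align*}
\mathbb{E}_{\mathbf{x}\sim P'}[\log P_\theta(\mathbf{x})]-\mathbb{E}_{\mathbf{x}\sim Q'}[\log P_\theta(\mathbf{x})]
= D_{KL}(Q'\|P_\theta)-D_{KL}(P'\|P_\theta)+\mathbb{H}(Q')-\mathbb{H}(P').
\end{align*}
Dropping the nonnegative term $D_{KL}(Q'\|P_\theta)$ gives the lower bound $\mathbb{H}(Q')-\mathbb{H}(P')-D_{KL}(P'\|P_\theta)$. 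The proof then reduces to two separate bounds: a lower bound on $\mathbb{H}(Q')$ and an upper bound on $\mathbb{H}(P')$. Throughout, $\sigma_P$ and $\sigma_Q$ are random and independent of $X$ and $Y$. We interpret $P'$ and $Q'$ as the mixtures obtained by also averaging over the noise scale, and we condition on $\sigma$ whenever that is convenient.

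For the lower bound on $\mathbb{H}(Q')$, we first use that conditioning reduces entropy: $\mathbb{H}(Y+Z')\ge \mathbb{H}(Y+Z'\mid \sigma_Q)=\mathbb{E}_{\sigma_Q}[\mathbb{H}(Y+Z'\mid\sigma_Q)]$. For each fixed $\sigma_Q$, the entropy power inequality argument from the proof of Theorem~\ref{theorem:manipulate_appendix} gives $\mathbb{H}(Y+Z'\mid\sigma_Q)\ge \frac{d}{2}\log(e^{\frac{2}{d}\mathbb{H}(Y)}+2\pi e\,\sigma_Q^2)$. The remaining task is to move the expectation inside. Write $\sigma_Q^2=e^{t}$ with $t=\log\sigma_Q^2$. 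The map $t\mapsto \log(a+b e^{t})$ is convex in $t$ for $a,b>0$, since it is a log-sum-exp. Jensen's inequality then yields $\mathbb{E}[\log(a+be^{t})]\ge \log(a+be^{\mathbb{E}[t]})$, which is exactly the term $e^{\mathbb{E}[\log\sigma_Q^2]}$ in the numerator of the statement.

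For the upper bound on $\mathbb{H}(P')$, we use the maximum-entropy property of the Gaussian. Since $X$, $Z$ and $\sigma_P$ are independent and $Z$ has mean zero, the covariance of $X+Z$, with $\sigma_P$ marginalized, is $\Sigma+\mathbb{E}[\sigma_P^2]I_d$. Hence
\[
\mathbb{H}(P')\le \tfrac12\log\bigl((2\pi e)^d\det(\Sigma+\mathbb{E}[\sigma_P^2]I_d)\bigr)=\tfrac d2\log\Bigl(2\pi e\bigl(\textstyle\prod_{i=1}^d(\lambda_i+\mathbb{E}[\sigma_P^2])\bigr)^{1/d}\Bigr).
\]
Here the determinant is obtained by diagonalizing $\Sigma$. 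Subtracting this from the lower bound on $\mathbb{H}(Q')$ and combining the two logarithms gives the stated expression minus $D_{KL}(P'\|P_\theta)$.

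The main obstacle is the treatment of random noise scales. We must make precise that $Q'$ is the $\sigma_Q$-mixture, so that $\mathbb{H}(Y+Z')\ge\mathbb{H}(Y+Z'\mid\sigma_Q)$ is legitimate. We must also verify the direction of the Jensen step, where the convexity of $t\mapsto\log(a+be^t)$ is what produces the geometric-mean form $e^{\mathbb{E}[\log\sigma_Q^2]}$ rather than $\mathbb{E}[\sigma_Q^2]$. Finally, finiteness of $\mathbb{E}[\sigma_P^2]$ and of $\mathbb{E}[\log\sigma_Q^2]$ is needed, and we would add it implicitly as an assumption.
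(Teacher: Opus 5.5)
Your proof is correct, and its skeleton matches the paper's. It uses the same entropy--KL decomposition with $D_{KL}(Q'\|P_\theta)$ dropped, and the same Gaussian maximum-entropy bound on $\mathbb{H}(P')$ with covariance $\Sigma+\mathbb{E}[\sigma_P^2]I_d$ (the paper reaches this via the law of total covariance). The difference is how you handle the random scale $\sigma_Q$ in the lower bound on $\mathbb{H}(Q')$.

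The paper applies the entropy power inequality once, to $Y$ and the scale-mixture noise $Z'$ itself (non-Gaussian, but independent of $Y$). This gives $\frac{d}{2}\log\bigl(e^{\frac{2}{d}\mathbb{H}(Y)}+e^{\frac{2}{d}\mathbb{H}(Z')}\bigr)$. It then uses the chain rule $\mathbb{H}(Z')=\mathbb{H}(Z'\mid\sigma_Q)+\mathcal{I}(Z';\sigma_Q)\ge\frac{d}{2}\bigl(\log(2\pi e)+\mathbb{E}[\log\sigma_Q^2]\bigr)$. The geometric mean $e^{\mathbb{E}[\log\sigma_Q^2]}$ appears directly because the conditional entropy averages $\log\sigma_Q^2$, so no convexity argument is needed.

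You instead condition first, which discards $\mathcal{I}(Y+Z';\sigma_Q)$. You then apply the fixed-variance Gaussian EPI of Theorem~\ref{theorem:manipulate_appendix} pointwise in $\sigma_Q$; this needs $Y$ independent of $\sigma_Q$, which you assume. Finally you recover the geometric mean by Jensen's inequality on the convex map $t\mapsto\log(a+be^{t})$, and you have the direction right.

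What each route buys:
\begin{itemize}
\item Your route reuses only the deterministic-$\sigma$ bound. It makes visible that the random-scale result is an average of Theorem~\ref{theorem:manipulate} bounds, and it explains why $e^{\mathbb{E}[\log\sigma_Q^2]}$ appears rather than $\mathbb{E}[\sigma_Q^2]$.
\item The paper's route is shorter and avoids Jensen. The cost is invoking the EPI with a non-Gaussian summand and the differential-entropy chain rule with a continuous conditioning variable.
\end{itemize}

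The two arguments discard different slack: $\mathcal{I}(Z';\sigma_Q)$ for the paper, versus $\mathcal{I}(Y+Z';\sigma_Q)$ plus a Jensen gap for you. Their intermediate bounds are therefore not ordered in general, but both collapse to exactly the stated expression. The finiteness of $\mathbb{E}[\sigma_P^2]$ and $\mathbb{E}[\log\sigma_Q^2]$ that you add is also implicitly required by the paper's argument.
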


\begin{proof}
Since $Y$ and $Z'$ are independent, the following formula holds due to entropy power inequality:

\begin{align}
    \begin{split}
        \mathbb{H}(Q')=\mathbb{H}(Y+Z') \ge \frac{d}{2}\log(e^{\frac{2}{d}\mathbb{H}(Y)}+e^{\frac{2}{d}\mathbb{H}(Z')})
    \end{split}
\end{align}

We can expand $\mathbb{H}(Z')$ as follows using the property of conditional entropy:

\begin{align}
    \begin{split}
        \mathbb{H}(Z') &=\mathbb{E}\left[\mathbb{H}(Z'|\sigma_Q) \right] +\mathcal I(Z';\sigma_Q)\\
        &=\frac{d}{2}(\log(2\pi e)+\mathbb{E}[\log \sigma_Q^2]) + \mathcal I(Z';\sigma_Q) \:\: (\because \: \mathbb{H}(Z'|\sigma_Q)=\frac{d}{2}\log(2\pi e\sigma_Q^2))
    \end{split}
\end{align}

Therefore, we can rewrite lower bound of $\mathbb{H}(Y+Z')$ as follows:

\begin{align}
    \begin{split}
        \mathbb{H}(Y+Z') &\ge \frac{d}{2}\log(e^{\frac{2}{d}\mathbb{H}(Y)}+e^{\frac{2}{d}\mathbb{H}(Z')})\\
        &= \frac{d}{2}\log(e^{\frac{2}{d}\mathbb{H}(Y)}+e^{\log(2\pi e)+\mathbb{E}[\log \sigma_Q^2] + \frac{2}{d}\mathcal I(Z';\sigma_Q)}) \\
        &\ge \frac{d}{2}\log(e^{\frac{2}{d}\mathbb{H}(Y)}+e^{\log(2\pi e)+\mathbb{E}[\log \sigma_Q^2]}) \:\: (\because \:  \mathcal{I}(Z';\sigma_Q) \ge0) \\
        &= \frac{d}{2}\log(e^{\frac{2}{d}\mathbb{H}(Y)}+2\pi e(e^{\mathbb{E}[\log \sigma_Q^2]}))
    \end{split}
\end{align}

Hence, we can derive:

\begin{align}
\begin{split}
    &\mathbb{E}_{\mathbf{x}\sim P'}[\log P_{\theta}(\mathbf{x})] - \mathbb{E}_{\mathbf{x}\sim Q'}[\log P_{\theta}(\mathbf{x})]\\ 
    &\ge -D_{KL}(P'||P_{\theta}) + \mathbb{H}(Q')-\mathbb{H}(P')\\
    &= \mathbb{H}(Y+Z') -\mathbb{H}(P')-D_{KL}(P'||P_{\theta}) \\
    &=  \frac{d}{2}\log(e^{\frac{2}{d}\mathbb{H}(Y)}+2\pi e(e^{\mathbb{E}[\log \sigma_Q^2]}))-\mathbb{H}(P')-D_{KL}(P'||P_{\theta}) 
\end{split}
\end{align}

Since $X$ and $Z$ are independent, the following holds due to the maximum entropy property of Gaussian distribution:

\begin{align}
\begin{split}
        \mathbb{H}(P') &\le \mathbb{H}(\mathcal{N}(0, \text{Cov}(X+Z))\\
        &=\mathbb{H}(\mathcal{N}(0, \text{Cov}(X)+\text{Cov}(Z))\\     
\end{split}
\end{align}

Because of law of total covariance, we can derive:
\begin{align}
\begin{split}
    \mathbb{H}(P') &\le \mathbb{H}(\mathcal{N}(0, \text{Cov}(X+Z)))\\
    &=\mathbb{H}(\mathcal{N}(0, \text{Cov}(X)+\text{Cov}(Z))\\
    &=\mathbb{H}(\mathcal{N}(0, \Sigma + \mathbb{E}[\text{Cov}(Z|\sigma_P)] + \text{Cov}(\mathbb{E}[Z|\sigma_P])) \:\: (\because \text{Law of Total Covariance}) \\
    &= \mathbb{H}(\mathcal{N}(0, \Sigma+\mathbb{E}[\sigma_P^2]I_d)) \:\: (\because \mathbb{E}[Z|\sigma_P]=0, \text{Cov}(Z|\sigma_P)=\sigma_P^2I_d)\\
     &=\frac{1}{2}\log((2\pi e)^d\det( \Sigma+\mathbb{E}[\sigma_P^2]I_d))
\end{split}    
\end{align}

Therefore, we can derive as follows:

\begin{align}
\begin{split}
    &\mathbb{E}_{\mathbf{x}\sim P'}[\log P_{\theta}(\mathbf{x})] - \mathbb{E}_{\mathbf{x}\sim Q'}[\log P_{\theta}(\mathbf{x})]\\
    &\ge \frac{d}{2}\left(\log(e^{\frac{2}{d}\mathbb{H}(Y)}+2\pi e(e^{\mathbb{E}[\log \sigma_Q^2]}))\right)-\mathbb{H}(P')-D_{KL}(P'||P_{\theta})\\
    &\ge \frac{d}{2}\left(\log(e^{\frac{2}{d}\mathbb{H}(Y)}+2\pi e(e^{\mathbb{E}[\log \sigma_Q^2]}))\right)-\frac{1}{2}\log((2\pi e)^d\det( \Sigma+\mathbb{E}[\sigma_P^2]I_d))-D_{KL}(P'||P_{\theta})\\
    &= \frac{d}{2}\left(\log(e^{\frac{2}{d}\mathbb{H}(Y)}+2\pi e(e^{\mathbb{E}[\log \sigma_Q^2]}))\right)-\frac{d}{2}\log(2\pi e(\det(\Sigma+\mathbb{E}[\sigma_P^2]I_d))^{\frac{1}{d}})-D_{KL}(P'||P_{\theta})\\
    &= \frac{d}{2}\left( \log(e^{\frac{2}{d}\mathbb{H}(Y)}+2\pi e(e^{\mathbb{E}[\log \sigma_Q^2]})) -\log(2\pi e(\det(\Sigma+\mathbb{E}[\sigma_P^2]I_d))^{\frac{1}{d}})\right) -D_{KL}(P'||P_{\theta}) \\
    &= \frac{d}{2}\left( \log\left(\frac{e^{\frac{2}{d}\mathbb{H}(Y)}+2\pi e(e^{\mathbb{E}[\log \sigma_Q^2]})}{2\pi e(\det(\Sigma+\mathbb{E}[\sigma_P^2]I_d))^{\frac{1}{d}}}\right)\right) -D_{KL}(P'||P_{\theta}) \\
    &= \frac{d}{2}\left( \log\left(\frac{e^{\frac{2}{d}\mathbb{H}(Y)}+2\pi e(e^{\mathbb{E}[\log \sigma_Q^2]})}{2\pi e(\Pi^d_{i=1}(\lambda_i+\mathbb{E}[\sigma_P^2]))^{\frac{1}{d}}}\right)\right) -D_{KL}(P'||P_{\theta}) \\
\end{split}
\end{align}

\end{proof}

\section{Details of Section \ref{sec:experiment}}
\label{appendix:detail_section_5}
This section provides detailed information about the experimental implementation in Section \ref{sec:experiment}.

For the density estimation model used in all comparison methods except LID, we employed ResFlow and utilized the library implemented by \citet{Stimper2023}. As the optimizer, we used Adam \citep{adam} with a weight decay of 1e-4 and a batch size of 64, and the initial learning rate was set to 1e-4. For the learning rate scheduler, we adopted CosineAnnealingWarmRestarts \citep{loshchilov2017sgdr}, setting the minimum learning rate to 1e-6. The cycle of decaying and increasing the learning rate was initially configured to half of the total number of epochs. However, since this setting caused training instability, we modified it by setting the entire number of epochs as a single cycle. In the ResFlow architecture, the number of multiscale blocks was fixed at three. For CIFAR-10/100, SVHN, and CelebA, the latent dimensionality was set to 128 and each multiscale block comprised 12 layers, while for MNIST and FashionMNIST the latent dimensionality and the number of layers per block were set to 64 and 5, respectively. In addition, the number of training epochs was set to 100 for CIFAR-10/100. The rectification threshold of ReAct $\beta$ was determined by randomly sampling 1,000 in-distribution training embedding vectors and setting the cutoff to their 90th percentile value.

When calculating the likelihood of input data, we calculated the log-likelihood using the same density model across all implemented methods. For the complexity-based method, we employed the PNG compressor available in the Python OpenCV library \citep{bradski2000opencv} to compute the number of bits, which was then used to perform the complexity calculation. In the likelihood ratio method, the background model was trained with the same hyperparameters as those used for estimating the density of the original input data, while the hyperparameter $\alpha$, which controls the probability of pixel perturbation, was set to 0.2. For the method utilizing Gaussian Mixture Models (GMMs), we implemented the model using scikit-learn \citep{pedregosa2011scikit}, with the number of mixture components fixed at 3, consistent with the original paper. Pytorch 2.4.1 \citep{paszke2019pytorch} was used to implement the density estimation model, and the computing environment used in the experiment was AMD Ryzen 9 7950X for CPU and RTX 4090Ti for GPU.

\section{Ablation Study of SPEM}
\label{appendix:ablation_SPEM}

\subsection{Meaning of Likelihood} 
Building upon the empirical success of SPEM and the entropy-based theoretical framework, a natural question arises: \textit{what does model likelihood measure in practice?} As shown in Section \ref{sec:experiment}, our results with SPEM indicate that regions assigned high likelihood are often those where a lower-entropy distribution dominates the comparison, rather than regions defined by semantic similarity. Thus, the likelihood value itself does not encode semantics; instead, for a given input it tends to reflect how concentrated the underlying distribution is. This observation is consistent with the phenomenon of likelihood inversion, whereby an OOD—despite being semantically distinct from the ID—can attain higher likelihood values when its entropy is sufficiently lower.

\subsection{Sensitivity of Likelihood} To examine the performance trend with respect to changes in the entropy of the OOD , we conducted the experiment illustrated in Figure \ref{fig:no_semantic}. After training Glow on a real image dataset, we set the OOD to a Gaussian $\mathcal{N}(0,\sigma^2 I_d)$ and evaluated AUROC using only log-likelihood for detection as $\sigma$ varied. Unlike the SPEM framework, which injects noise into OOD samples composed of real images, this experiment directly manipulates the entropy of an OOD consisting purely of noise. As shown in Figure \ref{fig:no_semantic}, the AUROC for SVHN rises steeply at small values of $\sigma$, reaching high performance earlier than CIFAR-10 and CIFAR-100. Interestingly, CelebA follows a pattern much closer to SVHN than to CIFAR-10: its AUROC also escalates at similar noise levels, despite CelebA having entropy comparable to CIFAR-10. This discrepancy indicates that the observed sensitivity cannot be explained by entropy differences alone. While the convergence of AUROC toward 1 as $\sigma$ increases across all datasets suggests that relative entropy differences strongly affect likelihood ordering, this phenomenon can be further understood through the role of the KL-divergence $D_{KL}(Q||P_\theta)$ in Equation \ref{eq:entropy}. In future work, we expect that if a likelihood-based detection methodology that accounts for not only the entropic aspect of the ID but also the KL-divergence term $D_{\mathrm{KL}}(Q||P_\theta)$, it will yield more reliable detection performance with density estimation models. Since computing $D_{\mathrm{KL}}(Q\|P_\theta)$ is generally infeasible in practice, such approaches would need to exploit prior knowledge about the specific cases where likelihood-based methods fail as inductive biases or auxiliary signals to guide the generative model’s design.

\begin{figure}[h]
    \centering
    \includegraphics[width=0.4\linewidth]{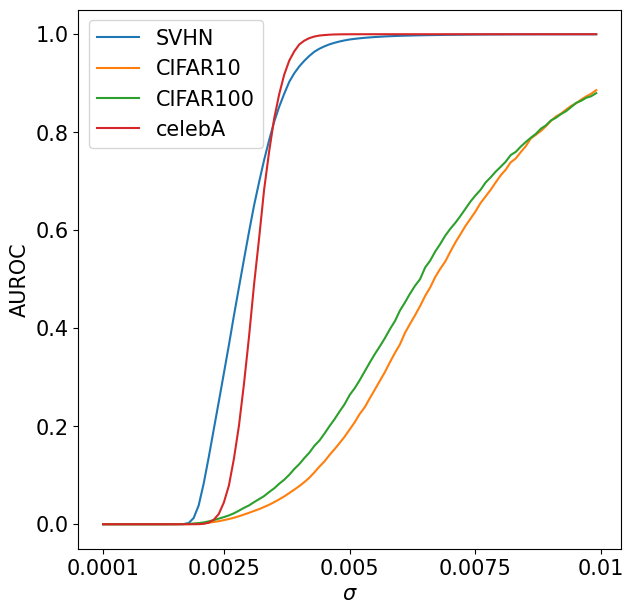}
    \caption{AUROC for IDs composed of real images as a function of $\sigma$ when the OOD is  $\mathcal{N}(0, \sigma^2I_d)$.}
    \label{fig:no_semantic}
\end{figure}

\subsection{Hyperparameter Sensitivity}
\label{appendix:hyperparam_sen}
In SPEM, we conducted an experiment to examine how adjusting the global entropy manipulation intensity $\alpha$ affects OOD detection performance, as reported in Figure~\ref{fig:hyperparameter_sensitivity}. According to Figure~\ref{fig:hyperparameter_sensitivity}, when $\alpha$ is small, the entropy gap between ID/OOD samples is not sufficiently enlarged, resulting in relatively low AUROC values. However, as $\alpha$ increases, the AUROC steadily improves and eventually reaches a plateau. This experiment demonstrates that by setting $\alpha$ to a sufficiently large value (e.g., $\alpha=0.4$), the performance becomes robust to small variations around this range while achieving high AUROC. Furthermore, we observed that excessively large values of $\alpha$ introduce numerical stability issues in density estimation. Therefore, we recommend setting $\alpha$ to an adequately large but not excessive value, since additional performance gains beyond a certain threshold are limited.

\begin{figure}[h]
    \centering
    \includegraphics[width=0.4\linewidth]{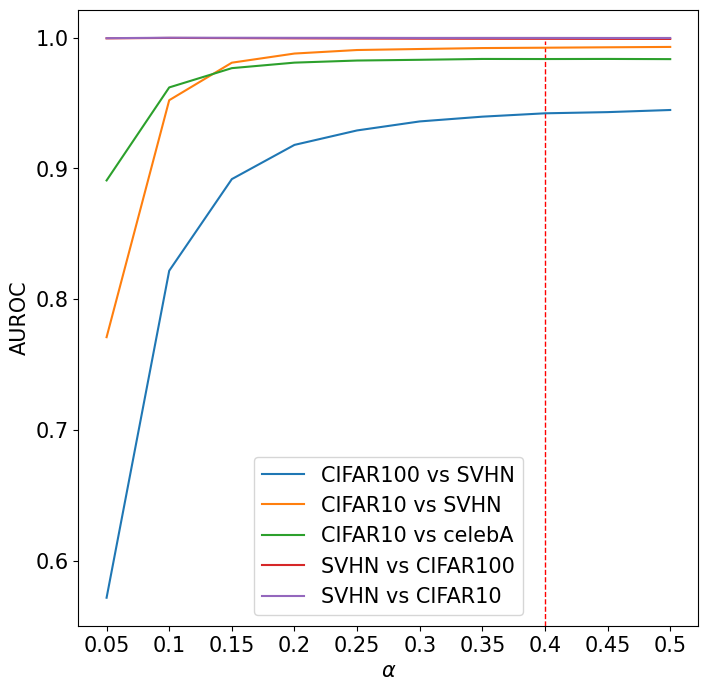} 
    \caption{OOD detection performance according to $\alpha$, which controls the intensity of entropy manipulation. The legend indicates the experimental dataset pairs, with the former indicating in-distribution and the latter indicating OOD. All experiments were set up identically to those in Table \ref{tab:main_exp}.}
    \label{fig:hyperparameter_sensitivity}
\end{figure}

\subsection{Effect of Feature Extractor}
\label{appendix:ablation_extractor}
\begin{table}[!h]
\caption{OOD detection performance using ResFlow according to changes in feature extractor.}
\centering
\label{tab:feature_extractor_resflow}
\small
\setlength{\tabcolsep}{4.5pt}
\renewcommand{\arraystretch}{1.12}
\resizebox{\textwidth}{!}{%
\begin{tabular}{lcccccccccc}
\hline
$P$ (In)  & CIFAR-10 & CIFAR-100 & CelebA & CIFAR-10 & FashionMNIST & SVHN & SVHN & SVHN & CelebA & MNIST \\
$Q$ (Out) & SVHN     & SVHN      & SVHN   & CelebA   & MNIST        & CIFAR-10 & CIFAR-100 & CelebA & CIFAR-10 & FashionMNIST \\
\hline
ResNet-50  & 0.9839 & 0.9435 & 0.9999 & 0.9547 & \textbf{0.9643} & 0.9991 & 0.9984 & 0.9995 & 0.9997 & 1.0000 \\
ResNet-101 & 0.9894 & \textbf{0.9536} & \textbf{1.0000} & 0.9691 & 0.8623 & 0.9996 & \textbf{0.9992} & 0.9999 & 0.9999 & \textbf{0.9999} \\
ResNet-152 & \textbf{0.9913} & 0.9359 & \textbf{1.0000} & \textbf{0.9830} & 0.9207 & \textbf{0.9997} & \textbf{0.9992} & \textbf{1.0000} & \textbf{0.9999} & \textbf{1.0000} \\
\hline
\end{tabular}%
}
\end{table}

\begin{table}[!h]
\caption{OOD detection performance using Glow according to changes in feature extractor.}
\centering
\label{tab:feature_extractor_glow}
\small
\setlength{\tabcolsep}{4.5pt}
\renewcommand{\arraystretch}{1.12}
\resizebox{\textwidth}{!}{%
\begin{tabular}{lcccccccccc}
\hline
$P$ (In)  & CIFAR-10 & CIFAR-100 & CelebA & CIFAR-10 & FashionMNIST & SVHN & SVHN & SVHN & CelebA & MNIST \\
$Q$ (Out) & SVHN     & SVHN      & SVHN   & CelebA   & MNIST        & CIFAR-10 & CIFAR-100 & CelebA & CIFAR-10 & FashionMNIST \\
\hline
ResNet-50  & 0.9583 & 0.8712 & 0.9634 & 0.9498 & \textbf{0.9774} & 0.9965 & 0.9959 & 0.9975 & 0.9628 & \textbf{1.0000} \\
ResNet-101 & 0.9722 & \textbf{0.8897} & \textbf{0.9705} & 0.9621 & 0.8887 & 0.9979 & \textbf{0.9977} & 0.9992 & 0.9676 & 0.9999 \\
ResNet-152 & \textbf{0.9768} & 0.8655 & 0.9697 & \textbf{0.9782} & 0.9424 & \textbf{0.9985} & 0.9975 & \textbf{0.9993} & \textbf{0.9701} & \textbf{1.0000} \\
\hline
\end{tabular}%
}
\end{table}

We conducted experiments on the trend of OOD detection performance by changing the feature extractor that determines the similarity value $\lambda$ with embedding vector in memory bank, which controls the strength of entropy manipulation. As feature extractors, we used ResNet-50/101/152 pretrained on ImageNet, and the results are reported in Table~\ref{tab:feature_extractor_resflow} and \ref{tab:feature_extractor_glow}. According to Table~\ref{tab:feature_extractor_resflow} and \ref{tab:feature_extractor_glow}, ResNet-152, which has greater expressive power in our ablation study, generally achieved higher performance and thus we adopted it as the feature extractor in our main experiments. This can be interpreted as follows: the stronger the expressive power, the more pronounced the difference in similarity between ID/OOD, leading to the results shown in experiments. Based on this finding, one way to further improve the proposed SPEM is to fine-tune a model pretrained on the in-distribution dataset and then plug the tuned model into SPEM. Although this approach requires additional computational resources, it could yield even better OOD detection performance.

\subsection{Effect of ReAct}
\label{appendix:ablation_react}

\begin{table}[!h]
\caption{Performance difference in OOD detection using ResFlow with and without the ReAct module.}
\centering
\label{tab:react_resflow}
\small
\setlength{\tabcolsep}{4.5pt}
\renewcommand{\arraystretch}{1.12}
\resizebox{\textwidth}{!}{%
\begin{tabular}{lcccccccccc}
\hline
$P$ (In)  & CIFAR-10 & CIFAR-100 & CelebA & CIFAR-10 & FashionMNIST & SVHN & SVHN & SVHN & CelebA & MNIST \\
$Q$ (Out) & SVHN     & SVHN      & SVHN   & CelebA   & MNIST        & CIFAR-10 & CIFAR-100 & CelebA & CIFAR-10 & FashionMNIST \\
\hline
w/ ReAct  & \textbf{0.9913} & \textbf{0.9359} & \textbf{1.0000} & 0.9830 & \textbf{0.9207} & \textbf{0.9997} & \textbf{0.9992} & \textbf{1.0000} & \textbf{0.9999} & \textbf{1.0000} \\
w/o ReAct & 0.9536 & 0.8471 & 0.9999 & \textbf{0.9839} & 0.9042 & 0.9994 & 0.9988 & 0.9999 & \textbf{0.9999} & \textbf{1.0000} \\
\hline
\end{tabular}%
}
\end{table}

\begin{table}[!h]
\caption{Performance difference in OOD detection using Glow with and without the ReAct module.}
\centering
\label{tab:react_glow}
\small
\setlength{\tabcolsep}{4.5pt}
\renewcommand{\arraystretch}{1.12}
\resizebox{\textwidth}{!}{%
\begin{tabular}{lcccccccccc}
\hline
$P$ (In)  & CIFAR-10 & CIFAR-100 & CelebA & CIFAR-10 & FashionMNIST & SVHN & SVHN & SVHN & CelebA & MNIST \\
$Q$ (Out) & SVHN     & SVHN      & SVHN   & CelebA   & MNIST        & CIFAR-10 & CIFAR-100 & CelebA & CIFAR-10 & FashionMNIST \\
\hline
w/ ReAct  & \textbf{0.9768} & \textbf{0.8655} & 0.9697 & 0.9782 & \textbf{0.9424} & 0.9985 & 0.9975 & 0.9993 & 0.9701 & \textbf{1.0000} \\
w/o ReAct & 0.9116 & 0.7570 & \textbf{0.9806} & \textbf{0.9795} & 0.9284 & \textbf{0.9990} & \textbf{0.9983} & \textbf{0.9998} & \textbf{0.9886} & \textbf{1.0000} \\
\hline
\end{tabular}%
}
\end{table}

We incorporated ReAct, a method that rectifies the activations of the penultimate layer in pretrained models, into our proposed framework SPEM to enhance OOD detection performance. The motivation behind this integration is that we expected that the similarity gap between test vectors sampled from ID/OOD and the in-distribution training vectors stored in the memory bank would be amplified, thereby improving detection performance. To validate this hypothesis, we measured the performance of SPEM with and without the ReAct module, and the results are reported in Table \ref{tab:react_resflow} and \ref{tab:react_glow}. Table \ref{tab:react_resflow} demonstrates that incorporating ReAct into SPEM with ResFlow consistently improves performance across the majority of dataset pairs. In contrast, Table \ref{tab:react_glow} shows that for dataset pairs where likelihood-based methods already perform well, the use of ReAct within SPEM results in only a marginal decrease in performance. Nevertheless, in scenarios where likelihood-based approaches fail, ReAct provides substantial improvements, with instances of performance degradation remaining relatively minor. This observation suggests that ReAct is particularly beneficial in hard dataset pairs (i.e., when the entropy of the in-distribution exceeds that of the OOD distribution). Consistent with the findings reported in the original ReAct paper, these results provide evidence that rectification amplifies the separation between in-distribution and OOD embedding vectors, thereby highlighting the difference in similarity and ultimately improving detection performance.

\subsection{Performance Comparison of SPEM using Glow}
\label{appendix:ablation_glow}

To examine the compatibility and performance consistency of SPEM with different density estimation models, we additionally employed Glow to estimate the in-distribution and compared its performance. Most experimental settings were aligned with those of ResFlow, with the exception that for CIFAR-10/100, SVHN, and CelebA, the latent dimensionality was set to 256 and each multiscale block comprised 16 layers, while for MNIST and FashionMNIST the latent dimensionality and the number of layers per block were set to 64 and 5, respectively. The number of training epochs was fixed at 300 for CIFAR-10/100, SVHN, and CelebA, and 80 for MNIST and FashionMNIST. For SPEM, $\alpha$ was set to 0.3, and for SPEM-noise it was set to 0.1. The experimental results are reported in Table~\ref{tab:main_exp_glow}.

\begin{table*}[!h]
\caption{OOD detection performance on real image dataset using Glow. The five dataset pairs on the left are widely recognized as cases where likelihood-based OOD detection fails, and the remaining pairs do not exhibit problematic behavior in the estimated likelihood. For each dataset pair, the values that ranked within the top two across all compared models are highlighted in bold.}
\centering
\label{tab:main_exp_glow}
\small
\setlength{\tabcolsep}{4.2pt}
\renewcommand{\arraystretch}{1.08}
\resizebox{\textwidth}{!}{%
\begin{tabular}{lcccccccccc}
\hline
$P$ (In)  & CIFAR-10 & CIFAR-100 & CelebA & CIFAR-10 & FashionMNIST & SVHN & SVHN & SVHN & CelebA & MNIST \\
$Q$ (Out) & SVHN     & SVHN      & SVHN   & CelebA   & MNIST        & CIFAR-10 & CIFAR-100 & CelebA & CIFAR-10 & FashionMNIST \\
\hline
Likelihood           & 0.0807 & 0.0939 & 0.1292 & 0.5144 & 0.7359 & 0.9919 & 0.9906 & 0.9991 & 0.7408 & 0.9997 \\
Complexity           & 0.8718 & 0.8319 & 0.9737 & 0.5566 & 0.8604 & 0.5270 & 0.6020 & 0.6163 & 0.7564 & 0.9712 \\
Typicality ($\sqrt{d}$) & 0.6530 & 0.6764 & \textbf{0.9998} & 0.6297 & 0.7365 & 0.9783 & 0.9744 & 0.9976 & 0.9253 & 0.9997 \\
Typicality (Entropy) & 0.4929 & 0.4762 & 0.7300 & 0.4283 & 0.5878 & 0.9894 & 0.9876 & 0.9990 & 0.7237 & 0.9997 \\
Likelihood Ratio     & 0.8477 & 0.6020 & 0.8550 & 0.7334 & \textbf{0.9616} & 0.1078 & 0.1398 & 0.1548 & 0.5146 & 0.9875 \\
GMM                  & 0.8744 & 0.6658 & \textbf{0.9998} & 0.6174 & 0.7964 & 0.9762 & 0.9751 & 0.9976 & 0.9336 & 0.9996 \\
LID                  & 0.6573 & \textbf{0.9330} & 0.9490 & 0.6550 & \textbf{0.9510} & 0.9790 & 0.9860 & 0.9960 & 0.9390 & \textbf{1.0000} \\
\hline
SPEM                 & \textbf{0.9768} & 0.8655 & 0.9697 & \textbf{0.9782} & 0.9424 & \textbf{0.9985} & \textbf{0.9975} & \textbf{0.9993} & \textbf{0.9701} & \textbf{1.0000} \\
SPEM-noise           & \textbf{0.9916} & \textbf{0.9343} & \textbf{0.9999} & \textbf{0.9830} & 0.9424 & \textbf{0.9996} & \textbf{0.9988} & \textbf{0.9999} & \textbf{0.9993} & \textbf{1.0000} \\
\hline
\end{tabular}%
}
\end{table*}

As shown in Table~\ref{tab:main_exp_glow}, SPEM with Glow consistently outperforms other comparison methods across most ID/OOD pairs. Methods that rely solely on likelihood yield extremely low AUROC when the in-distribution has higher entropy than the OOD distribution (e.g., when SVHN is used as the in-distribution). In these cases, both the comparison methods and our proposed approach clearly surpass the likelihood-only baseline. Conversely, when the in-distribution entropy is lower than that of the OOD, some comparison methods perform worse than the likelihood baseline. Nevertheless, our approach consistently outperforms all comparison methods, even in these challenging scenarios. In addition, we observed that the performance gap between SPEM and SPEM-noise was larger when using Glow compared to ResFlow. This result suggests that the probability models estimated by different density estimators are inherently distinct. Although SPEM using Glow achieved slightly lower performance than when using ResFlow, the fact that it still outperformed most comparison models demonstrates that SPEM’s superiority in likelihood-based OOD detection is largely agnostic to the choice of the underlying density model.

\subsection{Analysis of SPEM-noise} 
\label{appendix:ablation_spem_noise}
In the OOD detection results reported in Tables~\ref{tab:main_exp} and~\ref{tab:main_exp_glow} SPEM-noise performs comparably to and in several cases outperforms SPEM. SPEM-noise disregards the original test data and instead measures the log-likelihood by feeding into the density estimation model only Gaussian noise, the magnitude of which is determined by the maximum similarity between the test vector and the in-distribution embedding vectors stored in the memory bank. Although this input contains no substantive semantic information, SPEM-noise nevertheless achieves performance comparable to or even superior to SPEM, which may appear counterintuitive. We account for this phenomenon by analyzing the increment in the expected log-likelihoods of ID/OOD samples under SPEM-noise and SPEM. Let the in-distribution be denoted by $X \sim P$, OOD by $Y \sim Q$ that has covariance matrix $\Sigma_Q$, the noise distribution applied to the in-distribution under SPEM by $Z \sim \mathcal{N}(0,\sigma_P^2I_d)$, and the noise distribution applied to the OOD by $Z' \sim \mathcal{N}(0,\sigma_Q^2I_d)$.  Also, we denote the density estimation model by $P_{\theta}$ and the manipulated ID/OOD induced by the noise be denoted as $X+Z \sim P'$ and $Y+Z' \sim Q'$, respectively. In the implementation of SPEM, the noise is constructed to depend on similarity, so that $\sigma^2_P$ and $\sigma^2_Q$ in practice follow specific distributions. However, for analytical tractability, we set them as constants in our analysis in this section. Then, the difference in the expected log-likelihoods computed under SPEM can be derived as follows:

\begin{align*}
    \begin{split}
     &\mathbb{E}_{\mathbf{x}\sim P'}[\log P_{\theta}(\mathbf{x})] - \mathbb{E}_{\mathbf{x}\sim Q'}[\log P_{\theta}(\mathbf{x})]\\
      &= D_{KL}(Q'||P_\theta) -D_{KL}(P'||P_{\theta}) + \mathbb{H}(Q')-\mathbb{H}(P')\\
      &= D_{KL}(Y+Z' || P_\theta) - D_{KL}(X+Z||P_\theta) + \mathbb{H}(Y+Z')-\mathbb{H}(X+Z)
\end{split}
\end{align*}

In addition, for the case of SPEM-noise, the difference in the expected log-likelihoods can be derived as follows:

\begin{align*}
    \begin{split}
     &\mathbb{E}_{\mathbf{x}\sim Z}[\log P_{\theta}(\mathbf{x})] - \mathbb{E}_{\mathbf{x}\sim Z'}[\log P_{\theta}(\mathbf{x})]\\
      &= D_{KL}(Z'||P_\theta) -D_{KL}(Z||P_{\theta}) + \mathbb{H}(Z')-\mathbb{H}(Z)\\
\end{split}
\end{align*}

Therefore, under SPEM-noise, the log-likelihood difference increment relative to that of SPEM denoted by $\Delta$, can be derived as follows:
\begin{align*}
    \begin{split}
    \Delta &= \mathbb{E}_{\mathbf{x}\sim Z}[\log P_{\theta}(\mathbf{x})] - \mathbb{E}_{\mathbf{x}\sim Z'}[\log P_{\theta}(\mathbf{x})]- (\mathbb{E}_{\mathbf{x}\sim P'}[\log P_{\theta}(\mathbf{x})] - \mathbb{E}_{\mathbf{x}\sim Q'}[\log P_{\theta}(\mathbf{x})])\\
      &= D_{KL}(Z'||P_\theta) -D_{KL}(Z||P_{\theta}) + \mathbb{H}(Z')-\mathbb{H}(Z)\\
      &-(D_{KL}(Y+Z' || P_\theta) - D_{KL}(X+Z||P_\theta) + \mathbb{H}(Y+Z')-\mathbb{H}(X+Z))\\
      &=D_{KL}(Z'||P_\theta) -D_{KL}(Z||P_{\theta}) - (D_{KL}(Y+Z' || P_\theta) - D_{KL}(X+Z||P_\theta))\\ 
      &+\mathbb{H}(Z')-\mathbb{H}(Z)-(\mathbb{H}(Y+Z')-\mathbb{H}(X+Z))
\end{split}
\end{align*}

We decompose this into the increment of the entropy term  denoted by $\Delta_{E}$, and the increment of the KL-divergence denoted by $\Delta_{KL}$, and express it using the following notation:

\begin{align*}
    \begin{split}
    &\Delta_{KL}
      =D_{KL}(Z'||P_\theta) -D_{KL}(Z||P_{\theta}) - (D_{KL}(Y+Z' || P_\theta) - D_{KL}(X+Z||P_\theta))\\
      &\Delta_{E} =\mathbb{H}(Z')-\mathbb{H}(Z)-(\mathbb{H}(Y+Z')-\mathbb{H}(X+Z))
\end{split}
\end{align*}

To analyze these increments, we first examine the conditions under which $\Delta_{E}$ becomes positive, so we derive Theorem \ref{theorem:condition_of_entropy_increment}.
\renewcommand{\thetheorem}{\thesection.\arabic{theorem}}
\begin{theorem}
\label{theorem:condition_of_entropy_increment}
Let $P$, $P_\theta$, and $Q$ be $d$-dimensional continuous 
probability distributions on $\mathbb{R}^d$. Let
\[
X \sim P, \quad Y \sim Q, \quad 
Z \sim \mathcal{N}(0, \sigma_P^2 I_d), \quad 
Z' \sim \mathcal{N}(0, \sigma_Q^2 I_d),
\]
and define
\[
X+Z \sim P', \quad Y+Z' \sim Q'.
\]
 Let $Q$ have covariance matrix $\Sigma_Q$. Then, the sufficient condition of $\Delta_{E}>0$ is
\[
\frac{\sigma^2_Q}{\sigma^2_P}  > \frac{2\pi e\left(\text{tr}(\Sigma_Q)\right)}{de^{\frac{2}{d}\mathbb{H}(X)}}.
\]
\end{theorem}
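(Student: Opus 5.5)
The plan is to lower-bound $\Delta_E$ by an explicit expression in $\sigma_P^2,\sigma_Q^2,\mathbb{H}(X),\mathrm{tr}(\Sigma_Q)$ and then check when that expression is positive. Recall
\[
\Delta_E=\mathbb{H}(Z')-\mathbb{H}(Z)-\mathbb{H}(Y+Z')+\mathbb{H}(X+Z).
\]
The four terms are handled separately: the two pure-noise entropies are computed exactly, $\mathbb{H}(X+Z)$ is bounded from below, and $\mathbb{H}(Y+Z')$ is bounded from above.

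\textbf{Pure-noise terms.} Since $Z,Z'$ are isotropic Gaussians with constant variances, we have $\mathbb{H}(Z')-\mathbb{H}(Z)=\frac{d}{2}\log(\sigma_Q^2/\sigma_P^2)$ exactly.

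\textbf{Lower bound on $\mathbb{H}(X+Z)$.} As in the proof of Theorem~\ref{theorem:manipulate_appendix}, $X$ and $Z$ are independent, so the entropy power inequality gives
\[
\mathbb{H}(X+Z)\ge \frac{d}{2}\log\left(e^{\frac{2}{d}\mathbb{H}(X)}+2\pi e\sigma_P^2\right).
\]

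\textbf{Upper bound on $\mathbb{H}(Y+Z')$.} I would use the Gaussian maximum-entropy argument from the proof of Theorem~\ref{theorem:manipulate_expand_appendix}, now with constant $\sigma_Q$:
\[
\mathbb{H}(Y+Z')\le \frac12\log\left((2\pi e)^d\det(\Sigma_Q+\sigma_Q^2I_d)\right).
\]
To obtain a trace, I then apply AM--GM to the eigenvalues $\mu_i+\sigma_Q^2$ of $\Sigma_Q+\sigma_Q^2I_d$:
\[
\det(\Sigma_Q+\sigma_Q^2I_d)^{1/d}\le \frac{\mathrm{tr}(\Sigma_Q)}{d}+\sigma_Q^2 .
\]
Hence $\mathbb{H}(Y+Z')\le \frac d2\log\left(2\pi e\left(\frac{\mathrm{tr}(\Sigma_Q)}{d}+\sigma_Q^2\right)\right)$.

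\textbf{Combining.} Putting the three pieces together,
\[
\Delta_E\ge \frac d2\log\frac{\sigma_Q^2\left(e^{\frac2d\mathbb{H}(X)}+2\pi e\sigma_P^2\right)}{\sigma_P^2\,2\pi e\left(\frac{\mathrm{tr}(\Sigma_Q)}{d}+\sigma_Q^2\right)} .
\]
This bound is positive exactly when the argument of the logarithm exceeds $1$. Expanding that inequality, the cross term $2\pi e\,\sigma_P^2\sigma_Q^2$ appears on both sides and cancels. What remains is
\[
\sigma_Q^2e^{\frac2d\mathbb{H}(X)}>2\pi e\,\sigma_P^2\,\frac{\mathrm{tr}(\Sigma_Q)}{d},
\]
which is precisely the stated condition after dividing through by $\sigma_P^2 e^{\frac2d\mathbb{H}(X)}$.

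\textbf{Main obstacle.} The delicate step is choosing the upper bound for $\mathbb{H}(Y+Z')$ so that the final condition involves only $\mathrm{tr}(\Sigma_Q)$. Using the determinant bound followed by AM--GM is what produces the additive $\sigma_Q^2$ term needed for the cross terms to cancel cleanly; a looser bound would not reduce to the stated ratio. I would also state the implicit assumptions that $\Sigma_Q$ exists (finite second moment) and that $\mathbb{H}(X)$ is finite, so that the entropy power inequality and the maximum-entropy bound both apply.
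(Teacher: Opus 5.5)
Your proposal is correct and follows the paper's proof essentially step for step. Both use the exact noise-entropy difference $\frac{d}{2}\log(\sigma_Q^2/\sigma_P^2)$, the entropy power inequality lower bound on $\mathbb{H}(X+Z)$, the Gaussian maximum-entropy upper bound on $\mathbb{H}(Y+Z')$ via $\det(\Sigma_Q+\sigma_Q^2 I_d)$, AM--GM to reach $\mathrm{tr}(\Sigma_Q)/d+\sigma_Q^2$, and cancellation of the $2\pi e\,\sigma_P^2\sigma_Q^2$ cross term; the only difference is that you apply AM--GM before combining the bounds rather than after, which is immaterial.
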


\begin{proof}
    
By applying the entropy power inequality and the maximum-entropy property of the Gaussian distribution, both of which are used in the proof of Theorem~\ref{theorem:manipulate_expand}, we can derive the following results:

\begin{align}
    \begin{split}
      &\mathbb{H}(Z')-\mathbb{H}(Z)=\frac{d}{2}\log \frac{\sigma_Q^2}{\sigma^2_P}\\
      &\mathbb{H}(Y+Z') \le  \frac{d}{2}\log(2\pi e(\det( \Sigma_Q+\sigma_Q^2I_d))^{\frac{1}{d}})\\
        &\mathbb{H}(X+Z) \ge  \frac{d}{2}\log(e^{\frac{2}{d}\mathbb{H}(X)}+2\pi e\sigma_P^2)\\
\end{split}
\end{align}

Therefore, we can derive as follows:
\begin{align}
    \begin{split}
      \Delta_{E} &=\mathbb{H}(Z')-\mathbb{H}(Z)-(\mathbb{H}(Y+Z')-\mathbb{H}(X+Z))\\
      &=\frac{d}{2}\log \frac{\sigma_Q^2}{\sigma^2_P}-(\mathbb{H}(Y+Z')-\mathbb{H}(X+Z))\\
      &\ge \frac{d}{2}\log \frac{\sigma_Q^2}{\sigma^2_P}- \frac{d}{2}\log(2\pi e(\det( \Sigma_Q+\sigma_Q^2I_d))^{\frac{1}{d}}) + \frac{d}{2}\log(e^{\frac{2}{d}\mathbb{H}(X)}+2\pi e\sigma_P^2)\\
      &= \frac{d}{2}\log \frac{\sigma_Q^2}{\sigma^2_P}- \frac{d}{2}\left(\log\frac{2\pi e(\det( \Sigma_Q+\sigma_Q^2I_d))^{\frac{1}{d}}}{e^{\frac{2}{d}\mathbb{H}(X)}+2\pi e\sigma_P^2} \right) 
    \end{split}
\end{align}

Finally, we derive the following sufficient condition for $\Delta_{E}>0$:

\begin{align}
    \begin{split}
      &\Delta_{E} =\frac{d}{2}\log \frac{\sigma_Q^2}{\sigma^2_P}- \frac{d}{2}\left(\log\frac{2\pi e(\det( \Sigma_Q+\sigma_Q^2I_d))^{\frac{1}{d}}}{e^{\frac{2}{d}\mathbb{H}(X)}+2\pi e\sigma_P^2} \right)>0 \\
      &\Rightarrow \frac{d}{2}\log \frac{\sigma_Q^2}{\sigma^2_P}>\frac{d}{2}\left(\log\frac{2\pi e(\det( \Sigma_Q+\sigma_Q^2I_d))^{\frac{1}{d}}}{e^{\frac{2}{d}\mathbb{H}(X)}+2\pi e\sigma_P^2} \right) \\
      &\Rightarrow \frac{\sigma_Q^2}{\sigma^2_P} > \frac{2\pi e(\det( \Sigma_Q+\sigma_Q^2I_d))^{\frac{1}{d}}}{e^{\frac{2}{d}\mathbb{H}(X)}+2\pi e\sigma_P^2}
    \end{split}
\end{align}

A more conservative variance ratio inequality can be derived by eliminating both $\sigma^2_P$ and $\sigma^2_Q$ from the RHS of the inequality, first we derive upper bound of determinant using AM-GM inequality:

\begin{align}
    \begin{split}
    (\det( \Sigma_Q+\sigma_Q^2I_d))^{\frac{1}{d}} &= (\Pi^d_{i=1}(\lambda_i+\sigma_Q^2))^{\frac{1}{d}}\\
    &\le\frac{1}{d}(\Sigma^d_{i=1}(\lambda_i+\sigma_Q^2))\\
    &=\frac{\text{tr}(\Sigma_Q)}{d} +\sigma^2_Q
    \end{split}
\end{align}

where $\lambda_i$ is $i$-th eigenvalue of $\Sigma_Q$. Then, we derive the following conservative sufficient condition for $\Delta_{E}>0$:

\begin{align}
    \begin{split}
        &\frac{\sigma_Q^2}{\sigma^2_P} > \frac{2\pi e(\frac{\text{tr}(\Sigma_Q)}{d} +\sigma^2_Q)}{e^{\frac{2}{d}\mathbb{H}(X)}+2\pi e\sigma_P^2}\\
        &\Rightarrow \sigma^2_Q e^{\frac{2}{d}\mathbb{H}(X)} + \sigma^2_P\sigma^2_Q 2\pi e > \sigma^2_P 2\pi e\left(\frac{\text{tr}(\Sigma_Q)}{d}\right) +\sigma^2_P\sigma^2_Q2\pi e\\
         &\Rightarrow \sigma^2_Q e^{\frac{2}{d}\mathbb{H}(X)} >\sigma^2_P 2\pi e\left(\frac{\text{tr}(\Sigma_Q)}{d}\right) \\
         &\Rightarrow \frac{\sigma^2_Q}{\sigma^2_P}  > \frac{2\pi e\left(\text{tr}(\Sigma_Q)\right)}{de^{\frac{2}{d}\mathbb{H}(X)}} 
    \end{split}
\end{align}

\end{proof}

This implies that, regardless of the entropy gap between the in-distribution and the out-of-distribution, it is always possible to find a ratio of $\sigma_Q^2$ to $\sigma_P^2$ that ensures $\Delta_{E}>0$. Additionally, Theorem \ref{theorem:incremental_monotone} shows that $\Delta_E$ is monotonically increasing in $\sigma^2_Q$, and it is monotonically decreasing in $\sigma^2_P$ with Fisher information existence assumption.

\begin{lemma}[De Bruijn's Identity, \citet{guo2005mutual}]
\label{lemma:brujin}
Let $X \in \mathbb{R}^d$ be a random vector with a well-defined density, and let 
$Z \sim \mathcal{N}(0, I_d)$ be an independent standard Gaussian random vector. 
For $t > 0$, define
\[
X_t = X + \sqrt{t}Z.
\]
Then the entropy of $X_t$ satisfies
\[
\frac{d}{dt} \mathbb{H}(X_t) \;=\; \frac{1}{2}\,\mathrm{tr}\!\left(\textbf{J}(X_t)\right),
\]
where
\[
\textbf{J}(X_t) = \mathbb{E}\!\left[ \nabla \log f_{X_t}(X_t)\,\nabla \log f_{X_t}(X_t)^{\top}\right]
\]
is the Fisher information matrix of $X_t$. At $t=0$, the identity holds only under the assumption that $\textbf{J}(X)$ exists.
\end{lemma}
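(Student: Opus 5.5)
The plan is to derive the identity from the heat equation satisfied by the density of $X_t$, followed by an integration by parts. Write $f_t = f_X * \varphi_t$, where $\varphi_t(\mathbf{x}) = (2\pi t)^{-d/2} e^{-\|\mathbf{x}\|^2/(2t)}$ is the density of $\sqrt{t}Z$. For $t>0$ the function $f_t$ is $C^\infty$ and strictly positive, since it is a convolution with a Gaussian. The Gaussian kernel satisfies $\partial_t \varphi_t = \tfrac12 \Delta \varphi_t$. Differentiating under the convolution integral therefore gives the heat equation $\partial_t f_t = \tfrac12 \Delta f_t$ for every $t>0$.

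The computation proceeds in three steps.
\begin{enumerate}
\item Write $\mathbb{H}(X_t) = -\int f_t \log f_t \, d\mathbf{x}$ and differentiate under the integral sign:
\[
\frac{d}{dt}\mathbb{H}(X_t) = -\int \partial_t f_t \,(\log f_t + 1)\, d\mathbf{x}.
\]
\item The constant term drops out because $\int \partial_t f_t \, d\mathbf{x} = \frac{d}{dt}\int f_t \, d\mathbf{x} = 0$. Substituting the heat equation leaves
\[
\frac{d}{dt}\mathbb{H}(X_t) = -\tfrac12\int \Delta f_t \,\log f_t \, d\mathbf{x}.
\]
\item Integrate by parts once, using Green's identity:
\[
-\tfrac12\int \Delta f_t \log f_t \, d\mathbf{x} = \tfrac12\int \frac{\|\nabla f_t\|^2}{f_t}\, d\mathbf{x} = \tfrac12\,\mathbb{E}\bigl[\|\nabla \log f_t(X_t)\|^2\bigr].
\]
The right-hand side is exactly $\tfrac12\,\mathrm{tr}(\mathbf{J}(X_t))$.
\end{enumerate}

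The \textbf{main obstacle} is rigor rather than algebra. Two things must be justified: the exchange of $d/dt$ with the integral in the first step, and the vanishing of the boundary terms in the third step. I would handle both with a truncation argument. Multiply by a smooth cutoff $\chi_R$ equal to $1$ on the ball of radius $R$ and supported on the ball of radius $2R$, with $\|\nabla\chi_R\| \lesssim 1/R$. Carry out the integration by parts on the truncated integrals, where it is legitimate, and then let $R\to\infty$.

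The needed integrability comes from two sources.
\begin{itemize}
\item Gaussian smoothing: for $t$ in a compact subinterval of $(0,\infty)$, the derivatives $\nabla f_t$ and $\Delta f_t$ are bounded by constant multiples of $(1+\|\mathbf{x}\|^2/t)$ times convolutions of $f_X$ with Gaussian densities of slightly larger variance.
\item Control of $|\log f_t|$: this quantity grows at most quadratically in $\|\mathbf{x}\|$, since $f_t$ is bounded above and bounded below by a Gaussian tail. I would assume $X$ has a finite second moment (as is implicit in the paper's use of covariances) to make these quadratic bounds integrable.
\end{itemize}
Dominated convergence then justifies both the differentiation under the integral sign and the limit $R\to\infty$.

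Finally, for the endpoint $t=0$ I would interpret the derivative as a right derivative. The plan is as follows.
\begin{itemize}
\item Integrate the identity already proved for $t>0$ over $[\varepsilon, t]$ and let $\varepsilon\downarrow 0$. This uses $\mathbb{H}(X_\varepsilon)\to\mathbb{H}(X)$, which follows from upper semicontinuity of entropy under a second-moment bound together with $\mathbb{H}(X_\varepsilon)\ge \mathbb{H}(X)$.
\item Assuming $\mathbf{J}(X)$ exists, the Fisher information inequality for convolutions gives $\mathrm{tr}\,\mathbf{J}(X_t)\le \mathrm{tr}\,\mathbf{J}(X)$.
\item Weak convergence $X_t\to X$ and lower semicontinuity of Fisher information give $\liminf_{t\to0}\mathrm{tr}\,\mathbf{J}(X_t)\ge \mathrm{tr}\,\mathbf{J}(X)$.
\end{itemize}
Together these yield $\mathrm{tr}\,\mathbf{J}(X_t)\to\mathrm{tr}\,\mathbf{J}(X)$. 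Dividing the integrated identity by $t$ and letting $t\to 0$ then gives the identity at $t=0$, which is why the statement requires the existence of $\mathbf{J}(X)$ there.
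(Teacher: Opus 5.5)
The paper gives no proof of this lemma; it imports it by citation from \citet{guo2005mutual}, where de Bruijn's identity appears as an equivalent form of the I-MMSE relation. That route writes $\mathbb{H}(X_t)=I(X;X_t)+\frac{d}{2}\log(2\pi e t)$ and uses the Tweedie-type identity $\mathrm{tr}\,\mathbf{J}(X_t)=\frac{d}{t}-\frac{1}{t^2}\mathbb{E}\|X-\mathbb{E}[X\mid X_t]\|^2$. With these, $\frac{d}{d\,\mathrm{snr}}I=\frac12\mathrm{mmse}$ at $\mathrm{snr}=1/t$ becomes $\frac{d}{dt}\mathbb{H}(X_t)=\frac12\mathrm{tr}\,\mathbf{J}(X_t)$.

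Your route is the classical heat-equation proof, and it is correct. You use $\partial_t f_t=\frac12\Delta f_t$, differentiate under the integral, and integrate by parts once with cutoffs. Your route is self-contained and makes every integrability requirement explicit. The estimation-theoretic route hands all the analytic care to the I-MMSE theorem, but in exchange it reads the derivative as a mean-square error.

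Your added finite-second-moment hypothesis repairs the statement rather than weakening your proof. With only ``a well-defined density'' one can have $\mathbb{H}(X)=+\infty$. Then $\mathbb{H}(X_t)\ge\mathbb{H}(X)=+\infty$ for every $t$, so the left-hand side is meaningless, while $\mathrm{tr}\,\mathbf{J}(X_t)\le d/t$ is finite.

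Your endpoint argument also supplies something the paper only asserts. You integrate over $[\varepsilon,t]$ and obtain $\mathbb{H}(X_\varepsilon)\to\mathbb{H}(X)$ from upper semicontinuity under bounded second moments together with $\mathbb{H}(X_\varepsilon)\ge\mathbb{H}(X)$. You then squeeze $\mathrm{tr}\,\mathbf{J}(X_t)$ between lower semicontinuity and $\mathrm{tr}\,\mathbf{J}(X_t)\le\mathrm{tr}\,\mathbf{J}(X)$. The paper never actually uses the endpoint case, since Theorem~\ref{theorem:incremental_monotone} invokes the identity only at $t=\sigma_P^2>0$.

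One small tightening is needed. As you phrase it, your derivative bound carries a factor $(1+\|\mathbf{x}\|^2/t)$. Multiplied by $|\log f_t|\lesssim 1+\|\mathbf{x}\|^2$, the resulting dominating function would need a fourth moment of $X$. Absorb the polynomial factors into a wider kernel instead. For $t$ in a compact $[a,b]\subset(0,\infty)$ and any $t'>b$, you have $(1+\|\mathbf{u}\|^2)\varphi_t(\mathbf{u})\le C\varphi_{t'}(\mathbf{u})$ uniformly in $t$. Hence $\|\nabla f_t\|+|\Delta f_t|\le C\,(f_X*\varphi_{t'})$ with no extra factor. Second moments then suffice for both dominated-convergence steps and for the vanishing of the cutoff boundary term.
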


\begin{lemma}[Fisher Information Inequality, \citet{rioul2010information}]
\label{lemma:stam}
Let $X$ and $Y$ be independent random vectors in $\mathbb{R}^d$ with non-zero finite Fisher information $J(X)$ and $J(Y)$. Then the Fisher information of their sum satisfies
\begin{align*}
    J(X+Y)^{-1} \ge J(X)^{-1} + J(Y)^{-1}.
\end{align*}
\end{lemma}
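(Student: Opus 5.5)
The plan is the classical Stam/Blachman argument based on the conditional-expectation representation of the score of a sum, carried out in the matrix (Loewner) order. Write $W=X+Y$ and let $\rho_X=\nabla\log f_X$, $\rho_Y=\nabla\log f_Y$, $\rho_W=\nabla\log f_W$ denote the score functions. I read ``non-zero finite Fisher information'' as $J(X)$ and $J(Y)$ being finite and positive definite, so that their inverses exist. If that reading is too strong, I would add it as an explicit assumption.

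\textbf{Step 1: score identity.} Since $f_W(w)=\int f_X(x)f_Y(w-x)\,dx$, differentiating under the integral gives
\[
\nabla f_W(w)=\int \nabla f_X(x)\,f_Y(w-x)\,dx .
\]
Dividing by $f_W(w)$ yields $\rho_W(W)=\mathbb{E}[\rho_X(X)\mid W]$. By the symmetric computation with the roles of $X$ and $Y$ swapped, also $\rho_W(W)=\mathbb{E}[\rho_Y(Y)\mid W]$. Hence for any deterministic matrices $A,B$ with $A+B=I_d$,
\[
\rho_W(W)=\mathbb{E}\!\left[A\rho_X(X)+B\rho_Y(Y)\mid W\right].
\]

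\textbf{Step 2: variance reduction.} Conditional expectation is an $L^2$ projection, so conditional Jensen applied to $vv^\top$ gives, in the Loewner order,
\[
J(W)=\mathbb{E}[\rho_W\rho_W^\top]\preceq \mathbb{E}\!\left[(A\rho_X+B\rho_Y)(A\rho_X+B\rho_Y)^\top\right].
\]
Scores have zero mean, and $X$ and $Y$ are independent, so the cross terms vanish. This leaves
\[
J(W)\preceq AJ(X)A^\top+BJ(Y)B^\top .
\]

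\textbf{Step 3: choice of $A,B$.} Let $K=(J(X)^{-1}+J(Y)^{-1})^{-1}$ and take $A=KJ(X)^{-1}$, $B=KJ(Y)^{-1}$, so that $A+B=I_d$. A direct computation gives
\[
AJ(X)A^\top+BJ(Y)B^\top=K\left(J(X)^{-1}+J(Y)^{-1}\right)K=K,
\]
so $J(W)\preceq K$. Because $J(W)\succeq 0$ is bounded above by the positive definite matrix $K$ and the score is non-degenerate, $J(W)$ is invertible. Matrix inversion is operator-antitone on positive definite matrices, so inverting gives
\[
J(W)^{-1}\succeq K^{-1}=J(X)^{-1}+J(Y)^{-1},
\]
which is the claim.

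\textbf{Main obstacle.} The hardest step is justifying Step 1 rigorously. That requires differentiation under the integral sign and integrability of $\nabla f_X(x)f_Y(w-x)$. I would handle this using the finiteness of $J(X)$, which gives $\int\|\nabla f_X\|\,dx\le \sqrt{\operatorname{tr}J(X)}<\infty$ by Cauchy--Schwarz, together with a standard approximation argument in which $X$ is first smoothed by a small Gaussian and then the smoothing is removed. Invertibility of $J(W)$ may also need a brief separate argument, or can be bypassed by writing the conclusion in the equivalent form $J(W)\preceq K$.
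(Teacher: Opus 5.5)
Your argument is correct. It is the classical Blachman--Stam conditional-expectation proof. The paper gives no proof of its own and only cites Rioul, where the same argument is run with scalar weights.

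\textbf{Main difference: matrix versus scalar $J$.} You prove the matrix (Loewner-order) inequality. The paper's $J(\cdot)$ is the scalar $\mathrm{tr}\,\mathbf{J}(\cdot)$: in the proof of Theorem~\ref{theorem:incremental_monotone} it writes $\tfrac12\mathrm{tr}(\mathbf{J}(X+Z))=\tfrac12 J(X+Z)$ and $1/J(Z)=\sigma_P^2/d$.

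Taking traces of your final matrix inequality does not give the scalar one, because $\mathrm{tr}(M^{-1})\neq(\mathrm{tr}\,M)^{-1}$. The scalar form does come straight out of your Step~2:
\begin{itemize}
\item Put $A=\lambda I_d$ and $B=(1-\lambda)I_d$, then take traces to get $J(W)\le\lambda^2J(X)+(1-\lambda)^2J(Y)$.
\item Choose $\lambda=J(Y)/(J(X)+J(Y))$ to obtain $J(W)\le J(X)J(Y)/(J(X)+J(Y))$.
\end{itemize}
Equivalently, your $K$ is the Loewner minimum of $A\mathbf{J}(X)A^\top+B\mathbf{J}(Y)B^\top$ over $A+B=I_d$, so $\mathrm{tr}\,K$ is bounded by the scalar-weight value and the matrix version implies the scalar one.

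Your matrix-weight route therefore buys a strictly stronger statement. The scalar route is what the paper actually uses. It needs only $J(X),J(Y)\in(0,\infty)$, and no matrix inversion.

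\textbf{Invertibility of $\mathbf{J}(W)$.} Your stated reason is not valid: an upper bound $\mathbf{J}(W)\preceq K$ says nothing about invertibility; you need a lower bound. What works is that any density with finite Fisher information has a positive definite Fisher matrix. If $v^\top\mathbf{J}(W)v=0$, then by Cauchy--Schwarz $\int|v\cdot\nabla f_W|\le (v^\top\mathbf{J}(W)v)^{1/2}=0$. Hence $f_W$ is invariant under translation along $v$, which is impossible for an integrable density. The same argument shows that your positive-definiteness reading of the hypothesis on $\mathbf{J}(X)$ and $\mathbf{J}(Y)$ costs nothing.

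\textbf{Step 1 regularity.} The Gaussian smoothing you mention is unnecessary. Your Cauchy--Schwarz bound gives $\nabla f_X\in L^1$, and with $f_Y\in L^1$ this already yields $\nabla(f_X*f_Y)=(\nabla f_X)*f_Y$ as weak derivatives. That is all the score identity needs. Also, $\int\nabla f_X=0$ for $W^{1,1}$ densities gives the zero-mean scores used to kill the cross terms.
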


\begin{theorem}
\label{theorem:incremental_monotone}
Let $P$, $P_\theta$, and $Q$ be $d$-dimensional continuous 
probability distributions on $\mathbb{R}^d$. Let
\[
X \sim P, \quad Y \sim Q, \quad 
Z \sim \mathcal{N}(0, \sigma_P^2 I_d), \quad 
Z' \sim \mathcal{N}(0, \sigma_Q^2 I_d),
\]
and define
\[
X+Z \sim P', \quad Y+Z' \sim Q'.
\] Assume the Fisher information of $X$ and $Y$ are non-zero finite. Then, $\frac{\partial\Delta_E}{\partial\sigma_P^2} <0$ and $\frac{\partial\Delta_E}{\partial\sigma_Q^2} > 0$.
\end{theorem}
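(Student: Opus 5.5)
The plan is to differentiate the four entropy terms in
\[
\Delta_E=\mathbb{H}(Z')-\mathbb{H}(Z)-\mathbb{H}(Y+Z')+\mathbb{H}(X+Z)
\]
separately. Note that $\sigma_P^2$ enters only through $\mathbb{H}(Z)$ and $\mathbb{H}(X+Z)$, and $\sigma_Q^2$ enters only through $\mathbb{H}(Z')$ and $\mathbb{H}(Y+Z')$. So the two claims decouple into two structurally identical computations.

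\textbf{Gaussian terms.} These are explicit. From $\mathbb{H}(Z')=\frac{d}{2}\log(2\pi e\sigma_Q^2)$ we get $\partial \mathbb{H}(Z')/\partial\sigma_Q^2 = \frac{d}{2\sigma_Q^2}$, and likewise $\partial \mathbb{H}(Z)/\partial\sigma_P^2=\frac{d}{2\sigma_P^2}$.

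\textbf{Perturbed terms.} For these I would use De Bruijn's identity (Lemma~\ref{lemma:brujin}). Write $Y+Z'=Y+\sqrt{t}\,G$ with $G\sim\mathcal{N}(0,I_d)$ and $t=\sigma_Q^2$. Then
\[
\frac{\partial \mathbb{H}(Y+Z')}{\partial\sigma_Q^2}=\tfrac12\,\mathrm{tr}\,\mathbf{J}(Y+Z').
\]
The same identity gives $\partial \mathbb{H}(X+Z)/\partial\sigma_P^2=\frac12\mathrm{tr}\,\mathbf{J}(X+Z)$. Here $t>0$, so the identity holds without extra regularity; the finiteness assumption on $\mathbf{J}(X)$ and $\mathbf{J}(Y)$ is needed only for the next step. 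Combining,
\[
\frac{\partial\Delta_E}{\partial\sigma_Q^2}=\frac{1}{2}\Big(\frac{d}{\sigma_Q^2}-\mathrm{tr}\,\mathbf{J}(Y+Z')\Big),\qquad
\frac{\partial\Delta_E}{\partial\sigma_P^2}=-\frac{1}{2}\Big(\frac{d}{\sigma_P^2}-\mathrm{tr}\,\mathbf{J}(X+Z)\Big).
\]
Both claims therefore reduce to the single strict inequality $\mathrm{tr}\,\mathbf{J}(W+\sigma G)<d/\sigma^2$, for $W\in\{X,Y\}$ independent of the Gaussian $\sigma G$.

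\textbf{Key inequality.} I would prove this with the Fisher information inequality (Lemma~\ref{lemma:stam}), applied in matrix (Loewner) form with $\mathbf{J}(\sigma G)=\sigma^{-2}I_d$:
\[
\mathbf{J}(W+\sigma G)^{-1}\succeq \mathbf{J}(W)^{-1}+\sigma^2 I_d .
\]
Interpret ``non-zero finite Fisher information'' as $\mathbf{J}(W)$ being finite and positive definite. Then $\mathbf{J}(W)^{-1}\succ 0$, so the right-hand side is $\succ\sigma^2 I_d$. Since matrix inversion is order-reversing on positive definite matrices, $\mathbf{J}(W+\sigma G)\prec\sigma^{-2}I_d$. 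Taking traces, which preserves strict Loewner order, gives $\mathrm{tr}\,\mathbf{J}(W+\sigma G)<d/\sigma^2$. Substituting this into the two displayed derivatives yields $\partial\Delta_E/\partial\sigma_Q^2>0$ and $\partial\Delta_E/\partial\sigma_P^2<0$.

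\textbf{Main obstacle.} I expect the difficulty to lie in the strictness and in the matrix-level bookkeeping rather than in the calculus. Two points need care. First, Lemma~\ref{lemma:stam} must be used in its matrix version, not its scalar version. Second, strict inequality requires $\mathbf{J}(W)^{-1}$ to be genuinely positive definite, i.e.\ $\mathbf{J}(W)$ finite in every direction. If only the scalar Stam inequality $1/J(W+\sigma G)\ge 1/J(W)+\sigma^2/d$ for the trace $J=\mathrm{tr}\,\mathbf{J}$ were available, I would fall back on it. It gives the same conclusion directly, since $1/J(W)>0$ by finiteness of the trace.
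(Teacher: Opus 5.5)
Your proposal is correct and follows essentially the paper's route: the Gaussian terms are differentiated explicitly, De Bruijn's identity gives the two perturbed-entropy derivatives, and the Fisher information inequality with $J(\sigma G)=d/\sigma^2$ yields the strict bound $\mathrm{tr}\,\mathbf{J}(W+\sigma G)<d/\sigma^2$. The only difference is that the paper works directly with the scalar (trace) Stam inequality, which is your fallback, rather than the Loewner-order matrix version, which needs the slightly stronger reading that $\mathbf{J}(W)$ is positive definite.
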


\begin{proof}
Let $X_{\sigma^2_P} = X+\sigma_PZ_1=X+Z$ and $Y_{\sigma^2_Q} = Y+\sigma_QZ_2=Y+Z'$ such that $Z_1,Z_2\sim\mathcal{N}(0,I_d)$. By Lemma \ref{lemma:brujin}, we can derive as follows:
\begin{align}
\label{eq:entropy_differentiating}
\begin{split}
    &\frac{d}{d\sigma^2_P} \mathbb{H}(X+Z) \;=\; \frac{1}{2}\,\mathrm{tr}\!\left(\textbf{J}(X+Z)\right)= \frac{1}{2}J(X+Z)\\
    &\frac{d}{d\sigma^2_Q} \mathbb{H}(Y+Z') \;=\; \frac{1}{2}\,\mathrm{tr}\!\left(\textbf{J}(Y+Z')\right)=\frac{1}{2}J(Y+Z')
\end{split}
\end{align}

where $\mathbf{J}(\cdot)$ is Fisher information matrix.

Then, we can derive as follows using Lemma \ref{lemma:stam}:

\begin{align}
\label{eq:fisher_bound}
\begin{split}
    &J(X+Z)^{-1} \ge J(X)^{-1} +J(Z)^{-1} = \frac{1}{J(Z)}\left(\frac{J(X)+J(Z)}{J(X)}\right) = \frac{\sigma_P^2}{d}\left(\frac{J(X)+J(Z)}{J(X)}\right) \\
    &J(Y+Z')^{-1} \ge J(Y)^{-1} +J(Z')^{-1} = \frac{1}{J(Z')}\left(\frac{J(Y)+J(Z')}{J(Y)}\right) = \frac{\sigma_Q^2}{d}\left(\frac{J(Y)+J(Z')}{J(Y)}\right)\\
    &\therefore J(X+Z) \le \frac{d}{\sigma_P^2}\left(\frac{J(X)}{J(X)+J(Z)}\right),\:\: J(Y+Z') \le \frac{d}{\sigma_Q^2}\left(\frac{J(Y)}{J(Y)+J(Z')}\right)
\end{split}
\end{align}

By differentiating $\Delta_E$ with respect to $\sigma_P^2$, we obtain:

\begin{align}
\begin{split}
    \frac{\partial\Delta_E}{\partial\sigma_P^2} &=\frac{\partial\left(\frac{d}{2}\log \frac{\sigma_Q^2}{\sigma^2_P}-(\mathbb{H}(Y+Z')-\mathbb{H}(X+Z))\right)}{\partial\sigma_P^2}\\
    &=-\frac{d}{2\sigma^2_{P}} +\frac{\partial(\mathbb{H}(X+Z))}{\partial\sigma^2_P} \:\:(\because \frac{\partial(\mathbb{H}(Y+Z'))}{\partial\sigma^2_P}=0)\\
    &=-\frac{d}{2\sigma^2_{P}} +\frac{J(X+Z)}{2} \:\:(\because \text{Equation \ref{eq:entropy_differentiating}})\\
    &\le-\frac{d}{2\sigma^2_{P}} +\frac{d}{2\sigma_P^2}\left(\frac{J(X)}{J(X)+J(Z)}\right) \:\:(\because \text{Equation \ref{eq:fisher_bound}})\\
    &= -\frac{d}{2\sigma_P^2}\left(\frac{J(Z)}{J(X)+J(Z)}\right)\\
    &< 0
\end{split}    
\end{align}
Similarly, by differentiating $\Delta_E$ with respect to $\sigma_Q^2$, we obtain:

\begin{align}
\begin{split}
    \frac{\partial\Delta_E}{\partial\sigma_Q^2} &=\frac{\partial\left(\frac{d}{2}\log \frac{\sigma_Q^2}{\sigma^2_P}-(\mathbb{H}(Y+Z')-\mathbb{H}(X+Z))\right)}{\partial\sigma_Q^2}\\
    &=\frac{d}{2\sigma^2_{Q}} -\frac{\partial(\mathbb{H}(Y+Z'))}{\partial\sigma^2_Q} \:\:(\because \frac{\partial(\mathbb{H}(X+Z))}{\partial\sigma^2_Q}=0)\\
    &=\frac{d}{2\sigma^2_{Q}} -\frac{J(Y+Z')}{2} \:\:(\because \text{Equation \ref{eq:entropy_differentiating}})\\
    &\ge\frac{d}{2\sigma^2_{Q}} - \frac{d}{2\sigma_Q^2}\left(\frac{J(Y)}{J(Y)+J(Z')}\right) \:\:(\because \text{Equation \ref{eq:fisher_bound}})\\
    &= \frac{d}{2\sigma_Q^2}\left(\frac{J(Z')}{J(Y)+J(Z')}\right)\\
    &>0
\end{split}    
\end{align}

\end{proof}

Therefore, by Theorems~\ref{theorem:condition_of_entropy_increment} and~\ref{theorem:incremental_monotone}, we establish that $\Delta_{E}$ not only becomes strictly positive once the noise variance ratio $\sigma_Q^2/\sigma_P^2$ exceeds a certain threshold, but also exhibits opposite monotonic behaviors with respect to the ID/OOD noise variances. Specifically, $\Delta_{E}$ exhibits a negative gradient with respect to $\sigma_P^2$ and a positive gradient with respect to $\sigma_Q^2$. Consequently, by decreasing $\sigma_P^2$ while simultaneously increasing $\sigma_Q^2$ such that their ratio surpasses the identified threshold, one can guarantee that $\Delta_{E}$, as a constituent component of $\Delta$, remains positive and strictly increases.

Since the overall increment $\Delta$ is influenced by $\Delta_{\mathrm{KL}}$, an analysis of $\Delta_{\mathrm{KL}}$ is required in order to explain the expected log-likelihood under both SPEM-noise and SPEM. However, unlike entropy, $\Delta_{\mathrm{KL}}$ is difficult to bound due to the arbitrariness of the OOD setting. Therefore, we first derive in Theorem~\ref{theorem:delta_bound} a two-sided bound on $\Delta$ under the assumption that $\log P_\theta$ has the $L$-Lipschitz property, in order to analyze the bound of the overall difference $\Delta$.

\begin{definition}[p-Wasserstein Distance]
Let $P$, $Q$ be probability measures on $\mathbb{R}^d$. Then, p-Wasserstein distance between $P$ and $Q$ is
\begin{align*}
    W_p(P,Q) := \inf_{\gamma\in\Pi(P,Q)}(\int||x-y||^pd\gamma(x,y))^{\frac{1}{p}}.
\end{align*}
\end{definition}

\begin{lemma}[Wasserstein Distance Inequality, \citet{santambrogio2015optimal}]
Let $P$, $Q$ be probability measures on $\mathbb{R}^d$ with finite $p$-th moments. Then,
\label{lemma:wasserstein_ineq}
\begin{align*}
    W_1(P,Q) \le W_p(P,Q), \:\:s.t\:\: p>1.
\end{align*}
\end{lemma}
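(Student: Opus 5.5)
The statement to prove is Lemma~\ref{lemma:wasserstein_ineq}: for probability measures $P,Q$ on $\mathbb{R}^d$ with finite $p$-th moments and $p>1$, we have $W_1(P,Q)\le W_p(P,Q)$. The argument compares the two costs coupling by coupling and then takes infima. The only real tool is Jensen's (equivalently H\"older's) inequality applied under a fixed coupling.

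\textbf{Step 1 (finiteness).} First check that both quantities are finite. For any coupling $\gamma\in\Pi(P,Q)$ we have
\[
\|x-y\|^p\le 2^{p-1}(\|x\|^p+\|y\|^p),
\]
so the finite $p$-th moment assumption gives $\int\|x-y\|^p\,d\gamma<\infty$. Hence $W_p(P,Q)<\infty$. Since $\|x-y\|\le 1+\|x-y\|^p$, the first moment of $\|x-y\|$ under $\gamma$ is also finite, so $W_1(P,Q)<\infty$ as well.

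\textbf{Step 2 (comparison under a fixed coupling).} Fix $\gamma\in\Pi(P,Q)$. The map $t\mapsto t^p$ is convex on $[0,\infty)$ for $p>1$, and $\gamma$ is a probability measure. Jensen's inequality applied to the nonnegative function $(x,y)\mapsto\|x-y\|$ gives
\[
\Big(\int\|x-y\|\,d\gamma\Big)^p\le\int\|x-y\|^p\,d\gamma.
\]
Taking $p$-th roots yields
\[
\int\|x-y\|\,d\gamma\le\Big(\int\|x-y\|^p\,d\gamma\Big)^{1/p}.
\]
Equivalently, one can use H\"older's inequality with exponents $p$ and $p/(p-1)$ against the constant function $1$, whose $\gamma$-norm equals $1$.

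\textbf{Step 3 (pass to infima).} Since $W_1(P,Q)$ is the infimum of the left-hand side over $\Pi(P,Q)$, Step~2 gives, for every $\gamma\in\Pi(P,Q)$,
\[
W_1(P,Q)\le\Big(\int\|x-y\|^p\,d\gamma\Big)^{1/p}.
\]
Now take the infimum over $\gamma$ on the right. Because $s\mapsto s^{1/p}$ is monotone increasing, the infimum can be moved inside the root, so the right-hand side becomes exactly $W_p(P,Q)$. This proves the claim.

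Nothing here is a genuine obstacle. The one point that needs care is Step~3: the inequality holds for each coupling separately, so no optimal coupling has to exist, and monotonicity of the $1/p$ power is what justifies interchanging the infimum and the root.
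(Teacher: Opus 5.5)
Your proof is correct. Applying Jensen's (equivalently H\"older's) inequality under each fixed coupling gives $\int\|x-y\|\,d\gamma \le \bigl(\int\|x-y\|^p\,d\gamma\bigr)^{1/p}$ for every $\gamma\in\Pi(P,Q)$, and taking the infimum then yields $W_1(P,Q)\le W_p(P,Q)$. The paper gives no proof of its own and only cites \citet{santambrogio2015optimal}, and the usual justification of that fact is this coupling-wise Jensen comparison. One small point: moving the infimum inside the $1/p$ power needs $s\mapsto s^{1/p}$ to be a continuous increasing bijection of $[0,\infty)$, not merely monotone, though you can avoid the issue entirely by raising to the $p$-th power before taking the infimum.
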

\begin{lemma}[Triangle Inequality of Wasserstein Distance, \citet{santambrogio2015optimal}]
\label{lemma:wasserstein_triangle}
Let $P$, $Q$, $R$ be probability measures on $\mathbb{R}^d$ with finite $p$-th moments. Then,
\begin{align*}
    W_p(P,R) \le W_p(P,Q)+W_p(Q,R) \:\: for \:\: p\ge1.
\end{align*}
    
\end{lemma}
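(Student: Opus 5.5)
The plan is the standard gluing-plus-Minkowski argument. Fix $p\ge 1$ and $\varepsilon>0$. Because $P$, $Q$, $R$ have finite $p$-th moments, the product couplings have finite cost, so $W_p(P,Q)$ and $W_p(Q,R)$ are finite. Choose $\gamma_1\in\Pi(P,Q)$ and $\gamma_2\in\Pi(Q,R)$ that are $\varepsilon$-optimal:
\[
\Bigl(\int\|x-y\|^p\,d\gamma_1\Bigr)^{1/p}\le W_p(P,Q)+\varepsilon,
\qquad
\Bigl(\int\|y-z\|^p\,d\gamma_2\Bigr)^{1/p}\le W_p(Q,R)+\varepsilon.
\]
Working with $\varepsilon$-optimal couplings avoids having to prove that optimal plans exist. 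If desired, one could instead take exact minimizers using tightness of $\Pi(P,Q)$ and lower semicontinuity of the cost.

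The key step, and the main obstacle, is the gluing lemma: constructing a probability measure $\pi$ on $\mathbb{R}^d\times\mathbb{R}^d\times\mathbb{R}^d$ whose $(x,y)$-marginal is $\gamma_1$ and whose $(y,z)$-marginal is $\gamma_2$. I would use disintegration with respect to the common middle marginal $Q$. Since $\mathbb{R}^d$ is Polish, there are measurable kernels $\gamma_1^y(dx)$ and $\gamma_2^y(dz)$ with $\gamma_1=\int \gamma_1^y\otimes\delta_y\,Q(dy)$ and $\gamma_2=\int \delta_y\otimes\gamma_2^y\,Q(dy)$. Then define
\[
\pi(dx,dy,dz)=\gamma_1^y(dx)\,\gamma_2^y(dz)\,Q(dy),
\]
which makes $x$ and $z$ conditionally independent given $y$. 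Checking the two pairwise marginals is immediate from Fubini. The $(x,z)$-marginal $\gamma_3$ of $\pi$ then lies in $\Pi(P,R)$.

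Finally, apply the pointwise triangle inequality $\|x-z\|\le\|x-y\|+\|y-z\|$ and Minkowski's inequality in $L^p(\pi)$:
\[
W_p(P,R)\le\Bigl(\int\|x-z\|^p\,d\pi\Bigr)^{1/p}\le\Bigl(\int\|x-y\|^p\,d\gamma_1\Bigr)^{1/p}+\Bigl(\int\|y-z\|^p\,d\gamma_2\Bigr)^{1/p}.
\]
The right-hand side is at most $W_p(P,Q)+W_p(Q,R)+2\varepsilon$. Letting $\varepsilon\to 0$ gives the claim. The only points needing care are the measurability of the disintegration kernels, which is guaranteed on Polish spaces, and the finiteness of all integrals, which follows from the moment assumption.
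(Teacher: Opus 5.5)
Your proof is correct and complete. You take $\varepsilon$-optimal couplings, glue $\gamma_1$ and $\gamma_2$ by disintegrating along the common marginal $Q$, and apply Minkowski's inequality in $L^p(\pi)$, which gives exactly the stated inequality. The paper gives no proof of this lemma and only cites \citet{santambrogio2015optimal}, where the standard gluing-lemma argument is essentially the one you wrote, so there is no separate route to compare.
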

\begin{lemma}[Wasserstein Distance between Gaussians, \citet{givens1984class}]
\label{lemma:wasserstein_gaussian}
Let $Z_1 \sim \mathcal{N}(0, \sigma_1^2I_d)$, $Z_2 \sim \mathcal{N}(0, \sigma_2^2I_d)$. Then,
\begin{align*}
    W_2(Z_1,Z_2)=\sqrt{d}|\sigma_1-\sigma_2|.
\end{align*}
    
\end{lemma}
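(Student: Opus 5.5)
The plan is to prove matching upper and lower bounds on $W_2(Z_1,Z_2)^2$ and show both equal $d(\sigma_1-\sigma_2)^2$. This avoids the general Gelbrich/Bures formula of \citet{givens1984class}: for isotropic centered Gaussians the optimal coupling can be written down explicitly, and optimality follows from Cauchy--Schwarz.

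\emph{Upper bound.} Let $G\sim\mathcal{N}(0,I_d)$ and use the synchronous coupling $Z_1=\sigma_1 G$, $Z_2=\sigma_2 G$. Its marginals are exactly $\mathcal{N}(0,\sigma_1^2 I_d)$ and $\mathcal{N}(0,\sigma_2^2 I_d)$, so it belongs to $\Pi(Z_1,Z_2)$. Then
\begin{align*}
\mathbb{E}\|Z_1-Z_2\|^2=(\sigma_1-\sigma_2)^2\,\mathbb{E}\|G\|^2=d(\sigma_1-\sigma_2)^2 .
\end{align*}
Taking the infimum over couplings gives $W_2(Z_1,Z_2)\le\sqrt{d}\,|\sigma_1-\sigma_2|$.

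\emph{Lower bound.} Fix an arbitrary coupling $\gamma\in\Pi(Z_1,Z_2)$ and expand the square:
\begin{align*}
\mathbb{E}_\gamma\|Z_1-Z_2\|^2=\mathbb{E}\|Z_1\|^2+\mathbb{E}\|Z_2\|^2-2\,\mathbb{E}_\gamma\langle Z_1,Z_2\rangle .
\end{align*}
The first two terms depend only on the marginals and equal $d\sigma_1^2$ and $d\sigma_2^2$. For the cross term, apply Cauchy--Schwarz pointwise and then in $L^2(\gamma)$:
\begin{align*}
\mathbb{E}_\gamma\langle Z_1,Z_2\rangle\le\bigl(\mathbb{E}\|Z_1\|^2\bigr)^{1/2}\bigl(\mathbb{E}\|Z_2\|^2\bigr)^{1/2}=d\,\sigma_1\sigma_2 ,
\end{align*}
where $\sigma_1,\sigma_2\ge 0$. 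Hence every coupling satisfies $\mathbb{E}_\gamma\|Z_1-Z_2\|^2\ge d(\sigma_1-\sigma_2)^2$. Taking the infimum over $\gamma$ gives the reverse inequality, and taking square roots yields the claim.

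The argument is essentially routine. The only steps that need care are confirming that the synchronous coupling is admissible, and the degenerate case $\sigma_i=0$. In that case one Gaussian is a point mass, the same two bounds still apply, and the formula reduces to $\sqrt{d}\,\sigma_j=\bigl(\mathbb{E}\|Z_j\|^2\bigr)^{1/2}$. Finite second moments, which $W_2$ requires, hold trivially for Gaussians.
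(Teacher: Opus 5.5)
Your proof is correct, and it takes a different route from the paper. The paper gives no argument of its own: it invokes the closed-form Gaussian $W_2$ formula of \citet{givens1984class},
\[
W_2^2\bigl(\mathcal{N}(m_1,\Sigma_1),\mathcal{N}(m_2,\Sigma_2)\bigr)=\|m_1-m_2\|^2+\mathrm{tr}\bigl(\Sigma_1+\Sigma_2-2(\Sigma_1^{1/2}\Sigma_2\Sigma_1^{1/2})^{1/2}\bigr),
\]
which for $m_i=0$ and $\Sigma_i=\sigma_i^2 I_d$ reduces to $d(\sigma_1-\sigma_2)^2$.

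You instead write down the optimal (synchronous) coupling and certify its optimality by the reverse triangle inequality in $L^2$, which is your Cauchy--Schwarz step. This is self-contained and avoids matrix square roots entirely.

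Your argument also proves more than the lemma. The lower bound $W_2(\mu,\nu)\ge\bigl|(\mathbb{E}_\mu\|\cdot\|^2)^{1/2}-(\mathbb{E}_\nu\|\cdot\|^2)^{1/2}\bigr|$ holds for arbitrary laws with finite second moments. The upper bound uses only that the two laws are rescalings of a common one. So the identity holds for any scale family $\sigma G$ with $\mathbb{E}\|G\|^2=d$ and $\sigma\ge 0$, not just Gaussians.

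What the cited formula buys in exchange is generality in the other direction: nonzero means and anisotropic or non-commuting covariances. The paper needs none of these. In fact it only uses the upper bound $W_2(Z,Z')\le\sqrt{d}\,|\sigma_Q-\sigma_P|$, in the proof of Theorem~\ref{theorem:delta_bound}, and your synchronous coupling gives that in one line.
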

\begin{lemma}[Convolution of Wasserstein Distance]
\label{lemma:convoluition}
Let $P$, $Q$, $Z$ be probability measures on $\mathbb{R}^d$ with finite $p$-th moments. Then,
\begin{align*}
    W_p(P*Z, Q*Z) \le W_p(P,Q)\:\: for \:\: p\ge1.
\end{align*}

\begin{proof}
Let $\gamma^* \in \Pi(P,Q)$ is optimal coupling of $W_p(P,Q)$ and $\Pi(P,Q)$ is coupling of $P$ and $Q$. Then,
\begin{align}
    W_p(P,Q)^p = \int||x-y||^pd\gamma^*(x,y)
\end{align}
Let $T:\mathbb{R}^d\times\mathbb{R}^d\times\mathbb{R}^d\rightarrow\mathbb{R}^d\times\mathbb{R}^d$ such that $T(\mathbf{x},\mathbf{y},\mathbf{z}) = (\mathbf{x}+\mathbf{z}, \mathbf{y}+\mathbf{z}) = (\mathbf{u},\mathbf{v})$ and $\tilde{\gamma} := T_{\#}(\gamma^*\otimes Z)$ pushforward of $\gamma^*\otimes Z$ by $T$. Since $U\sim P*Z$ and $V\sim Q*Z$, it follows that $\tilde{\gamma}\in \Pi(P*Z,Q*Z)$. Consequently, 
\begin{align}
\begin{split}
    &\int||\mathbf{u}-\mathbf{v}||^pd\tilde{\gamma} \\
    &=\int||(\mathbf{x}+\mathbf{z})-(\mathbf{y}+\mathbf{z})||^pd(\gamma^*\otimes Z)\\
    &=\int||\mathbf{x}-\mathbf{y}||^pd\gamma^*\\
    &=W_p(P,Q)^p
\end{split}
\end{align}
Taking the infimum on both sides, we can derive as follows:

\begin{align}
    \inf_{\tilde{\gamma}\in\Pi(P* Z,Q* Z)}\int||\mathbf{u}-\mathbf{v}||^pd\tilde{\gamma}  =W_p(P*Z, Q*Z)^p \le W_p(P,Q)^p
\end{align}

\end{proof}

\end{lemma}

\begin{lemma}[$L$-Lipschitz Wasserstein Distance Inequality]
\label{lemma:lipschitz}
Let $P$, $Q$ be probability measures on $\mathbb{R}^d$ with finite first moments. Let $f:\mathbb{R}^d\rightarrow \mathbb{R}$ be $L$-Lipschitz. Then,

\begin{align*}
    |\mathbb{E}_P[f]-\mathbb{E}_Q[f]|\le LW_1(P,Q)
\end{align*}
\begin{proof}
Because $f$ is $L$-Lipschitz, we can derive as follows:
\begin{align}
\begin{split}
            |\mathbb{E}_P[f]-\mathbb{E}_Q[f]|&=\left|\int f(\mathbf{x})dP(\mathbf{x})-\int f(\mathbf{y})dQ(\mathbf{y}) \right| \\
            &= \left|\int f(\mathbf{x})-f(\mathbf{y})d\gamma(\mathbf{x},\mathbf{y}) \right|\\
            &\le \int \left| f(\mathbf{x})-f(\mathbf{y})\right|d\gamma(\mathbf{x},\mathbf{y}) \\
            &\le L\int ||\mathbf{x}-\mathbf{y}||d\gamma(\mathbf{x},\mathbf{y})
\end{split}
\end{align}
where $\gamma$ is arbitrary coupling of $P$ and $Q$. Taking the infimum on both sides, we can derive as follows:
\begin{align}
\begin{split}
            |\mathbb{E}_P[f]-\mathbb{E}_Q[f]|&\le \inf_{\gamma\in\Pi(P,Q)}L\int ||\mathbf{x}-\mathbf{y}||d\gamma(\mathbf{x},\mathbf{y})\\
            &=LW_1(P,Q)
\end{split}
\end{align}
\end{proof}
\end{lemma}

\begin{theorem}
\label{theorem:delta_bound}
Let $P$, $P_\theta$, and $Q$ be $d$-dimensional continuous 
probability distributions on $\mathbb{R}^d$ and have finite first and second moments. Let
\[
X \sim P, \quad Y \sim Q, \quad 
Z \sim \mathcal{N}(0, \sigma_P^2 I_d), \quad 
Z' \sim \mathcal{N}(0, \sigma_Q^2 I_d),
\]
and define
\[
X+Z \sim P', \quad Y+Z' \sim Q'.
\]
Let $f(\mathbf{x})=\log P_\theta(\mathbf{x})$ be L-Lipschitz. Then
\[
|\Delta| \le L(W_2(P,Q) +2\sqrt{d}|\sigma_Q-\sigma_P|)
\]
\end{theorem}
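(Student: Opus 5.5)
The plan is to regroup $\Delta$ into two differences of expectations of the same function $f=\log P_\theta$. Each difference can then be controlled by Lemma~\ref{lemma:lipschitz}, and the resulting Wasserstein distances by the triangle inequality and the convolution lemma. Collecting the four expectation terms in the definition of $\Delta$ gives
\begin{align*}
\Delta = \big(\mathbb{E}_{Z}[f]-\mathbb{E}_{Z'}[f]\big) + \big(\mathbb{E}_{Q'}[f]-\mathbb{E}_{P'}[f]\big).
\end{align*}
By the triangle inequality and Lemma~\ref{lemma:lipschitz}, $|\Delta|\le L\,W_1(Z,Z') + L\,W_1(P',Q')$. Lemma~\ref{lemma:wasserstein_ineq} then upgrades both to $W_2$; all measures involved have finite second moments because $P,Q$ do and Gaussians do. Lemma~\ref{lemma:wasserstein_gaussian} gives $W_1(Z,Z')\le W_2(Z,Z')=\sqrt{d}\,|\sigma_Q-\sigma_P|$, which accounts for one of the two $\sqrt{d}|\sigma_Q-\sigma_P|$ terms.

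For the remaining term I will write $P'=P*Z$ and $Q'=Q*Z'$ and insert the intermediate measure $Q*Z$. Lemma~\ref{lemma:wasserstein_triangle} then gives
\begin{align*}
W_2(P*Z,\,Q*Z') \le W_2(P*Z,\,Q*Z) + W_2(Q*Z,\,Q*Z').
\end{align*}
The first piece is at most $W_2(P,Q)$ by Lemma~\ref{lemma:convoluition}, applied with common convolving measure $Z$.

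For the second piece, convolution is commutative, so $Q*Z=Z*Q$ and $Q*Z'=Z'*Q$. Applying Lemma~\ref{lemma:convoluition} with the roles $(P,Q,Z)\mapsto(Z,Z',Q)$ yields $W_2(Z*Q,\,Z'*Q)\le W_2(Z,Z')=\sqrt{d}\,|\sigma_Q-\sigma_P|$. Assembling the three bounds gives $|\Delta|\le L\big(W_2(P,Q)+2\sqrt{d}|\sigma_Q-\sigma_P|\big)$.

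The main point needing care is the step bounding $W_2(Q*Z,Q*Z')$. The convolution lemma as stated fixes the convolving measure and varies the other two, so it must be applied with the roles swapped, and I should check that this is legitimate. The coupling proof of Lemma~\ref{lemma:convoluition} only uses that $U=X+W$ and $V=Y+W$ share the independent summand $W$, so taking the shared summand $W\sim Q$ and coupling $Z,Z'$ optimally goes through verbatim. Beyond that, the only checks are the finite-moment hypotheses of the lemmas and the assumption that $f$ is integrable under each of $Z,Z',P',Q'$. The latter follows from Lipschitzness together with the finite first moments.
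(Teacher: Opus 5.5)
Your proposal is correct and follows essentially the same argument as the paper. It uses the same regrouping of $\Delta$ into a noise-only difference and a perturbed-data difference, the Lipschitz--$W_1$ bound on each, and the triangle inequality through the intermediate measure $Q*Z$. It then applies the convolution lemma twice (the second time with the shared summand $Q$, which the paper uses without comment) and the closed form $W_2(Z,Z')=\sqrt{d}\,|\sigma_Q-\sigma_P|$. The only difference is cosmetic: you upgrade from $W_1$ to $W_2$ before the triangle and convolution steps, whereas the paper carries them out in $W_1$ and upgrades at the end.
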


\begin{proof}
By definition of $\Delta$,
\begin{align}
\begin{split}
    \Delta &= \mathbb{E}_{\mathbf{x}\sim Z}[\log P_{\theta}(\mathbf{x})] - \mathbb{E}_{\mathbf{x}\sim Z'}[\log P_{\theta}(\mathbf{x})]- (\mathbb{E}_{\mathbf{x}\sim P'}[\log P_{\theta}(\mathbf{x})] - \mathbb{E}_{\mathbf{x}\sim Q'}[\log P_{\theta}(\mathbf{x})])
\end{split}
\end{align}

Because of triangle inequality, we can derive as follows:
\begin{align}
\begin{split}
    |\Delta| &= |\mathbb{E}_{\mathbf{x}\sim Z}[\log P_{\theta}(\mathbf{x})] - \mathbb{E}_{\mathbf{x}\sim Z'}[\log P_{\theta}(\mathbf{x})]- (\mathbb{E}_{\mathbf{x}\sim P'}[\log P_{\theta}(\mathbf{x})] - \mathbb{E}_{\mathbf{x}\sim Q'}[\log P_{\theta}(\mathbf{x})])|\\
    &\le |\mathbb{E}_{\mathbf{x}\sim Z}[\log P_{\theta}(\mathbf{x})] - \mathbb{E}_{\mathbf{x}\sim Z'}[\log P_{\theta}(\mathbf{x})]| +|\mathbb{E}_{\mathbf{x}\sim P'}[\log P_{\theta}(\mathbf{x})] - \mathbb{E}_{\mathbf{x}\sim Q'}[\log P_{\theta}(\mathbf{x})]|
\end{split}
\end{align}
From above equation, we obtain the following:

\begin{align}
\begin{split}
        |\Delta| &\le |\mathbb{E}_{\mathbf{x}\sim Z}[\log P_{\theta}(\mathbf{x})] - \mathbb{E}_{\mathbf{x}\sim Z'}[\log P_{\theta}(\mathbf{x})]| +|\mathbb{E}_{\mathbf{x}\sim P'}[\log P_{\theta}(\mathbf{x})] - \mathbb{E}_{\mathbf{x}\sim Q'}[\log P_{\theta}(\mathbf{x})]|\\
    &\le LW_1(Z,Z')+LW_1(P',Q') \:\: (\because \text{Lemma}\:\ref{lemma:lipschitz})\\
    &\le LW_1(Z,Z')+LW_1(P',Q*Z)+LW_1(Q*Z,Q')(\because \text{Lemma}\:\ref{lemma:wasserstein_triangle})\\
    &\le LW_1(Z,Z')+LW_1(P',Q*Z)+LW_1(Z,Z')(\because \text{Lemma}\:\ref{lemma:convoluition})\\
    &\le 2LW_1(Z,Z')+LW_1(P,Q)(\because \text{Lemma}\:\ref{lemma:convoluition})\\
    &\le 2LW_2(Z,Z') +LW_2(P,Q)(\because \text{Lemma}\:\ref{lemma:wasserstein_ineq})\\
    &\le L(2\sqrt{d}|\sigma_Q-\sigma_P|+W_2(P,Q)) (\because \text{Lemma}\:\ref{lemma:wasserstein_gaussian})
\end{split}
\end{align}
\end{proof}

By Theorem~\ref{theorem:delta_bound}, we have shown that when $\log P_\theta$ is $L$-Lipschitz, $\Delta$ admits a two-sided bound that consists of the Lipschitz constant, the Wasserstein distance between $P$ and $Q$, and the difference in noise intensity. Although the bound on $\Delta$ includes positive values, it does not explicitly provide conditions under which $\Delta$ becomes strictly positive. Moreover, for analytical convenience, we assumed that the log-likelihood is $L$-Lipschitz; however, this assumption deviates significantly from practical distributions. Therefore, we relax this condition and, in Theorem~\ref{theorem:semiconvex}, derive the bound under the more realistic assumptions that the negative log-likelihood is $L$-smooth and $\lambda$-semiconvex. These conditions can accommodate probability distributions that are weakly multi-modal and provide a more practical setting compared to the $L$-Lipschitz assumption.

\begin{definition}[$\lambda$-semiconvex function, \citet{payne2023primer}]
Let $f : \mathbb{R}^d \to \mathbb{R}$ be a differentiable function.  
We say that $f$ is \emph{$\lambda$-semiconvex} for some $\lambda \ge 0$ if the function

\begin{align*}
    \mathbf{x} \;\mapsto\; f(\mathbf{x}) + \frac{\lambda}{2}\|\mathbf{x}\|^2
\end{align*}
is convex. Equivalently, for all $\mathbf{x}, \mathbf{y} \in \mathbb{R}^d$,
\begin{align*}
    f(\mathbf{y}) \;\ge\; f(\mathbf{x}) + \langle \nabla f(\mathbf{x}),\, \mathbf{y}-\mathbf{x} \rangle - \frac{\lambda}{2}\|\mathbf{y}-\mathbf{x}\|^2. 
\end{align*}
\end{definition}

\begin{definition}[$L$-smooth function]
Let $f : \mathbb{R}^d \to \mathbb{R}$ be a differentiable function.  
We say that $f$ is \emph{$L$-smooth} for some $L > 0$ if, for all $\mathbf{x},\mathbf{y} \in \mathbb{R}^d$,
\begin{align*}
    \|\nabla f(\mathbf{x}) - \nabla f(\mathbf{y})\| \;\le\; L \|\mathbf{x}-\mathbf{y}\|. \label{eq:L-smooth}
\end{align*}
Equivalently, $f$ is $L$-smooth if and only if
\begin{align*}
    f(\mathbf{y}) \;\le\; f(\mathbf{x}) + \langle \nabla f(\mathbf{x}),\, \mathbf{y}-\mathbf{x} \rangle + \frac{L}{2}\|\mathbf{y}-\mathbf{x}\|^2, \quad \forall \mathbf{x},\mathbf{y} \in \mathbb{R}^d.
\end{align*}
\end{definition}

\begin{theorem}
\label{theorem:semiconvex}
Let $P$, $P_\theta$, and $Q$ be $d$-dimensional continuous 
probability distributions on $\mathbb{R}^d$ and have finite first and second moments. Let
\[
X \sim P, \quad Y \sim Q, \quad 
Z \sim \mathcal{N}(0, \sigma_P^2 I_d), \quad 
Z' \sim \mathcal{N}(0, \sigma_Q^2 I_d),
\]
and define
\[
X+Z \sim P', \quad Y+Z' \sim Q'.
\]
Let $f(\mathbf{x})=-\log P_\theta(\mathbf{x})$ be $\lambda$-semiconvex and $L$-smooth. Then
\[
\Delta \in [-C- \frac{d(\lambda+L)}{2}(\sigma_P^2+\sigma_Q^2), -C+\frac{d(\lambda+L)}{2}(\sigma_P^2+\sigma_Q^2)]
\]
where $C=\mathbb{E}_{\mathbf{x}\sim P}[\log P_{\theta}(\mathbf{x})]-\mathbb{E}_{\mathbf{x}\sim Q}[\log P_{\theta}(\mathbf{x})]$. Also, a sufficient condition for $\Delta>0$ is $-\tfrac{2C}{d(\lambda+L)} > \sigma_P^2 + \sigma_Q^2$.

\end{theorem}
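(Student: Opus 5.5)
The plan is to work directly with $f=-\log P_\theta$ and avoid the entropy/KL decomposition altogether, since $\Delta$ is just a signed sum of four expectations of $f$. By the definition of $\Delta$ in this appendix, and writing every term with $f$,
\begin{align*}
\Delta = -\mathbb{E}[f(Z)] + \mathbb{E}[f(Z')] + \mathbb{E}[f(X+Z)] - \mathbb{E}[f(Y+Z')].
\end{align*}
The idea is to compare each noisy expectation with the corresponding noiseless one via a second-order sandwich. Only the quadratic remainder survives, because the Gaussian noise has mean zero and is independent of $X$ and $Y$.

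The key pointwise estimate comes from combining the two definitions. For any fixed $\mathbf{x}\in\mathbb{R}^d$ and $\mathbf{z}$, $\lambda$-semiconvexity and $L$-smoothness give
\begin{align*}
\langle \nabla f(\mathbf{x}),\mathbf{z}\rangle - \tfrac{\lambda}{2}\|\mathbf{z}\|^2 \le f(\mathbf{x}+\mathbf{z}) - f(\mathbf{x}) \le \langle \nabla f(\mathbf{x}),\mathbf{z}\rangle + \tfrac{L}{2}\|\mathbf{z}\|^2 .
\end{align*}
Take $\mathbf{z}$ distributed as $Z\sim\mathcal{N}(0,\sigma_P^2 I_d)$. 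Then $\mathbb{E}[Z]=0$ kills the linear term and $\mathbb{E}\|Z\|^2=d\sigma_P^2$, so
\begin{align*}
\mathbb{E}[f(\mathbf{x}+Z)]-f(\mathbf{x})\in\left[-\tfrac{\lambda d\sigma_P^2}{2},\,\tfrac{Ld\sigma_P^2}{2}\right].
\end{align*}
Applying this at $\mathbf{x}=0$ gives $\mathbb{E}[f(Z)] = f(0)+s_P$. Applying it at $\mathbf{x}=X$ and then averaging over $X$ (by independence and Fubini) gives $\mathbb{E}[f(X+Z)] = \mathbb{E}[f(X)]+r_P$. Both remainders $r_P,s_P$ lie in $[-\lambda d\sigma_P^2/2,\,Ld\sigma_P^2/2]$. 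The same argument with $Z'$ gives remainders $r_Q,s_Q$ in $[-\lambda d\sigma_Q^2/2,\,Ld\sigma_Q^2/2]$.

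Substituting these expressions, the $f(0)$ terms cancel, and $\mathbb{E}[f(X)]-\mathbb{E}[f(Y)]=-C$. This leaves
\begin{align*}
\Delta = -C + (r_P - s_P) + (s_Q - r_Q).
\end{align*}
A difference of two numbers from the same interval lies in the symmetric interval $[-\tfrac{(\lambda+L)d\sigma^2}{2},\,\tfrac{(\lambda+L)d\sigma^2}{2}]$, with $\sigma$ replaced by $\sigma_P$ or $\sigma_Q$ as appropriate. Summing the two contributions yields the stated two-sided interval. The sufficient condition then follows by requiring the left endpoint to be positive: $-C > \tfrac{d(\lambda+L)}{2}(\sigma_P^2+\sigma_Q^2)$, which rearranges to the stated condition.

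The main technical point, though not deep, is justifying that all expectations are finite so that the cancellations are legitimate. Here $L$-smoothness gives the quadratic upper bound $|f(\mathbf{x})|\le a+b\|\mathbf{x}\|+\tfrac{L}{2}\|\mathbf{x}\|^2$ around $0$, and semiconvexity gives the matching lower quadratic bound. Together with the assumed finite first and second moments of $P,Q$ and of the Gaussians, this makes every term integrable, and it also lets us pass expectations through the pointwise sandwich, including the linear term $\mathbb{E}\langle\nabla f(X),Z\rangle=0$, which needs $\nabla f(X)$ integrable (true since $\nabla f$ is Lipschitz and $X$ has a finite first moment).
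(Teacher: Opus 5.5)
Your proof is correct and is essentially the paper's argument. It uses the same semiconvexity/smoothness sandwich, kills the linear term with the independent zero-mean Gaussian and $\mathbb{E}\|Z\|^2=d\sigma_P^2$, applies this once at $\mathbf{0}$ and once at $X$ (resp.\ $Y$), and combines with $\mathbb{E}[f(X)]-\mathbb{E}[f(Y)]=-C$. The differences are cosmetic. The paper groups the four remainders as the two pure-noise terms plus the two data terms rather than your $(r_P-s_P)+(s_Q-r_Q)$, which gives the identical interval. Your conditioning and integrability remarks spell out what the paper's ``same procedure'' step leaves implicit.
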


\begin{proof}

Then, by definition of $\lambda$-semiconvex and $L$-smooth, we can derive as following:
\begin{align}
\begin{split}
     &f(\mathbf{x}) + \langle \nabla f(\mathbf{x}),\, \mathbf{y}-\mathbf{x} \rangle - \frac{\lambda}{2}\|\mathbf{y}-\mathbf{x}\|^2 \le f(\mathbf{y}) \;\le\; f(\mathbf{x}) + \langle \nabla f(\mathbf{x}),\, \mathbf{y}-\mathbf{x} \rangle + \frac{L}{2}\|\mathbf{y}-\mathbf{x}\|^2\\
     &\Rightarrow  \langle \nabla f(\mathbf{x}),\, \mathbf{y}-\mathbf{x} \rangle - \frac{\lambda}{2}\|\mathbf{y}-\mathbf{x}\|^2  \le f(\mathbf{y})-f(\mathbf{x})\le \langle\nabla f(\mathbf{x}),\, \mathbf{y}-\mathbf{x} \rangle + \frac{L}{2}\|\mathbf{y}-\mathbf{x}\|^2\\
     &\Rightarrow \langle \nabla f(\mathbf{0}),\, \mathbf{y} \rangle - \frac{\lambda}{2}\|\mathbf{y}\|^2  \le f(\mathbf{y})-f(\mathbf{0})\le \langle\nabla f(\mathbf{0}),\, \mathbf{y} \rangle + \frac{L}{2}\|\mathbf{y}\|^2\\
\end{split}
\end{align}

By plugging $Z$ into $\mathbf{y}$, we can derive the following:

\begin{align}
    \begin{split}
         &\mathbb{E}[\langle \nabla f(\mathbf{0}),\, Z \rangle] - \frac{\lambda}{2}\mathbb{E}[\|Z\|^2]  \le \mathbb{E}[f(Z)]-f(\mathbf{0})\le \mathbb{E}[\langle \nabla f(\mathbf{0}),\, Z \rangle] + \frac{L}{2}\mathbb{E}[\|Z\|^2]\\
         &\Rightarrow - \frac{\lambda}{2}\mathbb{E}[\|Z\|^2]  \le \mathbb{E}[f(Z)]-f(\mathbf{0})\le  + \frac{L}{2}\mathbb{E}[\|Z\|^2]
    \end{split}
\end{align}
Then, by applying the same procedure to $Z'$ and expanding difference between the resulting inequalities, we obtain the following:
\begin{align}
\label{eq:noise_diff}
    \begin{split}
         &- \frac{\lambda}{2}\mathbb{E}[\|Z\|^2] - \frac{L}{2}\mathbb{E}[\|Z'\|^2]\le \mathbb{E}[f(Z)]- \mathbb{E}[f(Z')]\le  + \frac{L}{2}\mathbb{E}[\|Z\|^2] + \frac{\lambda}{2}\mathbb{E}[\|Z'\|^2]\\
         &\Rightarrow - \frac{\lambda d}{2}\sigma^2_P - \frac{Ld}{2}\sigma_Q^2\le \mathbb{E}[f(Z)]- \mathbb{E}[f(Z')]\le  + \frac{Ld}{2}\sigma_P^2 + \frac{\lambda d}{2}\sigma_Q^2\\
        &\Rightarrow - \frac{\lambda d}{2}\sigma^2_P - \frac{Ld}{2}\sigma_Q^2\le \mathbb{E}[-\log P_\theta (Z)]- \mathbb{E}[-\log P_\theta (Z')]\le  + \frac{Ld}{2}\sigma_P^2 + \frac{\lambda d}{2}\sigma_Q^2\\
        &\Rightarrow -\frac{L d}{2}\sigma_P^2 - \frac{\lambda d}{2}\sigma_Q^2
        \le \mathbb{E}[\log P_\theta (Z)]- \mathbb{E}[\log P_\theta (Z')]\le  +   \frac{\lambda d}{2}\sigma^2_P +\frac{L d}{2}\sigma_Q^2\\
    \end{split}
\end{align}
Meanwhile, the log-likelihood expectations under $P'$ and $Q'$ can be expressed as follows:
\begin{align}
    \begin{split}
    &\mathbb{E}_{\mathbf{x}\sim P'}[\log P_{\theta}(\mathbf{x})] = \mathbb{E}_{\mathbf{x}\sim P}[\log P_{\theta}(\mathbf{x})] +(\mathbb{E}_{\mathbf{x}\sim P'}[\log P_{\theta}(\mathbf{x})]-\mathbb{E}_{\mathbf{x}\sim P}[\log P_{\theta}(\mathbf{x})])\\
    & \mathbb{E}_{\mathbf{x}\sim Q'}[\log P_{\theta}(\mathbf{x})]  = \mathbb{E}_{\mathbf{x}\sim Q}[\log P_{\theta}(\mathbf{x})] +(\mathbb{E}_{\mathbf{x}\sim Q'}[\log P_{\theta}(\mathbf{x})]-\mathbb{E}_{\mathbf{x}\sim Q}[\log P_{\theta}(\mathbf{x})])
    \end{split}
\end{align}

Then, by applying the same procedure to $\mathbb{E}_{\mathbf{x}\sim P'}[-\log P_{\theta}(\mathbf{x})]-\mathbb{E}_{\mathbf{x}\sim P}[-\log P_{\theta}(\mathbf{x})]$ as was done to obtain the two-sided bounds for log-likelihood expectation difference between $Z$ and $Z'$, we can derive the following:

\begin{align}
    \begin{split}    
         & - \frac{\lambda}{2}\mathbb{E}[\|Z\|^2]  \le \mathbb{E}[f(X+Z)]-\mathbb{E}[f(X)]\le  + \frac{L}{2}\mathbb{E}[\|Z\|^2]\\
          &\Rightarrow - \frac{\lambda d}{2}\sigma_P^2  \le \mathbb{E}[f(X+Z)]-\mathbb{E}[f(X)]\le  + \frac{Ld}{2}\sigma_P^2
    \end{split}
\end{align}
Moreover, the following inequality holds for $\mathbb{E}_{\mathbf{x}\sim Q'}[-\log P_{\theta}(\mathbf{x})]-\mathbb{E}_{\mathbf{x}\sim Q}[-\log P_{\theta}(\mathbf{x})]$ as well
\begin{align}
    \begin{split}
    & - \frac{\lambda d}{2}\sigma_Q^2  \le \mathbb{E}[f(Y+Z')]-\mathbb{E}[f(Y)]\le  + \frac{Ld}{2}\sigma_Q^2
    \end{split}
\end{align}

Thus, we obtain the following two-sided bound:
\begin{align}
\label{eq:negative_diff_real}
    \begin{split}
    & - \frac{\lambda d}{2}\sigma_P^2 - \frac{Ld}{2}\sigma_Q^2  \le \mathbb{E}[f(X+Z)]-\mathbb{E}[f(X)]-(\mathbb{E}[f(Y+Z')]-\mathbb{E}[f(Y)])\le  + \frac{\lambda d}{2}\sigma_Q^2 + \frac{Ld}{2}\sigma_P^2
    \end{split}
\end{align}

Because Equation \ref{eq:negative_diff_real} contains a negative sign in the log-likelihood introduced by $f(\cdot)$, adding it to Equation \ref{eq:noise_diff} leads to the following inequality:

\begin{align}
    \begin{split}
    & -C- \frac{d(\lambda+L)}{2}(\sigma_P^2+\sigma_Q^2)  \le \Delta\le -C+\frac{d(\lambda+L)}{2}(\sigma_P^2+\sigma_Q^2)
    \end{split}
\end{align}

where $C=\mathbb{E}_{\mathbf{x}\sim P}[\log P_{\theta}(\mathbf{x})]-\mathbb{E}_{\mathbf{x}\sim Q}[\log P_{\theta}(\mathbf{x})]$. Additionally, $-\tfrac{2C}{d(\lambda+L)} > \sigma_P^2 + \sigma_Q^2$ is sufficient to guarantee $\Delta > 0$.

\end{proof}
By Theorem~\ref{theorem:semiconvex}, we establish the existence of a guaranteed lower bound when the negative log-likelihood satisfies the $L$-smoothness and $\lambda$-semiconvexity conditions. This existence result holds when $C$ is negative, that is, when the log-likelihood expectation of the OOD exceeds that of the in-distribution. This implies that there exist cases in which SPEM-noise, which does not directly incorporate the original data as input, attains a higher expected log-likelihood difference than SPEM. Consequently, this suggests that SPEM-noise may achieve superior OOD detection performance compared to SPEM in certain cases. For future work, it would be of interest to relax the assumption and investigate the effectiveness of SPEM-noise under weaker conditions, and to analyze the conditions under which $\Delta_{KL}$ becomes positive or takes values smaller than $\Delta_{E}$. Such analyses would further explain when and why SPEM-noise can yield improved OOD detection performance.

\subsection{Inference Time of SPEM} 
SPEM involves two auxiliary steps: extracting embedding vectors using a pretrained model and computing cosine similarity with the in-distribution embedding vectors stored in the memory bank. Let $d$ denote the feature dimension of the extractor and $n$ the number of embedding vectors in the memory bank. Then, the time complexity of this auxiliary process is $\mathcal{O}(nd)$, i.e., linear in both $n$ and $d$. For example, on CIFAR-10, the auxiliary process requires roughly 1 minute of computation about 60,000 train in-distribution images. After this step, the inference time is equivalent to that of methods relying solely on likelihood.

We also compared the computational cost of SPEM with other baselines:
\begin{itemize}
    \item The likelihood ratio method requires training a separate background density model, resulting in a substantially higher computational burden.
    \item The typicality method has a similar cost to the likelihood-only method.
    \item The complexity method computes PNG-based bit lengths for each data point, which is lightweight.
    \item The GMM method incurs moderate cost due to the additional model fitting step in low-dimensional space.
    \item The LID method requires Jacobian computation and is therefore significantly more expensive, which is why we used the reported performance from the original paper.
\end{itemize}

In summary, while SPEM introduces slightly higher computational cost than the likelihood-only, typicality, complexity, and GMM methods due to feature extraction, it remains much more efficient than the likelihood ratio and LID methods.

\end{document}